\documentclass{article}
\usepackage{iclr2027_conference,times}
\usepackage{graphicx}
\usepackage{booktabs}
\usepackage{amsmath}
\usepackage{amssymb}
\usepackage{mathtools}
\usepackage{xcolor}
\usepackage{colortbl}
\usepackage{microtype}
\usepackage{hyperref}
\usepackage{url}
\usepackage{array}
\usepackage{tabularx}
\usepackage{adjustbox}
\usepackage{amsthm}
\newtheorem{proposition}{Proposition}
\newtheorem{corollary}{Corollary}
\usepackage{multirow}
\usepackage{subcaption}
\usepackage{placeins}
\usepackage{wrapfig}
\usepackage{needspace}

\usepackage[most]{tcolorbox}
\hypersetup{
  hidelinks,
  pdfauthor={Junyu Guo, Yuchen Fang, Shangding Gu, Costas Spanos, James Demmel, Javad Lavaei},
  pdftitle={When Context Changes: Understanding Update Failures in LLMs}
}
\usepackage{enumitem}

\iclrfinalcopy

\definecolor{staleink}{HTML}{7B4FA0}
\definecolor{stalefill}{HTML}{E4D6F2}
\definecolor{currentink}{HTML}{2E7D5B}
\definecolor{currentfill}{HTML}{CFE6DA}
\definecolor{crossink}{HTML}{595959}
\definecolor{crossfill}{HTML}{EDEDED}
\definecolor{neutralink}{HTML}{46515D}
\definecolor{neutralfill}{HTML}{F1F2F3}
\definecolor{systemturnfill}{HTML}{ECEFF1}
\definecolor{systemturnink}{HTML}{59636D}
\definecolor{userturnfill}{HTML}{E8F2F8}
\definecolor{userturnink}{HTML}{326F8A}
\definecolor{assistantturnfill}{HTML}{F5F0E6}
\definecolor{assistantturnink}{HTML}{8A7A55}

\newtcolorbox{contextbox}[1]{enhanced,breakable,colback=neutralfill,colframe=neutralink,
  boxrule=0.7pt,arc=1mm,left=1.5mm,right=1.5mm,top=1mm,bottom=1mm,
  title={#1},fonttitle=\bfseries\small}
\newtcolorbox{stalebox}[1]{enhanced,breakable,colback=stalefill,colframe=staleink,
  boxrule=0.8pt,arc=1mm,left=1.5mm,right=1.5mm,top=1mm,bottom=1mm,
  title={#1},fonttitle=\bfseries\small}
\newtcolorbox{currentbox}[1]{enhanced,breakable,colback=currentfill,colframe=currentink,
  boxrule=0.8pt,arc=1mm,left=1.5mm,right=1.5mm,top=1mm,bottom=1mm,
  title={#1},fonttitle=\bfseries\small}
\newtcolorbox{crossbox}[1]{enhanced,breakable,colback=crossfill,colframe=crossink,
  boxrule=0.8pt,arc=1mm,left=1.5mm,right=1.5mm,top=1mm,bottom=1mm,
  title={#1},fonttitle=\bfseries\small}
\newtcolorbox{findingbox}[1]{enhanced,colback=white,colframe=neutralink,
  boxrule=1pt,arc=1mm,left=2mm,right=2mm,top=1.5mm,bottom=1.5mm,
  title={#1},fonttitle=\bfseries}
\newtcolorbox{systemturn}{enhanced,colback=systemturnfill,colframe=systemturnink,
  boxrule=0.5pt,arc=0.8mm,left=1.2mm,right=1.2mm,top=0.7mm,bottom=0.7mm,
  before skip=0.6mm,after skip=0.6mm,fontupper=\footnotesize}
\newtcolorbox{userturn}{enhanced,colback=userturnfill,colframe=userturnink,
  boxrule=0.5pt,arc=0.8mm,left=1.2mm,right=1.2mm,top=0.7mm,bottom=0.7mm,
  before skip=0.6mm,after skip=0.6mm,fontupper=\footnotesize}
\newtcolorbox{assistantturn}{enhanced,colback=assistantturnfill,colframe=assistantturnink,
  boxrule=0.5pt,arc=0.8mm,left=1.2mm,right=1.2mm,top=0.7mm,bottom=0.7mm,
  before skip=0.6mm,after skip=0.6mm,fontupper=\footnotesize}
\newtcolorbox{omittedturn}{enhanced,colback=white,colframe=neutralfill,
  boxrule=0.4pt,arc=0.8mm,left=1.2mm,right=1.2mm,top=0.5mm,bottom=0.5mm,
  borderline={0.4pt}{0pt}{neutralink,dashed},before skip=0.6mm,after skip=0.6mm,
  fontupper=\footnotesize\itshape,colupper=neutralink}

\newcolumntype{Y}{>{\raggedright\arraybackslash}X}

\title{When Context Changes: Understanding Update Failures in LLMs}

\author{%
\normalfont
\setlength{\tabcolsep}{2.5pt}
\begin{tabular}{@{}ccc@{}}
\textbf{Junyu Guo} & \textbf{Yuchen Fang} & \textbf{Shangding Gu} \\
{\small\fontencoding{T1}\selectfont\texttt{junyuguo24@berkeley.edu}} &
{\small\fontencoding{T1}\selectfont\texttt{yc\_fang@berkeley.edu}} &
{\small\fontencoding{T1}\selectfont\texttt{shangding.gu@berkeley.edu}} \\[1ex]
\textbf{Costas Spanos} & \textbf{James Demmel} & \textbf{Javad Lavaei} \\
{\small\fontencoding{T1}\selectfont\texttt{spanos@eecs.berkeley.edu}} &
{\small\fontencoding{T1}\selectfont\texttt{demmel@berkeley.edu}} &
{\small\fontencoding{T1}\selectfont\texttt{lavaei@berkeley.edu}} \\[1ex]
\multicolumn{3}{c}{University of California, Berkeley}
\end{tabular}%
}

\begin{document}
\raggedbottom
\maketitle
\fancyhead{}
\renewcommand{\headrulewidth}{0pt}

\begin{abstract}
As preferences, goals, and facts change, LLM agents must use the current state
while earlier versions remain in context. Yet they can answer with an old
value of the same variable, a failure that we call \textbf{stale binding}. To study when models use outdated information and why, we introduce
Controlled In-Context Memory (CICM), a benchmark for tracking and using
updated information in conversations and agent logs. We observe that even frontier
reasoning models can fail to recover the current state. We find that in open-source models probes can still recover the updated value when the model answers
with an old one, pointing to a failure to select information that remains
available. Component tests in Qwen and Pythia identify a mechanism for this
selection failure: attention drift, where attention favors old values
over the current one when producing an answer. We study a one-layer transformer to mathematically understand how this phenomenon happens: when attention scores are similar, several old values can together
receive more attention than the current value. Guided by this explanation,
we redirect attention toward the current value without further training.
When the current value is requested directly, adjusting this intervention for each input corrects
most old-value errors across various model families while preserving nearly all
initially correct answers. Reliable context management therefore requires more than remembering
updated information: models must use it to guide their answers.
\end{abstract}

\let\cicmSavedTopFraction\topfraction
\let\cicmSavedBottomFraction\bottomfraction
\let\cicmSavedTextFraction\textfraction
\newlength{\cicmSavedTextFloatSep}
\setlength{\cicmSavedTextFloatSep}{\textfloatsep}
\setlength{\textfloatsep}{12pt plus 2pt minus 2pt}
\setcounter{topnumber}{1}
\setcounter{bottomnumber}{1}
\setcounter{totalnumber}{2}
\renewcommand{\topfraction}{0.8}
\renewcommand{\bottomfraction}{0.55}
\renewcommand{\textfraction}{0.2}

\section{Introduction}

 LLM agents operate in dynamic environments where context continuously evolves. During an interaction, users revise preferences, switch objectives, and update facts that guide future actions,
such as an address or a project deadline. A planning agent must follow the
revised goal, and  a coding agent must use the current requirements. However, earlier
versions remain in the conversation or execution log alongside their updates.
Reliable context management therefore requires identifying which state is
currently valid and using it to guide the next response.

Prior work examines context management through retrieval, memory, and forgetting.
Long-context studies test  information access at different positions
\citep{liu2024lost}; conversational benchmarks evaluate remembering facts,
tracking updates, and following preferences
\citep{wu2025longmemeval,zhao2025prefeval}. ICF-Bench tests selective forgetting
of information that should no longer guide a response \citep{icfbench2026},
while interference studies show that repeated updates to the same key impair
recall \citep{wang2026unable,chattaraj2026dual}. These findings motivate a
closer diagnosis of update errors: what information does a wrong answer use,
and why does it win over the current state? We investigate this question by giving models successive updates to the same
variable and asking for its current value, with the full history available.
For example, if a meal preference changes from gluten-free to Mediterranean, answering
\emph{gluten-free} returns an old value of the right variable; answering a
music preference confuses variables. We call the former error \emph{stale
binding}. In controlled experiments, repeated updates predominantly produce
old-value errors even when equally long contexts without updates remain easy.
The problem also reaches complex operational histories: GPT-5.6 Sol and Claude
Opus 4.8 produce incorrect current-state answers after completing their
reasoning, while answering every paired current-state snapshot correctly.

We ask how these errors arise, whether the current information remains
available inside the model, and whether changing how it is selected can repair
the answer. To connect these questions, we introduce \emph{Controlled
In-Context Memory (CICM)}, a benchmark with programmatic state labels and error
categories that support systematic diagnosis beyond accuracy. Its preference
core builds on PrefEval content and supports probes and interventions. It also includes
new dialogues, decisions under changing constraints, and operational logs
extend the same update problem to richer tasks. This design allows us to connect
behavioral errors to internal measurements, a simple mathematical explanation,
and targeted interventions (Figure~\ref{fig:overview}).

\begin{figure}[tb]
  \centering
  \includegraphics[width=\textwidth]{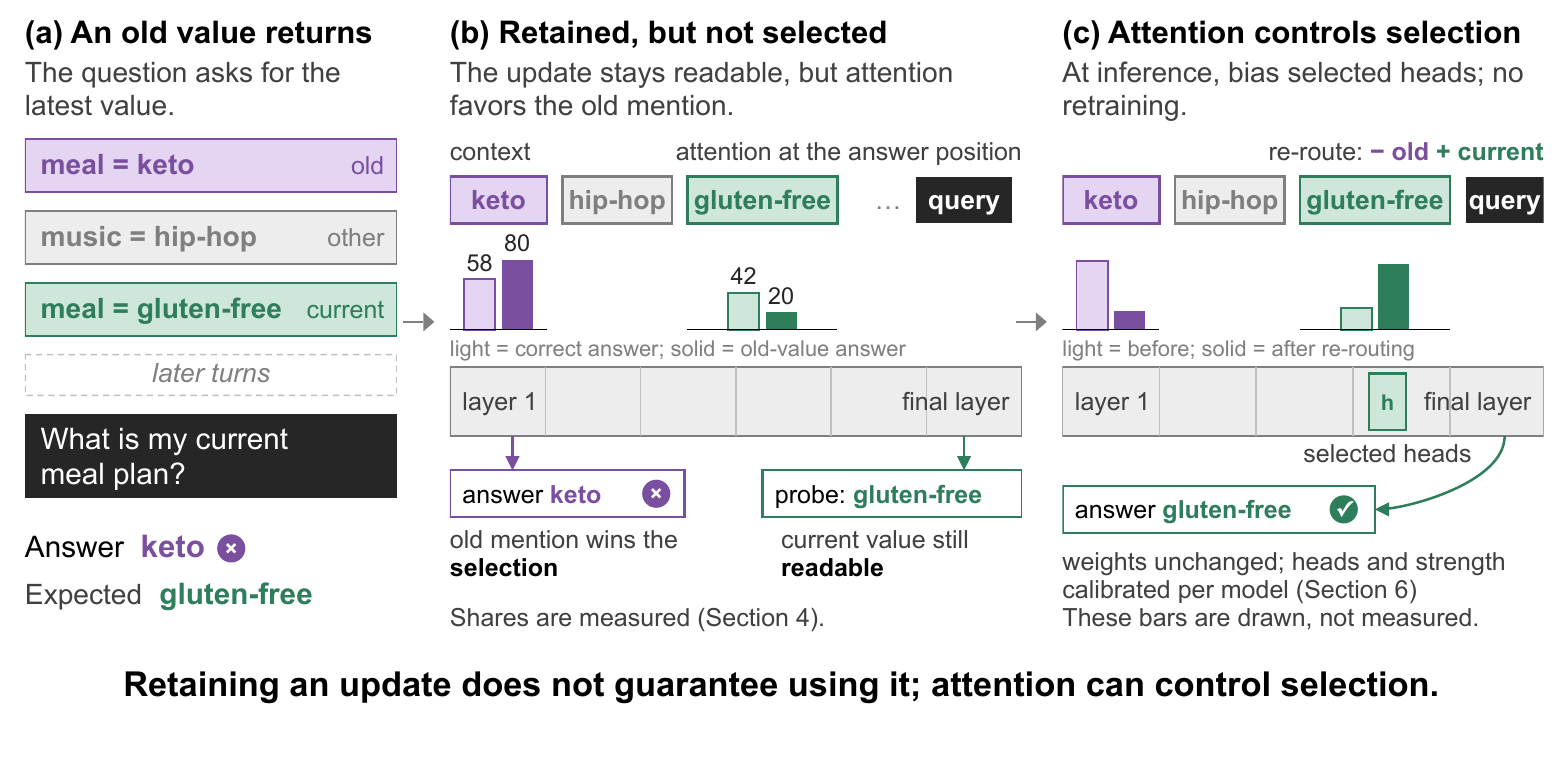}
\caption{\textbf{a--b}, An old value can win while the current value remains
readable. The bars in \textbf{b} are measured attention shares at the answer
position (Section~\ref{sec:mechanism}). \textbf{c}, Redirecting attention can
correct selection without retraining.}
\label{fig:overview}
\end{figure}

Overall, our contributions can be summarized as follows:
\begin{enumerate}[leftmargin=*,itemsep=1pt]
\item \textbf{A diagnostic benchmark for in-context updates.} CICM distinguishes
old-value reuse from other-variable confusion across dialogue and agent-log
tasks, connecting controlled mechanism tests with complex histories that
challenge frontier models.
\item \textbf{Selection failures and attention drift.} Probes recover current
values even when  old ones are chosen. Measurements and component
interventions in various model families identify \emph{attention drift}: attention
favors old values over the current one, affecting which value reaches the answer.
\item \textbf{A one-layer transformer explanation.} We show how position shapes
attention scores and how several old values can together receive more attention
than the current value, making the competition behind selection explicit.
\item \textbf{Training-free repair at test time.} Redirecting attention improves
direct current-value answers across Qwen, Llama, Mistral, and Gemma without
changing model weights. Adjusting the intervention for each input corrects
most old-value errors while preserving nearly all initially correct answers.
\end{enumerate}

\section{The Stale-Binding Phenomenon}
\label{sec:phenomenon}
\label{sec:behavioral-evidence}

A binding pairs a variable with a value. When context assigns the same
variable more than once, the latest explicit assignment gives its current
value while earlier ones remain as old values, still visible to the model.
An \emph{old-value error}, or \emph{stale binding}, returns an old value of the
queried variable instead of its current one. We first isolate this error using synthetic overwrite tasks, and then examine
richer dialogue and operational updates. Answers are classified as current,
old, another variable's value, or others. Section~\ref{sec:cicm} introduces
CICM's construction. We begin with a simple overwrite task: a target variable is repeatedly
reassigned while unrelated lines control context length. Qwen2.5-7B remains
nearly perfect without overwrites, but accuracy declines as more old values
compete with the current one at matched lengths (Figure~\ref{fig:phenomenon}a).
Among 333 failures, 331 return an old value of the queried variable; two copy
another in-context value, and none introduces an absent value. The most recent old value is overrepresented relative to a uniform
choice among old values at $k=2,4,8$ (Appendix~\ref{app:which-old}), here $k$ refers to the number of the old values. Also, a parallel-variable control shows negligible cross-variable leakage.
(Appendix~\ref{app:synthetic-controls}).

Scale postpones the failure without changing its nature. 70B-class models and
GPT-4o tolerate more revisions but still fail at $k=64$, while every matched
320-line prompt without overwrites is answered correctly
(Figure~\ref{fig:phenomenon}b): the difficulty comes from the updates, not from
the context length. The error also keeps its signature. Qwen2.5-72B and GPT-4o
return old values on every error, Llama-3.1-70B shows a more varied mixture, and
recent old values are again favored, as all seven GPT-4o errors select one of the
two most recent (Appendix~\ref{app:which-old}).
\begin{figure}[tb]
  \centering
  \includegraphics[width=\textwidth]{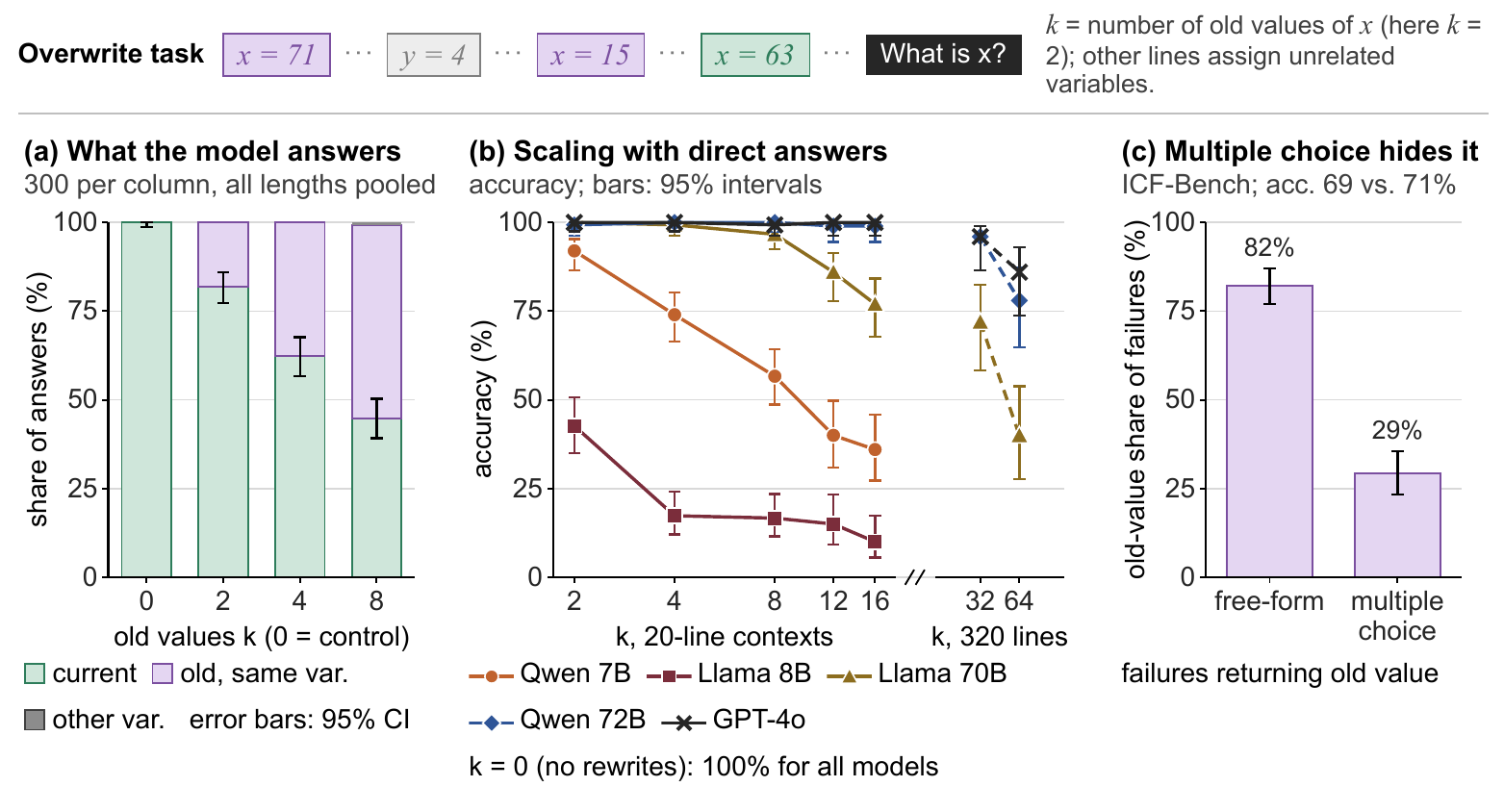}
\caption{Old values, not length alone. \textbf{a}, Repeated updates mainly
produce old-value errors; longer contexts amplify the effect.
\textbf{b}, Old-value errors persist at $k=64$ with direct answers.
\textbf{c}, Multiple choice masks old-value retrieval on ICF-Bench at similar
accuracy. Bars: 95\% intervals; sample sizes and estimation in
Appendices~\ref{app:synthetic-controls} and~\ref{sec:icf-scoring}.}
  \label{fig:phenomenon}
\end{figure}
Spending more computation at answer time does not remove it either. On paired
80-line prompts, a larger direct-answer budget alone leaves Qwen's overwrite
accuracy unchanged, whereas concise reasoning improves it; every remaining error
still returns an old value (Appendix~\ref{app:reasoning-results}). That benefit
does not survive load: at $k=256$, GPT-4o returns old values on 32 of 36 prompts
even with concise reasoning, although GPT-5 and Gemini answer all 36 correctly
(Appendix Table~\ref{tab:highload-compact}).

\paragraph{Frontier failures under complex updates.}
The error survives into frontier models once the update structure grows complex.
In 2,048-operation warehouse and build-release logs, with complete histories and
explicit rules, GPT-5.6 Sol answers 9/40 current-state queries correctly and
Opus 4.8 18/40 (18 of 31 completed responses; nine more reach the output cap).
Both answer all 40 paired snapshots correctly, so the difficulty is
reconstructing the state through the history, not reading it. In an exploratory,
pre-specified 17-scenario prefix whose operations take effect only when a later
line confirms them, Opus scores 12/17, against 16/17 on a control with the same
operations at the same line numbers but each effect stated where it happens.
Appendix~\ref{app:frontier-state-logs} gives settings, paired outcomes, and
verified examples.

Apart from the scaling results, the answer format decides whether the error is
visible. On ICF-Bench Dynamic Preference, Qwen2.5-7B reaches nearly the same
accuracy in both formats, yet 82.1\% of its free-form failures return an old
preference against 29.4\% under multiple choice
(Figure~\ref{fig:phenomenon}c): supplying options masks old-value retrieval
rather than preventing it. Accuracy alone therefore cannot reveal what a model
does with an update. These tasks need a diagnostic benchmark that labels which
value a response reused, not only whether it was correct. Motivated by this observation,  in 
Section~\ref{sec:cicm} we will introduce the diagnostic benchmark that we have built for measuring LLMs' in-context update behavior. Appendix Table~\ref{tab:discrete-summary-full} and
Appendix~\ref{sec:icf-scoring} give the comparison and its scoring.

\section{CICM: A Benchmark for In-Context Updates}
\label{sec:cicm}
\label{sec:formulation}
In this section, we propose CICM as a comprehensive and diagnostic benchmark to test whether models use the current state while earlier versions remain
in context. In this benchmark, by setting explicit update rules and programmatic ground truth, we can precisely diagnose
which information an incorrect answer reuses (Figure~\ref{fig:cicm-examples}). CICM's core components can be summarized as follows.

\paragraph{Dialogue updates.}
The core uses PrefEval content \citep{zhao2025prefeval} in 1,200 dialogues about
meal, music, and learning preferences. A generator controls values, update order,
and competing mentions; language models provide wording. Annotated positions
support probes and interventions. Another 180 pilot ledgers span six domains,
with single-value, partial-record, and full-record updates.

\paragraph{Decisions and operational state.}
In 24 procurement and scheduling scenarios, revised constraints determine a
unique optimal action. Forty primary operational histories require tracking
slot contents through 2,048 dependent handoffs over 16 slots. A deferred variant
adds operations that execute only on later confirmation. Paired snapshots
supply the resolved state while preserving the query.

\paragraph{Construction and scoring.}
Answers are fixed before evaluation and checked through span validation,
independent state replay, or action enumeration. Direct-value scoring distinguishes
current, old, other-variable, and other answers; decision scoring measures
optimality and constraint satisfaction. Response availability is recorded
separately. Appendix~\ref{app:cicm-datasheet} details construction and provenance;
Appendix~\ref{app:test-time-repair} gives mixed-answer scoring checks.

\begin{figure}[b]
\centering
\begingroup
\tcbset{cicmexample/.style={enhanced,breakable=false,colback=white,
 colframe=neutralink,boxrule=0.5pt,arc=1mm,left=2mm,right=2mm,
 top=1.5mm,bottom=1.5mm,fontupper=\footnotesize,
 fonttitle=\bfseries\small,colbacktitle=neutralfill,coltitle=black,
 equal height group=cicm-main,before skip=0pt,after skip=0pt}}
\begin{minipage}[t]{0.49\linewidth}
\begin{tcolorbox}[cicmexample,title={(a) Updating a preference}]
\textbf{Dialogue excerpts}\par\smallskip
``I think \textcolor{staleink}{\textbf{visual}} works best for me overall.''\par\smallskip
``I actually prefer \textcolor{currentink}{\textbf{reading}} when I'm trying to understand something new.''\par\smallskip
\textcolor{neutralink}{\emph{[Other turns and preference updates omitted.]}}\par\smallskip
``For the record, \textcolor{staleink}{\textbf{visual}} was an earlier choice for my learning style, and is no longer current.''\par\medskip
\textbf{Query (summary)}\par
What is my latest explicitly updated learning style? Later mentions are not updates.\par\medskip
\textbf{Qwen2.5-7B:} \textcolor{staleink}{\textbf{visual}}\par
\textbf{Current answer:} \textcolor{currentink}{\textbf{reading}}
\end{tcolorbox}
\end{minipage}\hfill
\begin{minipage}[t]{0.49\linewidth}
\begin{tcolorbox}[cicmexample,title={(b) Tracking a deferred handoff}]
\textbf{Agent-log excerpts}\par\smallskip
\texttt{0876 handoff(}\par
\texttt{\quad cycle=[S08,S09,S14,S12]);}\par
\texttt{branch=production; status=pending}\par\smallskip
\textcolor{neutralink}{\emph{[Intervening log entries omitted.]}}\par\smallskip
\texttt{1007 resolve(seq=0876);}\par
\texttt{result=committed}\par\smallskip
\textcolor{neutralink}{\emph{[Remaining log entries omitted.]}}\par\medskip
\textbf{Query (summary)}\par
Which parcel is currently in S14?\par\medskip
\textbf{Opus 4.8:} \textcolor{staleink}{\textbf{P7888}}\par
\textbf{Current answer:} \textcolor{currentink}{\textbf{P7238}}
\end{tcolorbox}
\end{minipage}
\par\vspace{2mm}
\begin{tcolorbox}[enhanced,breakable=false,colback=currentfill,colframe=currentink,
boxrule=0.5pt,arc=1mm,left=2mm,right=2mm,top=1mm,bottom=1mm,
fontupper=\footnotesize,before skip=0pt,after skip=0pt]
\textbf{Programmatic state check for (b).} A pending handoff executes only when
committed by a later resolve line. At line 1007, S09 holds P7238; executing
the cycle moves it to S14. No later executed operation changes S14.
This annotation explains the answer; it was not supplied to the model.
\end{tcolorbox}
\endgroup
\caption{CICM examples.
Full cases: Appendices~\ref{app:failure-examples} and~\ref{app:frontier-state-logs}.}
\label{fig:cicm-examples}
\end{figure}

\subsection{Dialogue validity and decision outcomes}
\label{sec:cicm-validity}

The original preference core adds an ambiguous old-value re-mention in its
near condition. Paired follow-ups clarify wording and isolate placement:
with identical messages and mention counts, moving an explicitly historical
mention nearer the query raises Qwen's old-value rate from 14.8\% to 36.7\%;
GPT-4o returns no old values in either condition
(Appendix~\ref{app:cicm-validity-results}). On the decision component, four
reasoning models return optimal actions on 88--95 of 96 full-history prompts;
remaining outcomes are invalid outputs, provider failures, or truncations
(Appendix~\ref{app:decision-transfer}).

\section{Mechanism Analysis}
\label{sec:mechanism}

We ask what the model selects, whether the current value is still available,
where attention goes, which components decide, and whether selection can be
overridden. Attention drift appears in five models under matched controls, and
the tests that open the model up run in Pythia-160M, Qwen2.5-7B and
Llama-3.1-8B. Appendix Table~\ref{tab:mechanism-evidence-map} records which
model supports each claim.

\subsection{Identity and recency compete}
Identity outweighs recency: an old value of the queried variable dominates
responses even when another variable was updated more recently. We test this on
CICM's preference dialogues with Qwen2.5-7B, querying the current value of one
variable. The factorial holds the current assignment far from the query and
crosses a later re-mention of the most recent old value (present or absent) with
another variable's distance (far, intermediate, or two turns), 200 dialogues per
condition (Figure~\ref{fig:factorial}).

\begin{figure}[tb]
  \centering
  \includegraphics[width=\textwidth]{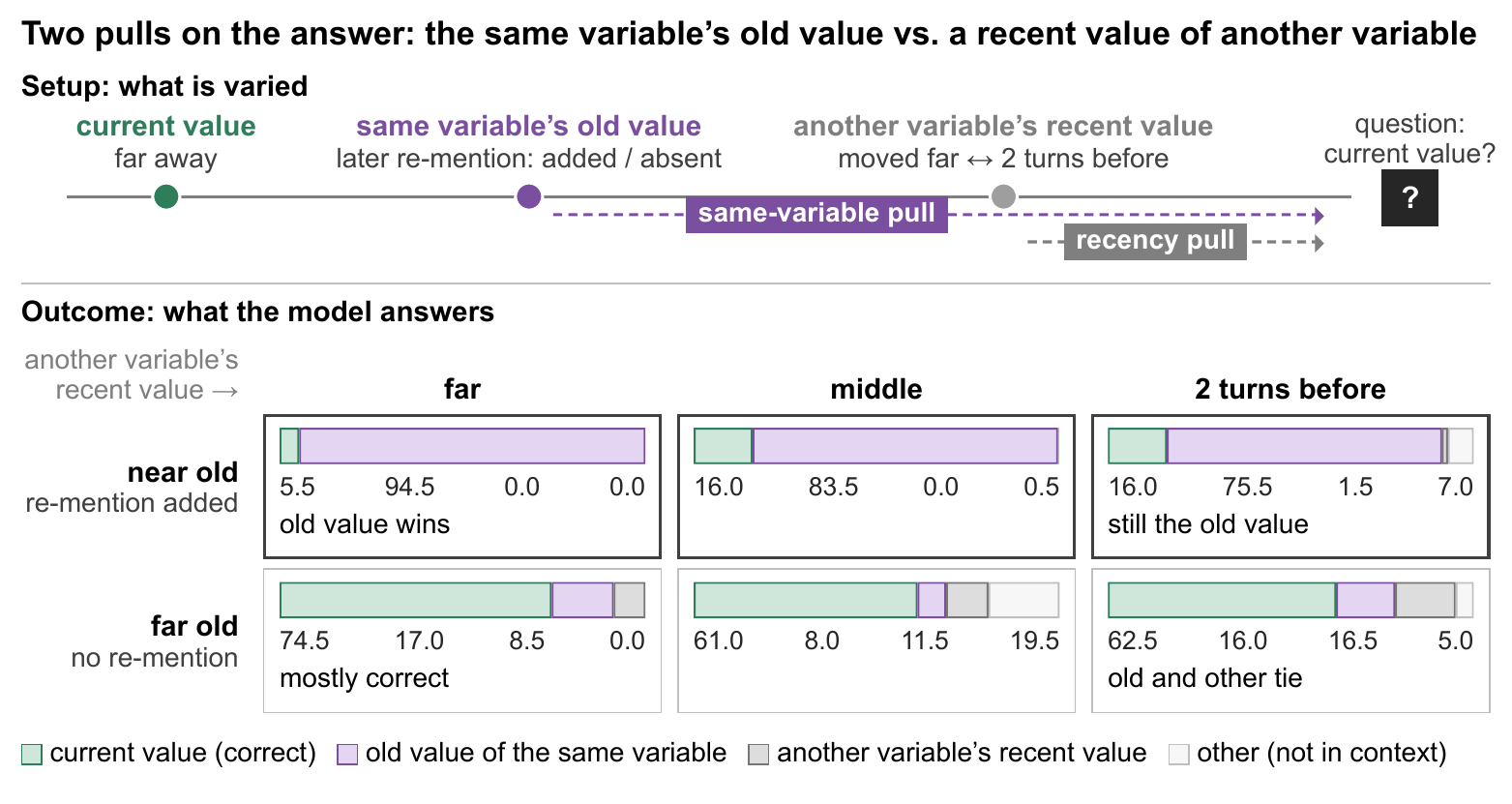}
  \caption{Competing mentions in CICM.
Numbers are response percentages.}
  \label{fig:factorial}
\end{figure}

A nearby re-mention of the old value decides the answer. When the old value is
re-mentioned near the query, old-value errors dominate (75.5--94.5\%) even
though another variable is more recent. Without that re-mention, bringing the
other variable close makes old-value and other-variable errors about equally
frequent (16.0\% and 16.5\%). Recency competes only when the old value is not
repeated nearby. The re-mention also decides which old value comes back. For $k=2,3,4,6$, the
re-mentioned value accounts for 94.5--100\% of old-value answers in both Qwen
and Llama. Without it, the most recent old value accounts for 66.7--100\% in
Qwen and 44.2--61.1\% in Llama (Appendix~\ref{app:which-old}). A re-mention can
be read as a renewed preference rather than a repetition. Marking it explicitly
historical lowers Qwen's near-source old-value rate to 25.5\% without
eliminating the errors (Appendix~\ref{app:cicm-validity-results}). The original
contrast therefore combines wording and mention count with position, and the
matched-placement test in Section~\ref{sec:cicm-validity} isolates position.

\subsection{The current value remains readable}
When a model returns an old value, one possible explanation is that it
has lost the current value from its internal representation.
However, a linear probe can still recover information about the current
value from the model's hidden state, even when its answer is incorrect. On old-value errors, a linear probe
assigns the current value a mean probability of 84.8\% in Qwen and 82.2\% in
Llama, against 3.2--6.4\% when the value labels are randomly reassigned
(Figure~\ref{fig:qk-attention}a). Other-variable errors show similar
recoverability (Appendix Table~\ref{tab:retention-attention}). The probe reads the standardized final-layer hidden state $h_i$ at the answer
position in dialogue $i$. With learned weights $W$ and bias $b$, denoted
$\phi=(W,b)$, it assigns each of the 21 candidate values $v$ the probability
$q_\phi(v\mid h_i)=\operatorname{softmax}(Wh_i+b)_v$. We fit and evaluate it in
five folds split by dialogue, so each reported score comes from a probe that did
not train on that dialogue. Randomly reassigning the value labels checks whether
the scores depend on the true relationship between hidden states and values
\citep{hewitt2019probes}. These are probe probabilities, not answer accuracies.

Readability does fall on failures. After accounting for prompt length, scores on
old-value errors are about 7\% below those on correct
answers in both models. The current value therefore stays accessible to a linear
classifier without being used, which is not yet evidence that the model consults
it. The next tests examine that selection step, and
Appendix~\ref{app:probe-method} gives the probe details.

\subsection{Old values take priority in attention}

Attention drift is not specific to one model. We use \emph{attention drift} for
attention giving greater priority to old values than to the current value.
Comparing each updated dialogue with one of equal token length but no
conflicting reassignment, conflicting updates shift attention and query--key
matching toward old values in five models with reliable controls, including
Llama and answers that are correct (Appendix~\ref{app:stage-p}). Redirecting
that attention changes which value is answered in Qwen, Llama and Mistral
(Section~\ref{sec:test-time-repair}), so the shift is not incidental to the
answer. In Qwen, the same measure separates failures from correct answers. For each head
at the answer position, let $A_{\mathrm{old}}$ and $A_{\mathrm{current}}$ be the
attention masses summed over old and current assignments, and define the
old-value share as
\begin{equation}
\rho_{\mathrm{old}}=
\frac{A_{\mathrm{old}}}{A_{\mathrm{old}}+A_{\mathrm{current}}}.
\label{eq:old-attention-share}
\end{equation}
Averaged across heads and adjusted for prompt length, the share is 19.4
percentage points higher on failures, with the largest shift in late layers.
The matched-context shift and the failure-associated shift measure different
contrasts.

\begin{figure}[tb]
  \centering
  \includegraphics[width=\textwidth]{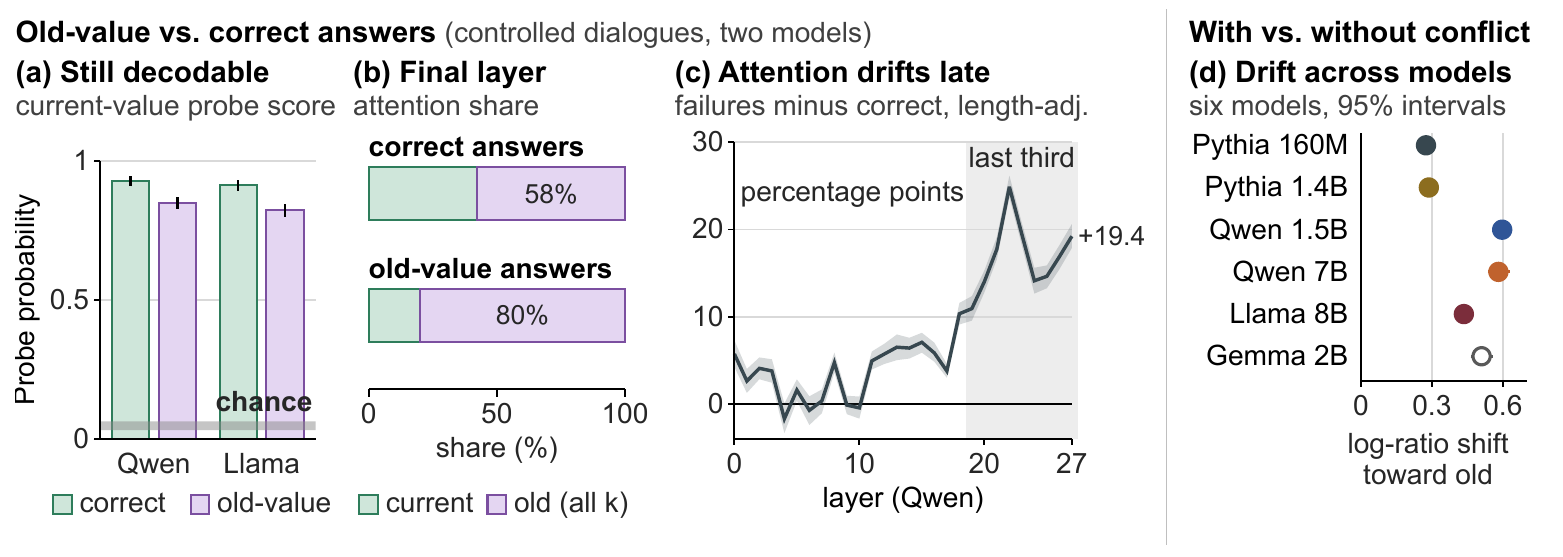}
  \caption{Attention drift: old values take priority.
\textbf{a}, Mean probe probability assigned to the current value
(grey: shuffled labels). \textbf{b}, Qwen's final-layer attention favors old values on failures.
\textbf{c}, In Qwen the length-adjusted failure--correct difference grows in
late layers. \textbf{d}, Matched conflict--control shifts.}
\label{fig:qk-attention}
\end{figure}

\subsection{Which components decide the answer}

A shift in attention on failures is still a correlation. By replacing components we can causally test whether they produce the answer.

Attention selects positions through query--key matching and transmits their
information through value vectors and the output projection
\citep{vaswani2017attention}. For 120 failed synthetic overwrite prompts, we
construct successful counterparts with identical candidate values and positions,
renaming earlier assignments to remove competition for the queried variable
(Appendix~\ref{app:component-replacements}). We then replace components in
failed runs with their successful counterparts (Qwen rows in
Table~\ref{tab:mechanism-recovery}). Recovery concentrates in late hidden states
and old-value keys, implicating late selection and competing keys. For the
current-minus-old answer-score gap $g$, we measure recovery after replacement as
\begin{equation}
R=100\,\frac{g_{\mathrm{replaced}}-g_{\mathrm{failed}}}
{g_{\mathrm{successful}}-g_{\mathrm{failed}}}.
\label{eq:component-recovery}
\end{equation}
Here $R=100$ restores the successful gap, and $R<0$ favors the old value further.

\begin{wraptable}{r}{0.51\textwidth}
\centering
\small
\setlength{\tabcolsep}{3pt}
\caption{Causal evidence in Qwen-7B and Pythia-160M. Each row specifies its comparison.}
\label{tab:mechanism-recovery}
\begin{tabularx}{\linewidth}{@{}Xr@{}}
\toprule
Intervention comparison & Result \\
\midrule
\multicolumn{2}{@{}l}{\textit{Answer-score-gap recovery (\%)}} \\
Qwen: late / early--middle states & $78.1\;/\;-0.5$ \\
Qwen: old / current keys & $49.2\;/\;-21.3$ \\
Pythia key inputs: targeted / random & $75\;/\;6$ \\
\addlinespace
\multicolumn{2}{@{}l}{\textit{Old-value errors corrected (\%)}} \\
Pythia head removal: targeted / random & $38.4\;/\;8.7$ \\
\bottomrule
\end{tabularx}
\par\smallskip
{\footnotesize\raggedright Pythia comparisons are held out, and random sets are matched by layer.
Full controls: Appendices~\ref{app:component-replacements} and~\ref{app:stage-n-details}.\par}
\end{wraptable}
Replacement narrows the error to a pathway rather than to individual heads.
Naming them takes exhaustive intervention: removing each head in turn, replacing
the inputs to its key and query, and matching every result against
layer-matched random controls. Pythia-160M produces the same old-value answers
and is small enough to carry that program through. Removing the heads selected
to promote old values corrects 38.4\% of held-out old-value answers, against
8.7\% for layer-matched random sets, with discovery and evaluation on separate
examples. The recovery concentrates in the query--key path, where matching
favors old assignments while the value pathway keeps transmitting whatever is
attended. Replacing selected inputs to a single head's key recovers 75\% of the
answer-score gap, against 6\% for layer-matched random inputs
(Table~\ref{tab:mechanism-recovery}). As in the larger models, the current value
remains readable on these failures. Appendix~\ref{app:stage-n-details} gives the
full experimental results.

\subsection{Selection is decided late}

Locating the components is not the same as being able to change what they do,
and when the answer is settled matters for both. Repairing the update once does
not secure it. In Pythia-160M, replacing the update's attention key with the
counterpart from a successful run corrects 92.8\% of failures when the question
follows the update immediately, and 8.7\% of the same failures when the
remaining turns, including a later mention of the old value, run first.
Selection must therefore be maintained through the context that follows, not
only established at the update (Appendix~\ref{app:stage-p}).

\textbf{A late change does work}. Adding the difference between the probe weights for the
current and competing values at the answer position can favor the current value
even when the model initially answers incorrectly, and we compare this with an
equally large random change (Equation~\ref{eq:steer}). Its advantage grows with
strength at the final layer, and the same test near the middle has much smaller
effects. At relative strength $\alpha=0.4$, targeted steering reduces
persistence of the original error category by 39.5\% more than
random steering in Qwen and 69.5\% in Llama (Appendix
Figure~\ref{fig:causality}). Because it uses the known current value at the
final state, this is a diagnostic intervention: it can directly favor the answer
without locating the original error. Selection stays open to change at the end
of the computation, which is where Section~\ref{sec:test-time-repair}
intervenes.

\section{A theoretical explanation of old-value selection}
\label{sec:theory}

Our experiments show that models can return an old value even when
the current value remains recoverable from their hidden states.
We provide a theoretical explanation using a one-layer Transformer
with a simplified copying mechanism: both values remain available,
and the probability of copying each value equals the attention weight
on its position. This setting lets us examine how selection can fail
without information being lost.

Consider a current value $v_c$ and an old value $v_o$, with attention
scores $s_c,s_o$ and distances $d_c<d_o$ from the query.
Their selection probabilities $P(\cdot)$ depend on the difference between
their attention scores. For a shared content key with rotary position
encoding \citep{su2024roformer}, this relationship becomes
\begin{equation}
 \log\frac{P(v_c)}{P(v_o)}
 =\frac{s_c-s_o}{T}
 =\frac{2}{T}\sum_b w_b\sin(m\theta_b+\psi_b)
   \sin\!\left(\frac{\delta\theta_b}{2}\right),
 \label{eq:main_pairwise}
\end{equation}
where $T>0$ is the softmax temperature,
$\delta=d_o-d_c$, $m=(d_c+d_o)/2$, and
$w_b,\theta_b,\psi_b$ denote the amplitude, frequency, and phase
of rotary band $b$.

Equation~\ref{eq:main_pairwise} shows why retaining the current value
does not guarantee that it will be selected. When the attention-score
difference $s_c-s_o$ is small relative to $T$, $P(v_c)/P(v_o)$ is close to one, meaning an outdated value is nearly as likely to be selected as the current one.Moreover, because the right-hand side of Equation~\ref{eq:main_pairwise} is not strictly positive, a more recent positional placement alone cannot guarantee a higher attention score \citep{du2026rope}. This dynamic provides a clear mathematical explanation for why stale-binding errors occur even when the current information remains perfectly recoverable in our probing experiments. (Detailed derivations are provided in Appendix~\ref{app:theory-core}, and we offer a measured pairwise example in Appendix~\ref{app:pairwise-example}).

The presence of multiple old values further exacerbates this competition. Given $k$ old values, the ratio of the probability of selecting
the current value to that of selecting any old value is
$e^{s_c/T}/\sum_{j=1}^{k}e^{s_j/T}$.
If we hold existing scores constant, injecting an additional old value strictly decreases this ratio, despite the current value's score remaining entirely unchanged. This accumulation illustrates how successive updates mathematically dilute the likelihood of selecting the current state without ever erasing the underlying information. The central implication is that information retention and attention-based selection are fundamentally distinct requirements: accumulated old values remain formidable competitors for retrieval even when the correct current state is fully available.
\section{Controlling selection at test time}
\label{sec:test-time-repair}

We test whether redirecting attention from old values to the current value
can correct the model's answer without changing its weights
\citep{zhang2024pasta}. This intervention improves current-value retrieval
on controlled preference dialogues (Table~\ref{tab:test-time-repair}).
For comparison, we also append the current value identified by the same
parser as an explicit reminder. 
At inference time, a rule-based parser uses the dialogue's predefined
variables, values, and update patterns to identify the current value
and its earlier alternatives. Because the parser already identifies
the correct value, this experiment tests whether redirecting attention
can make the model use it. We adjust selected attention heads at the final prompt position.
Let $C$ and $O$ contain the token positions of the queried variable's
current and old values, respectively. For each selected head, we add
a bias $\beta \geq 0$ to the attention scores at current-value positions
and subtract it at old-value positions:
\begin{equation}
s'_j = s_j + \beta\,\mathbf{1}[j \in C]
           - \beta\,\mathbf{1}[j \in O],
\qquad
a_j = \frac{e^{s_j}}{\sum_{\ell} e^{s_{\ell}}},
\quad
a'_j = \frac{e^{s'_j}}{\sum_{\ell} e^{s'_{\ell}}}.
\label{eq:attention-routing}
\end{equation}
Here $s_j$ denotes the original attention score, while $a_j$ and $a'_j$
are the attention weights before and after routing, normalized over
all prompt positions. The bias $\beta$ shifts attention toward the
current value and away from old values. We consider two forms of routing: a fixed bias calibrated in advance,
and an adaptive bias that adjusts to each input to reach a target
current-to-old attention ratio, subject to a cap. Heads and routing
settings are selected before testing, without using test labels
(Appendix~\ref{app:test-time-repair}).
\begin{table}[htbp]
\centering
\small
\setlength{\tabcolsep}{5pt}
\caption{ Accuracy and preservation are percentages; old-error correction is shown as counts.}
\label{tab:test-time-repair}
\begin{tabular}{@{}lrrrrr@{}}
\toprule
& \multicolumn{3}{c}{Accuracy $\uparrow$} & Old errors & Preserved \\
\cmidrule(lr){2-4}
Model & Original & Routing & Reminder & corrected $\uparrow$ & $\uparrow$ \\
\midrule
\multicolumn{6}{@{}l}{\textit{Adaptive attention bias}} \\
\rowcolor{neutralfill} Qwen2.5-3B & 16.46 & 95.21 & \textbf{99.27} & 678/711 & 98.73 \\
\rowcolor{neutralfill} Llama-3.2-3B & 27.71 & \textbf{93.12} & 68.85 & 514/541 & 100.00 \\
\rowcolor{neutralfill} Llama-3.1-8B & 41.71 & \textbf{88.63} & 65.38 & 395/467 & 100.00 \\
\rowcolor{neutralfill} Mistral-7B & 5.42 & 81.35 & \textbf{98.44} & 564/674 & 98.08 \\
\rowcolor{neutralfill} Gemma-2-9B & 47.08 & \textbf{91.88} & 82.81 & 407/455 & 100.00 \\
\addlinespace[2pt]
\multicolumn{6}{@{}l}{\textit{Fixed attention bias}} \\
Qwen2.5-7B & 37.60 & 68.85 & \textbf{88.02} & 208/487 & 100.00 \\
Qwen2.5-14B & 40.52 & 41.67 & \textbf{98.54} & 13/491 & 98.46 \\
\bottomrule
\end{tabular}
\par\smallskip
\parbox{\linewidth}{\footnotesize Reminder adds the parsed current value before the query.
Old errors corrected shows corrected/initial old-value errors;
Preserved shows the percentage of initially correct answers retained.}
\end{table}

Adaptive routing improves current-value retrieval while largely preserving
answers that were already correct. Across the five adaptive configurations,
absolute accuracy gains range from 44.79 to 78.75\%, with 98.08 to 100\% of
initially correct answers preserved (Table~\ref{tab:test-time-repair}).
These improvements depend on which positions and heads are targeted:
all five gains exceed the observed ranges of random-head and
random-position controls, while reversing the routing reduces accuracy
(Appendix Table~\ref{tab:repair-controls}). Routing also outperforms an explicit current-value reminder in both
Llama models and Gemma, although reminders remain stronger in Qwen
and Mistral. Together, these results show that directing attention
toward the current value can substantially improve answer selection.

\section{Related Work}

\paragraph{Updating information in context.}
Memory benchmarks test knowledge and preference updates in conversation
\citep{wu2025longmemeval,zhao2025prefeval,jiang2025personamem} and agent-maintained
memory \citep{cheng2026amemgym,uddin2026memora,xie2026dynamicmem,patel2026supersede}.
Earlier dialogue systems remove invalidated memories \citep{bae2022keep}.
In-context editing \citep{zheng2023ike} and learned context-dependent unlearning
\citep{takashiro2025unlearning} control factual responses. ICF-Bench evaluates
superseded instructions and preferences \citep{icfbench2026}.

\paragraph{Interference and knowledge conflicts.}
PI-LLM establishes declining recall under repeated key--value updates
\citep{wang2026unable}.
Our overwrite errors favor recent old assignments, where
\citet{chattaraj2026dual} report primacy intrusion under a different protocol.
Other work studies cache suppression and quantization
\citep{xie2026sleepgate,shahrabi2026compress}.
Position effects are studied by \citet{liu2024lost,hsieh2024found}.
Knowledge-conflict studies contrast context with parametric memory
\citep{xu2024knowledgeconflicts,xie2024chameleon}, while context-aware decoding
amplifies contextual evidence \citep{shi2024cad}.

\paragraph{Binding and selection mechanisms.}
Entity-tracking tasks \citep{kim-schuster-2023-entity,li2025statetracking,wu2025variablebinding}
and circuit analyses \citep{prakash2024finetuning,feng2024binding,gurarieh2026mixing}
examine state and binding representations. \citet{tang2026entitystate} find
parallel aggregation of state-changing operations at the query position.
We instead ask why an updated value remains recoverable when a superseded
value is selected. \citet{oh2026rebinding} identify retrieval-conditioned
rebinding circuits. Retrieval heads identify copying routes
\citep{wu2025retrievalheads}. We study competition among successive
assignments of the same variable. Our Qwen comparison gives distinct, partly
overlapping rankings for single-write copying and attention to the latest
assignment (Appendix~\ref{app:retrieval-head-comparison}).

\paragraph{Test-time intervention.}
Activation steering changes outputs through learned directions
\citep{turner2023activation,li2023iti}. PASTA reweights selected attention heads
toward supplied emphasis spans \citep{zhang2024pasta}. Our routing derives
current and superseded spans from dialogue updates, then calibrates heads
and intervention strength to redirect selection toward the current value.
This tests causal control on direct questions, without changing model weights.

\section{Limitations}

Our controlled dialogues and constructed operational logs do not estimate
failure prevalence in deployed agents. Internal measurements only cover open-source  models,
with different patterns across comparisons. Probes establish
recoverability, while known-answer steering establishes output control.  Routing requires
internal access, labeled calibration, and a structured parser.

\section{Conclusion}

Reliable context management requires both retaining updates and using the
current state. CICM brings dialogue updates, revised-constraint decisions, and
complex operational histories into a common framework for diagnosing update
failures. Our open-model experiments show that the current value can remain
recoverable even when the answer selects an old one.
Probes reveal a selection bottleneck, and component interventions in Qwen and
Pythia identify how attention drift toward old values contributes to it.
A simple one-layer model explains how old values can collectively receive more
attention than the current value. Training-free attention routing improves
direct answers, correcting most old-value errors with input-adjusted
interventions across Qwen, Llama, Mistral, and Gemma. The challenge also reaches frontier reasoning models: GPT-5.6 Sol and Claude
Opus 4.8 produce incorrect current-state answers on complex histories even
when their reasoning completes.
Access to the full history and extensive reasoning therefore do not guarantee correct use of an evolving state. Together, our benchmark, mechanism analysis, and interventions make reliable update selection a concrete target for
building and evaluating LLM agents.

\FloatBarrier
\section*{Reproducibility statement}
The appendices document the theoretical derivations, task construction,
programmatic scoring rules, model and inference settings, statistical analyses,
and additional results. The experiment code and CICM  benchmark dataset will be released
after the double-blind review period.

\begingroup
\setlength{\bibsep}{3pt plus 0.5pt minus 0.5pt}
\bibliography{references}
\bibliographystyle{iclr2027_conference}
\endgroup

\appendix
\let\topfraction\cicmSavedTopFraction
\let\bottomfraction\cicmSavedBottomFraction
\let\textfraction\cicmSavedTextFraction
\setlength{\textfloatsep}{\cicmSavedTextFloatSep}
\setcounter{topnumber}{3}
\setcounter{bottomnumber}{2}
\setcounter{totalnumber}{5}
\makeatletter
\setlength{\@fptop}{0pt}
\setlength{\@fpsep}{18pt plus 2pt minus 2pt}
\makeatother
\section*{Appendix guide}
The appendices provide derivations, additional evidence, and replication
details. For the mechanism results, start with
Appendices~\ref{app:retention-attention}--\ref{app:stage-p}.

\begingroup
\small
\renewcommand{\arraystretch}{1.08}
\noindent
\begin{tabularx}{\linewidth}{@{}lXr@{}}
 & & Page \\
\ref{app:theory} & \hyperref[app:theory]{Theory: assumptions and proofs} & \pageref{app:theory} \\
\ref{app:extended} & \hyperref[app:extended]{Additional results} & \pageref{app:extended} \\
\ref{app:failure-examples} & \hyperref[app:failure-examples]{Failure examples} & \pageref{app:failure-examples} \\
\ref{app:which-old} & \hyperref[app:which-old]{Which old value is returned?} & \pageref{app:which-old} \\
\ref{sec:community-benchmarks} & \hyperref[sec:community-benchmarks]{Independent community benchmarks} & \pageref{sec:community-benchmarks} \\
\ref{sec:icf-scoring} & \hyperref[sec:icf-scoring]{ICF-Bench scoring and answer format} & \pageref{sec:icf-scoring} \\
\ref{app:retention-attention} & \hyperref[app:retention-attention]{Retained information and attention} & \pageref{app:retention-attention} \\
\ref{app:component-replacements} & \hyperref[app:component-replacements]{Component replacements} & \pageref{app:component-replacements} \\
\ref{app:steering} & \hyperref[app:steering]{Steering at the answer position} & \pageref{app:steering} \\
\ref{app:stage-n-details} & \hyperref[app:stage-n-details]{Tracing attention in Pythia-160M} & \pageref{app:stage-n-details} \\
\ref{app:geometry} & \hyperref[app:geometry]{Testing a linear account of identity and recency} & \pageref{app:geometry} \\
\ref{app:stage-p} & \hyperref[app:stage-p]{When old-value answers develop} & \pageref{app:stage-p} \\
\ref{app:experimental-setup} & \hyperref[app:experimental-setup]{Experimental setup and reproducibility} & \pageref{app:experimental-setup} \\
\ref{app:cicm-datasheet} & \hyperref[app:cicm-datasheet]{CICM datasheet} & \pageref{app:cicm-datasheet} \\
\ref{app:test-time-repair} & \hyperref[app:test-time-repair]{Test-time attention routing: method, controls, and scope} & \pageref{app:test-time-repair} \\
\ref{app:api-robustness} & \hyperref[app:api-robustness]{Behavioral robustness and task diversity} & \pageref{app:api-robustness} \\
\ref{app:cicm-diversity} & \hyperref[app:cicm-diversity]{Task diversity and blinded semantic review} & \pageref{app:cicm-diversity} \\
\ref{app:decision-transfer} & \hyperref[app:decision-transfer]{Decisions after scoped updates in dialogues and agent logs} & \pageref{app:decision-transfer} \\
\ref{app:frontier-state-logs} & \hyperref[app:frontier-state-logs]{Frontier failures on complex operational histories} & \pageref{app:frontier-state-logs} \\
\end{tabularx}\par
\endgroup

\section{Theory: Assumptions and Proofs}
\label{app:theory}

We give the assumptions and proofs for the one-layer model in
Section~\ref{sec:theory}, covering pairwise positional matching and competition
among retained assignments.

\subsection{One-layer attention model}
\label{app:theory-core}

Consider the assignments associated with a single queried variable. Let
assignment $i$ at
distance $d_i$ from the answer query $q$ receive score
$s_i=\langle q,R_{-d_i}k_0\rangle/\sqrt{d_h}$. We assume these assignments
have the same content key $k_0$, so their scores differ only through position.
Here $R_{-d_i}$ is the RoPE rotation for relative distance $d_i$, and $d_h$
is the attention-head dimension. Let $T>0$ be the softmax temperature. The attention weight is
$a_i=e^{s_i/T}/\sum_j e^{s_j/T}$. We assume that each assignment carries a distinct
answer $v_i$, that attending to an assignment copies its value faithfully, and that
later computation does not change the copied identity. Let $Y$ denote the
returned value. Its distribution is then
\begin{equation}
 \Pr(Y=v_i)=a_i.
 \label{eq:copy}
\end{equation}
Below, $P(v_i)$ abbreviates $\Pr(Y=v_i)$ within this model. Let  $b$ be the
two-dimensional rotary band, $q_b$ and $k_{0,b}$ be the
corresponding query and key blocks, $\theta_b$ be the band's angular frequency,
$\psi_b$ be the angle between these blocks, and
$w_b=\lVert q_b\rVert\lVert k_{0,b}\rVert/\sqrt{d_h}$ be their
content-dependent amplitude. The score of the assignment at distance $d$ from the query is then
\begin{equation}
 \varphi(d)=\langle q,R_{-d}k_0\rangle/\sqrt{d_h}
 =\sum_b w_b\cos(d\theta_b+\psi_b),
 \label{eq:rope_score}
\end{equation}

\begin{proposition}[Pairwise odds identity]\label{prop:oddsid}
For a current assignment $c$ at distance $d_c$ and an old assignment $o$ at
distance $d_o>d_c$, let $\delta=d_o-d_c$ and
$m=\tfrac12(d_c+d_o)$. Then
\begin{equation}
 \log\frac{P(v_c)}{P(v_o)}
 =\frac{s_c-s_o}{T}
 =\frac{2}{T}\sum_b w_b\sin(m\theta_b+\psi_b)
   \sin\!\Big(\tfrac{\delta\theta_b}{2}\Big).
 \label{eq:odds_id}
\end{equation}
\end{proposition}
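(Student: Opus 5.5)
The identity splits into two equalities, each following directly from the model set up before the statement.

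\emph{First equality.} By the copying assumption in Equation~\ref{eq:copy}, $P(v_c)=a_c$ and $P(v_o)=a_o$. In the ratio $a_c/a_o$ the common softmax normalizer $\sum_j e^{s_j/T}$ cancels, leaving $e^{(s_c-s_o)/T}$. Taking logarithms gives $\log(P(v_c)/P(v_o))=(s_c-s_o)/T$. This holds no matter how many other assignments are present, since they enter only through the shared denominator. Distinctness of the answers $v_i$ matters here: it ensures each event $Y=v_i$ corresponds to exactly one assignment, so no attention weights need to be summed.

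\emph{Second equality.} Because all assignments share the content key $k_0$, Equation~\ref{eq:rope_score} gives $s_c=\varphi(d_c)$ and $s_o=\varphi(d_o)$, so
\[
s_c-s_o=\sum_b w_b\bigl[\cos(d_c\theta_b+\psi_b)-\cos(d_o\theta_b+\psi_b)\bigr].
\]
Each term can be handled with the sum-to-product identity $\cos A-\cos B=-2\sin\frac{A+B}{2}\sin\frac{A-B}{2}$, taking $A=d_c\theta_b+\psi_b$ and $B=d_o\theta_b+\psi_b$. Then $\frac{A+B}{2}=m\theta_b+\psi_b$ and $\frac{A-B}{2}=-\frac{\delta\theta_b}{2}$. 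Because sine is odd, the minus sign is absorbed, and each bracket equals $2\sin(m\theta_b+\psi_b)\sin(\delta\theta_b/2)$. Summing over the bands with weights $w_b$ and dividing by $T$ gives the right-hand side of Equation~\ref{eq:odds_id}.

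\emph{Main obstacle.} The only delicate step is confirming that Equation~\ref{eq:rope_score} is a valid expansion with consistent sign conventions. In each two-dimensional band, $R_{-d}$ rotates $k_{0,b}$ by angle $-d\theta_b$, so
\[
\langle q_b,R_{-d}k_{0,b}\rangle=\lVert q_b\rVert\lVert k_{0,b}\rVert\cos(d\theta_b+\psi_b),
\]
where $\psi_b$ is the signed angle between $q_b$ and $k_{0,b}$, oriented to match the rotation direction. Dividing by $\sqrt{d_h}$ and summing over the orthogonal band decomposition gives $\varphi(d)$. The proposition takes this expansion as given, so I would only remark on the orientation convention for $\psi_b$. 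The rest is the two routine manipulations above.
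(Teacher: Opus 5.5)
Your proposal is correct and follows the paper's proof: the shared softmax normalizer cancels in $a_c/a_o$ to give the first equality, and the sum-to-product identity applied to $\varphi(d_c)-\varphi(d_o)$, with the sign absorbed by the oddness of sine, gives the second. Your extra remarks on distinct answers and on the orientation convention for $\psi_b$ are consistent with the model's assumptions but are not needed beyond what the paper states.
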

\begin{proof}
Equation~\ref{eq:copy} gives
$P(v_c)/P(v_o)=a_c/a_o=e^{(s_c-s_o)/T}$ because the softmax normalizer
cancels. Applying
$\cos A-\cos B=-2\sin\tfrac{A+B}{2}\sin\tfrac{A-B}{2}$ to
$\varphi(d_c)-\varphi(d_o)$ gives the final expression.
\end{proof}

\begin{corollary}[Pairwise rotary-margin bound]
\label{cor:pairwise-bound}
Define
\begin{equation}
 B(\delta)=\frac{2}{T}\sum_b w_b
 \left|\sin\!\left(\frac{\delta\theta_b}{2}\right)\right|.
 \label{eq:pairwise_bound_B}
\end{equation}
Then we have
\begin{equation}
 e^{-B(\delta)}\leq\frac{P(v_c)}{P(v_o)}\leq e^{B(\delta)}.
 \label{eq:pairwise_bound}
\end{equation}
Consequently, when the band-weighted positional difference $B(\delta)$ is
small, the two assignments must have similar selection probabilities.
\end{corollary}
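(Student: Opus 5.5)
This follows directly from Proposition~\ref{prop:oddsid}: we bound the absolute value of the exact log-odds expression in Equation~\ref{eq:odds_id} and then exponentiate. The proposition already gives
\[
\log\frac{P(v_c)}{P(v_o)}=\frac{2}{T}\sum_b w_b\sin(m\theta_b+\psi_b)\sin\!\Big(\tfrac{\delta\theta_b}{2}\Big),
\]
so there is no need to revisit the softmax normalizer or the copying assumption of Equation~\ref{eq:copy}. All of that is absorbed into the identity.

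Three steps suffice.

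\emph{Triangle inequality.} Apply the triangle inequality to the finite sum over rotary bands. Each term is at most $w_b\,|\sin(m\theta_b+\psi_b)|\,|\sin(\delta\theta_b/2)|$ in absolute value, using $w_b\ge 0$. This holds because $w_b=\lVert q_b\rVert\lVert k_{0,b}\rVert/\sqrt{d_h}$ is a product of norms, and $T>0$.

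\emph{Discard the midpoint factor.} Use $|\sin(m\theta_b+\psi_b)|\le 1$ to remove the dependence on the midpoint $m$ and the phase $\psi_b$. This leaves
\[
\Big|\log\tfrac{P(v_c)}{P(v_o)}\Big|\le \frac{2}{T}\sum_b w_b\,|\sin(\delta\theta_b/2)|=B(\delta).
\]

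\emph{Exponentiate.} Since $\exp$ is monotone, $-B(\delta)\le\log(P(v_c)/P(v_o))\le B(\delta)$ gives Equation~\ref{eq:pairwise_bound}.

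For the closing remark, note that as $B(\delta)\to 0$ both $e^{\pm B(\delta)}\to 1$. The ratio is therefore pinned near one, which means the two assignments have similar selection probabilities.

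No step is a genuine obstacle; this is a two-line consequence of Proposition~\ref{prop:oddsid}. The only points needing care are two. First, $w_b$ must be nonnegative, which is immediate from its definition as a product of norms. Second, the ratio $P(v_c)/P(v_o)$ must be well defined, which holds because softmax weights are strictly positive. It may also be worth remarking that the bound is tight, in the sense that it is attained when every phase satisfies $\sin(m\theta_b+\psi_b)=\pm\operatorname{sign}\sin(\delta\theta_b/2)$, although the statement does not require this.
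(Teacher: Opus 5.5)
Your proof is correct and matches the paper's argument: take absolute values in the identity of Proposition~\ref{prop:oddsid}, apply the triangle inequality using $w_b\ge 0$, bound $|\sin(m\theta_b+\psi_b)|\le 1$, and exponentiate. Your optional tightness aside needs one sign shared by all bands, i.e.\ $\sin(m\theta_b+\psi_b)=\epsilon\,\operatorname{sign}\sin(\delta\theta_b/2)$ with the same $\epsilon\in\{\pm1\}$ for every $b$, but the statement does not require it.
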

\begin{proof}
Taking absolute values in Equation~\ref{eq:odds_id}, applying the triangle
inequality, and using $|\sin(m\theta_b+\psi_b)|\leq1$ gives
$|\log(P(v_c)/P(v_o))|\leq B(\delta)$. Exponentiating yields the result.
\end{proof}

\begin{proposition}[Single-band translation reversal]\label{prop:reverse}
Fix the content and assignment order, and let the pair have separation $\delta$
and midpoint $m$. Translate both assignments by an integer $\tau$. In the
single-band model, let $w$ be the band's amplitude, define
$A=\tfrac{2w}{T}\sin(\delta\theta_1/2)$, and write
$L(\tau)=A\sin((m+\tau)\theta_1+\psi_1)$. If
$\theta_1\notin2\pi\mathbb Z$ and $L$ is not identically zero, some integer
translations make $L$ positive and others make it negative. Thus the same
ordered pair can favor opposite answers at different absolute placements.
\end{proposition}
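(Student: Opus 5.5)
We first reduce the claim to a statement about the integer sequence $x_\tau=(m+\tau)\theta_1+\psi_1$ modulo $2\pi$. By Proposition~\ref{prop:oddsid} specialised to one band, the log-odds after translating both assignments by $\tau$ is exactly $L(\tau)=A\sin((m+\tau)\theta_1+\psi_1)$. This holds because translation leaves $\delta$ unchanged and shifts the midpoint to $m+\tau$. Since $L\not\equiv 0$, we have $A\neq 0$, so the sign of $L(\tau)$ equals the sign of $A$ times the sign of $\sin x_\tau$. It therefore suffices to find integers $\tau_+,\tau_-$ with $\sin x_{\tau_+}>0$ and $\sin x_{\tau_-}<0$. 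The sign of $L$ then differs between them, whatever the sign of $A$. The final interpretive sentence follows at once from Equation~\ref{eq:copy}: $L>0$ means $P(v_c)>P(v_o)$, and $L<0$ means the reverse.

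The core step is to show that the orbit $\{\tau\theta_1 \bmod 2\pi:\tau\in\mathbb Z\}$ meets both open arcs $(0,\pi)$ and $(\pi,2\pi)$ after shifting by the constant $c=m\theta_1+\psi_1$. We split into two cases according to whether $\theta_1/2\pi$ is rational.

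In the irrational case we invoke Weyl/Kronecker density: the orbit is dense in the circle, so it enters both open half-circles shifted by $-c$. In the rational case, write $\theta_1 \equiv 2\pi p/q$ with $\gcd(p,q)=1$; the hypothesis $\theta_1\notin2\pi\mathbb Z$ gives $q\geq 2$. The orbit is then the $q$ equally spaced points $c+2\pi j/q$, $j=0,\dots,q-1$. For $q\geq 3$ the spacing $2\pi/q<\pi$, so each open half-circle, having length $\pi$, contains at least one of these points.

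We expect the main obstacle to be the rational case, specifically $q=2$. There the orbit is $\{c, c+\pi\}$, and $\sin(c+\pi)=-\sin c$, so the two values have opposite signs as long as $\sin c\neq 0$. The degenerate situation $\sin c=0$ with $q=2$ is exactly the case $\sin x_\tau=0$ for all $\tau$, which makes $L$ identically zero. The hypothesis $L\not\equiv0$ rules this out.

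The same care is needed in the case $q\geq 3$, because points landing on $0$ or $\pi$ give $\sin=0$. Here we would argue directly: among $q\geq3$ equally spaced points, at most two lie on $\{0,\pi\}$. The remaining points cannot all sit in one open half-circle, because the gap across the other half would then exceed the spacing $2\pi/q$. So positive and negative values of $\sin x_\tau$ both occur, which completes the proof.
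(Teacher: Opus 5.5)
Your proof is correct. It shares the paper's case split and its irrational case (density of the rotation orbit), but it handles the rational case with a different mechanism.

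The paper writes $\theta_1/(2\pi)=r/\ell$ in lowest terms with $\ell\ge 2$. It then observes that $L$ sums to zero over one $\ell$-point orbit, since $\sum_{j=0}^{\ell-1}\sin(c+2\pi j/\ell)=0$. Because $L$ is periodic and not identically zero, any nonzero term must be balanced by a term of the opposite sign.

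This averaging step treats $\ell=2$ and $\ell\ge 3$ the same way, and points landing on $\{0,\pi\}$ do not affect it. So the case you flag as the main obstacle never comes up. It is also the discrete form of the Ces\`aro-mean argument the paper reuses for the multi-band case in Proposition~\ref{prop:transrev}. There $L$ is a sum of sinusoids with different frequencies, and a gap argument on a single circle does not directly apply.

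Your geometric route is elementary and fully valid: equally spaced points with spacing $2\pi/q<\pi$ must enter every open arc of length $\pi$, and for $q=2$ you check $\sin(c+\pi)=-\sin c$ explicitly. It just costs the separate $q=2$ case.

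Your last paragraph is redundant. The spacing claim already puts an orbit point strictly inside each open half-circle, where $\sin$ is nonzero with the required sign. So orbit points on $\{0,\pi\}$ cannot break the $q\ge 3$ case.
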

\begin{proof}
If $\theta_1/(2\pi)$ is irrational, the integer phase orbit is dense on the
circle and visits both open half-circles. If it equals $r/\ell$ in lowest terms
with $\ell\geq2$, the sum of $L(\tau)$ over one $\ell$-point orbit is zero. Because
the orbit is not identically zero, it contains both signs.
\end{proof}

For example, take a rotary period of 10, $\psi_1=0$, and $w=T=1$. A current
assignment at distance 1 and an old assignment at distance 5 give the current value about
$6{:}1$ odds. Translating both assignments by five positions reverses the scores and
gives the old value the same advantage, although their order is unchanged.
The bound in Corollary~\ref{cor:pairwise-bound} is sufficient rather than
necessary: rotary bands can also cancel one another and produce near-equal
odds when $B(\delta)$ itself is not small. Our empirical QK measurements report
the total score margin, not the individual band terms, so we do not interpret
them as an estimate of $B(\delta)$.

\begin{proposition}[Accumulation of old assignments]\label{prop:groupodds}
Let $C=\{c\}$ contain the current assignment and let
$S_k=\{1,\ldots,k\}$ index the old assignments, whose scores are
$s_1,\ldots,s_k$. Keep these scores and $s_c$ fixed when adding a candidate.
For any group $G$, let $P_k(G)$ denote its total selection
probability under the softmax over $C\cup S_k$, and define
$F_{S,k}=T\log\sum_{j\leq k}e^{s_j/T}$. Adding an old assignment with finite
score $s_{k+1}$ gives
\begin{equation}
 \frac{P_{k+1}(C)}{P_{k+1}(S_{k+1})}
 =\frac{P_k(C)}{P_k(S_k)}
  \frac{1}{1+\exp((s_{k+1}-F_{S,k})/T)}
 <\frac{P_k(C)}{P_k(S_k)}.
 \label{eq:groupodds}
\end{equation}
\end{proposition}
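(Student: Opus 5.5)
The plan is to compute both group probabilities explicitly from the softmax and take their ratio, so that the common normalizer cancels. Write $Z_S^{(k)}=\sum_{j\le k}e^{s_j/T}$, so that $F_{S,k}=T\log Z_S^{(k)}$, i.e.\ $Z_S^{(k)}=e^{F_{S,k}/T}$. Under the softmax over $C\cup S_k$ we have $P_k(C)=e^{s_c/T}/(e^{s_c/T}+Z_S^{(k)})$ and $P_k(S_k)=Z_S^{(k)}/(e^{s_c/T}+Z_S^{(k)})$. This gives
\[
\frac{P_k(C)}{P_k(S_k)}=\frac{e^{s_c/T}}{Z_S^{(k)}},
\]
exactly as in the proof of Proposition~\ref{prop:oddsid}, where the normalizer also cancels. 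The same computation with $k+1$ old assignments yields $e^{s_c/T}/Z_S^{(k+1)}$. Because $s_c$ and $s_1,\ldots,s_k$ are held fixed, the numerator is unchanged.

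Next I relate the two denominators. Since $Z_S^{(k+1)}=Z_S^{(k)}+e^{s_{k+1}/T}$, factoring out $Z_S^{(k)}=e^{F_{S,k}/T}$ gives
\[
Z_S^{(k+1)}=Z_S^{(k)}\bigl(1+\exp((s_{k+1}-F_{S,k})/T)\bigr).
\]
Dividing the two odds expressions then produces the equality in Equation~\ref{eq:groupodds}.

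For the strict inequality, I use that $s_{k+1}$ is finite and $T>0$. Hence $\exp((s_{k+1}-F_{S,k})/T)$ is a strictly positive real number, and the multiplicative factor $1/(1+\exp(\cdot))$ lies in $(0,1)$. The left odds are also strictly positive and finite, because all scores are finite (and $k\ge1$, so $Z_S^{(k)}>0$ and $F_{S,k}$ is defined). Multiplying a positive number by a factor in $(0,1)$ strictly decreases it, which completes the argument.

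There is no real obstacle here. The only point needing care is bookkeeping: the proposition compares softmaxes over different candidate sets, so the normalizers differ between $k$ and $k+1$. Working with the odds rather than the individual probabilities removes this issue, because each normalizer cancels within its own ratio.
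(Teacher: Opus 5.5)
Your proof is correct and follows the paper's argument: the group odds reduce to $e^{s_c/T}/Z_S^{(k)}$, adding the new old assignment multiplies the old-value partition sum by $1+\exp((s_{k+1}-F_{S,k})/T)>1$, and the odds are therefore multiplied by the reciprocal of that factor. Your remarks on the positivity of the odds and on needing $k\ge 1$ (so that $F_{S,k}$ is finite) fill in details that the paper leaves implicit.
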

\begin{proof}
Adding $e^{s_{k+1}/T}$ multiplies the old-value partition sum by
$1+e^{(s_{k+1}-F_{S,k})/T}>1$. The current-to-old odds are therefore
multiplied by its reciprocal, which is strictly below one.
\end{proof}

\begin{corollary}[Unrelated candidates leave fixed pairwise odds unchanged]
\label{cor:length}
For any additional candidate set $D$ with fixed scores, let $P'$ denote the
selection distribution after adding $D$. Then
$P'(v_c)/P'(v_o)=e^{(s_c-s_o)/T}=P(v_c)/P(v_o)$.
\end{corollary}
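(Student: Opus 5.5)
The argument repeats the normalizer-cancellation step from the proof of Proposition~\ref{prop:oddsid}, now applied to the enlarged softmax. Let $Z=\sum_{j\in C\cup S_k}e^{s_j/T}$ be the partition sum before adding $D$, and let $Z'=Z+\sum_{d\in D}e^{s_d/T}$ be the partition sum afterward. The key hypothesis is that adding $D$ leaves the scores $s_c$ and $s_o$ unchanged. Under the copying model of Equation~\ref{eq:copy}, the new attention weights are therefore $a'_c=e^{s_c/T}/Z'$ and $a'_o=e^{s_o/T}/Z'$. I will also assume each candidate in $D$ carries a value distinct from $v_c$ and $v_o$, as the model does for all assignments. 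Then $P'(v_c)=a'_c$ and $P'(v_o)=a'_o$ still hold.

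Taking the ratio, $Z'$ cancels and gives $P'(v_c)/P'(v_o)=e^{(s_c-s_o)/T}$. Proposition~\ref{prop:oddsid} already shows that $P(v_c)/P(v_o)=e^{(s_c-s_o)/T}$, so the two ratios are equal. The quantity $e^{(s_c-s_o)/T}$ is finite and positive, and $Z'$ is finite as long as the scores in $D$ are finite, so nothing degenerates.

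No step is a real obstacle. The only care needed is in stating assumptions. First, the scores of $c$ and $o$ must stay fixed. This is the content of ``fixed scores'': in the RoPE model, adding candidates must not shift the distances $d_c$ and $d_o$. Second, $D$ must not share answers with $v_c$ or $v_o$. If it did, $P'(v_c)$ would pick up extra mass, so this must be stated explicitly. It is also worth contrasting the result with Proposition~\ref{prop:groupodds}. There, the new candidate enters the \emph{old group} and dilutes the current value's odds. Here the comparison is pairwise, so extra length changes absolute probabilities but not the current-versus-old odds.
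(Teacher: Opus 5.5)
Your proposal is correct and follows the paper's proof: the added candidates change only the shared softmax normalizer, and it cancels in the ratio of the two fixed candidates. Making explicit that the scores $s_c,s_o$ stay fixed and that the values in $D$ differ from $v_c$ and $v_o$ is a useful clarification; the paper leaves the second assumption implicit in its distinct-value copying model.
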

\begin{proof}
The additional candidates change the shared softmax normalizer, which cancels
in the ratio between the two fixed candidates.
\end{proof}

To describe competition between groups, we also consider a softmax over
candidate scores without assuming that they come from a single head. Let $C$, $S$,
and $O$ denote the current-value, old-value, and other-variable candidate
groups. For a group $G\in\{C,S,O\}$ with scores $s_i$ and temperature $T>0$,
write
\begin{equation}
 Z_G=\sum_{i\in G}e^{s_i/T},\qquad
 F_G=T\log Z_G,\qquad
 \pi_G=\frac{e^{F_G/T}}{\sum_H e^{F_H/T}}.
 \label{eq:group_free_energy}
\end{equation}
When restricting the choice to the current- and old-value groups, write
$\pi_{C\mid C\cup S}=Z_C/(Z_C+Z_S)$.
These definitions let us compare groups containing different numbers of
values.

\subsection{RoPE does not guarantee a global recency ordering}

\begin{proposition}[No architecture-guaranteed global recency order under RoPE]
\label{prop:rope}
Fix an attention head with rotary position encoding. Let an answer query sit at
position $n$ and let a queried variable have assignments at positions $p_1<\dots<p_m$
whose keys are identical up to a content perturbation of norm at most $\eta$
(approximately shared content keys). Then the head's
pre-softmax score for assignment $j$ is $\ell_j=g(n-p_j)+O(\eta)$, where
$g(d)=\sum_\omega \alpha_\omega\cos(\omega d+\varphi_\omega)$ is a fixed sum of
cosines over the head's rotary frequency bands, with $\alpha_\omega,\varphi_\omega$
determined by the query and the shared content key. If $g$ is nonconstant, it
is recurrent and cannot be strictly monotone over the unbounded distance
domain. Consequently bare RoPE matching does not guarantee that the nearest
(most recent) assignment receives the largest score for arbitrary spacings. This
global statement does not rule out a particular head being monotone on a
finite realized interval.
\end{proposition}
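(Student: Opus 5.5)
The proof has three steps. First comes a score decomposition that extends Equation~\ref{eq:rope_score}. Then we show that a nonconstant cosine sum is recurrent. Finally we conclude that no strict monotonicity is possible.

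\textbf{Score decomposition.} Write the key of assignment $j$ as $k_j=k_0+e_j$ with $\lVert e_j\rVert\le\eta$. Bilinearity gives
\[
\ell_j=\langle q,R_{-(n-p_j)}k_0\rangle/\sqrt{d_h}+\langle q,R_{-(n-p_j)}e_j\rangle/\sqrt{d_h}.
\]
The first term is exactly $\varphi(n-p_j)$ from Equation~\ref{eq:rope_score}; we rename it $g$, with $\alpha_\omega=w_b$, $\omega=\theta_b$ and $\varphi_\omega=\psi_b$. For the second term, $R_{-d}$ is orthogonal, so Cauchy--Schwarz bounds it by $\lVert q\rVert\eta/\sqrt{d_h}=O(\eta)$, uniformly in the distance. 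This gives $\ell_j=g(n-p_j)+O(\eta)$.

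\textbf{Recurrence of $g$.} We work on the integer distance domain $d\in\mathbb{N}$. Consider the vector of phases $(\omega d \bmod 2\pi)_\omega$, which lies on a finite-dimensional torus. By the pigeonhole principle, or equivalently Kronecker/Dirichlet simultaneous approximation, for every $\epsilon>0$ there are infinitely many $N\ge1$ with $\lVert \omega N\rVert_{2\pi}<\epsilon$ for all bands simultaneously. Since $g$ is Lipschitz in the phases, $|g(d+N)-g(d)|\le\epsilon\sum|\alpha_\omega|\,|\omega|$-type bounds hold uniformly in $d$. Hence $g$ is almost periodic, and in particular $g(d+N_i)\to g(d)$ along a sequence $N_i\to\infty$. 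This is the sense of ``recurrent'' we intend.

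\textbf{No strict monotonicity.} Suppose $g$ were strictly monotone on $\mathbb{N}$, say decreasing, which is the recency-favoring direction. Because $g$ is nonconstant on the integers, there exist $d_1<d_2$ with $g(d_1)-g(d_2)=\Delta>0$. Choose $\epsilon$ so that the uniform error is below $\Delta/2$, and pick a return time $N$. Then $g(d_2+N)>g(d_2)+\ldots$; more precisely, $g(d_2+N)$ lies within $\Delta/2$ of $g(d_2)$, while monotonicity would force $g(d_2+N)\le g(d_2)$. That alone is not yet a contradiction. The contradiction comes from comparing $g(d_1+N)$, which is within $\Delta/2$ of $g(d_1)$ and is therefore $>g(d_2)+\Delta/2\ge$ roughly $g(d_2+N)$. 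Since $d_1+N<d_2+N$, this is consistent with decreasing order, so we instead use $d_1+N>d_2$ when $N>d_2-d_1$. Monotonicity then requires $g(d_1+N)<g(d_2)$, but recurrence gives $g(d_1+N)>g(d_1)-\Delta/2>g(d_2)$, which is a contradiction. The increasing case is symmetric.

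\textbf{Conclusion and main obstacle.} Choose spacings $p_j$ realizing distances $d_2$ (nearest) and $d_1+N$ (farther). For $\eta$ small relative to the margin, the farther assignment scores higher, so recency ordering fails. This mirrors Proposition~\ref{prop:reverse}, which handles the single-band case. The main obstacle is the nonconstancy subtlety: $g$ might be nonconstant as a real function yet constant on the integers, for instance if every $\omega\in2\pi\mathbb{Z}$. I would handle this by interpreting ``nonconstant'' on the realized integer domain, or by noting that real-valued distances work identically with continuous simultaneous approximation.
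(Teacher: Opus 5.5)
Your proof is correct and follows the paper's route. You reduce the RoPE score to the cosine sum $g(n-p_j)$ plus an $O(\eta)$ content term, then use recurrence of $g$ to rule out global strict monotonicity and exhibit a reversed pair once the margin dominates $O(\eta)$. Your pigeonhole (simultaneous-approximation) step covers the paper's separate periodic and almost-periodic cases in one argument and makes the margin $\Delta/2$ explicit.

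One slip to fix: each cosine is $1$-Lipschitz in its phase, so the uniform return bound is $|g(d+N)-g(d)|\le\epsilon\sum_\omega|\alpha_\omega|$, with no factor $|\omega|$. That factor is the Lipschitz constant in $d$, which is irrelevant for large shifts $N$; taken literally with RoPE's $\omega\le 1$, it would make the stated bound too strong.
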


\begin{proof}
Rotary encoding rotates the query and each key by an angle proportional to
position, so the bilinear score between the query at $n$ and a key at $p_j$
depends on positions only through $n-p_j$: writing the shared content key as
$k_0$, $\langle R_n q, R_{p_j}k_0\rangle=\langle q, R_{-(n-p_j)}k_0\rangle$,
which expands over the rotary frequency bands as
$g(n-p_j)=\sum_\omega\alpha_\omega\cos(\omega(n-p_j)+\varphi_\omega)$. A
content perturbation of norm $\le\eta$ changes the score by $O(\eta)$, giving
$\ell_j=g(n-p_j)+O(\eta)$. If the frequencies are commensurate, nonconstant
$g$ is periodic and therefore cannot be strictly monotone over all distances.
If they are incommensurate, $g$ is almost periodic: it has arbitrarily large
approximate recurrences, which are likewise incompatible with a nonconstant
global monotone order. Thus some distance pairs reverse or tie the desired
nearest-first ordering whenever their positional-score difference dominates
the $O(\eta)$ perturbation. A finite operational interval can still happen to
lie inside a monotone segment, which is why the proposition is a lack of an
architectural guarantee rather than an impossibility claim for every prompt.
\end{proof}

The proposition concerns a head with approximately shared content keys.
A full model can still learn an order-sensitive computation, and rotary encoding alone does not supply the guarantee.

For the full multi-band score, the same calculation yields a weaker but general
conclusion: bare rotary matching cannot maintain a uniformly positive recency
margin across every placement.

\begin{proposition}[No uniform multi-band recency margin]
\label{prop:transrev}
Fix an ordered current--old pair with separation $\delta$ and midpoint $m$,
and translate both assignments by an integer $\tau$. By Proposition~\ref{prop:oddsid}
the log-odds are
$L(\tau)=\tfrac{2}{T}\sum_b w_b\sin((m+\tau)\theta_b+\psi_b)\sin(\delta\theta_b/2)$,
a trigonometric polynomial in $\tau$. If every rotary angle satisfies
$0<\theta_b<2\pi$ (as for RoPE, whose per-position angles lie in $(0,1]$) and
$L$ is not identically zero, then $L$ has zero Ces\`aro mean over integer
$\tau$. Consequently, there is no $\epsilon>0$ such that
$L(\tau)\geq\epsilon$ for every placement $\tau$.
\end{proposition}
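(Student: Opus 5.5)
I will prove the statement by computing the Ces\`aro mean of $L(\tau)$ term by term and then arguing by contradiction. By Proposition~\ref{prop:oddsid}, $L(\tau)$ is a finite sum. Each band contributes a term $c_b\sin(\tau\theta_b+\phi_b)$ with constant amplitude $c_b=\tfrac{2}{T}w_b\sin(\delta\theta_b/2)$ and constant phase $\phi_b=m\theta_b+\psi_b$. Write each term as $\operatorname{Im}\bigl(c_b e^{i\phi_b}e^{i\tau\theta_b}\bigr)$. The Ces\`aro mean over $\tau=0,\dots,N-1$ (or over a symmetric window $|\tau|\le N$; the argument is the same) is then a finite sum of geometric averages
\[
\frac1N\sum_{\tau=0}^{N-1}e^{i\tau\theta_b}=\frac{1}{N}\,\frac{e^{iN\theta_b}-1}{e^{i\theta_b}-1}.
\]
The hypothesis $0<\theta_b<2\pi$ is exactly what guarantees $e^{i\theta_b}\neq1$. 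So each such average has modulus at most $2/(N|e^{i\theta_b}-1|)\to0$. Since there are finitely many bands, $\frac1N\sum_{\tau<N}L(\tau)\to0$, which is the zero-mean claim.

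Next I will use this to rule out a uniform positive margin. Suppose some $\epsilon>0$ satisfied $L(\tau)\ge\epsilon$ for every integer $\tau$. Then every Ces\`aro average would be at least $\epsilon$, so the limit would be at least $\epsilon>0$, contradicting the zero mean.

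The same argument, applied to $-L$, also excludes a uniform negative margin. Because $L$ is not identically zero but has zero mean, it must take both signs over integer translations. This extends Proposition~\ref{prop:reverse} to many bands. If $L$ took only values $\ge0$ with mean zero, the proof would need slightly more: I would show that a nonnegative almost-periodic sequence with zero mean vanishes identically, via Parseval's identity for Bohr almost-periodic sequences. This strengthening is not needed for the stated conclusion, so I will note it only as a remark.

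The only genuine subtlety is the role of the hypothesis $0<\theta_b<2\pi$. A band with $\theta_b\in2\pi\mathbb Z$ would give a constant term in $\tau$ and break the zero-mean property. Such a band actually contributes nothing here, because $\sin(\delta\theta_b/2)=0$ when $\delta$ is an integer, but the hypothesis sidesteps the issue anyway. I will also check that the RoPE remark is right: RoPE angles $\theta_b=\text{base}^{-2b/d_h}$ lie in $(0,1]\subset(0,2\pi)$. Beyond that, the step needing care is the uniformity of the bound on the geometric sums. It holds because there are finitely many bands, each with a fixed $\theta_b$ bounded away from $0$ and $2\pi$.
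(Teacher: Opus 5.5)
Your proof is correct and matches the paper's argument. Like the paper, you bound each band's geometric sum $\sum_{\tau<N}e^{i\tau\theta_b}$ uniformly in $N$ using $e^{i\theta_b}\neq1$, sum the finitely many bands to get zero Ces\`aro mean, and observe that $L\geq\epsilon$ everywhere would force every average to be at least $\epsilon$. Your extra remarks go slightly beyond what the paper proves but are sound and correctly flagged as unnecessary for the stated conclusion: that bands with $\theta_b\in2\pi\mathbb{Z}$ vanish anyway for integer $\delta$, and that a nonzero $L$ takes both signs via the Parseval argument for almost-periodic sequences.
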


\begin{proof}
Expand
$\sin((m+\tau)\theta_b+\psi_b)=\sin(m\theta_b+\psi_b)\cos(\tau\theta_b)+\cos(m\theta_b+\psi_b)\sin(\tau\theta_b)$.
For $0<\theta_b<2\pi$ we have $\theta_b\notin2\pi\mathbb Z$, so
$\big|\sum_{\tau=0}^{N-1}e^{i\tau\theta_b}\big|=\big|(e^{iN\theta_b}-1)/(e^{i\theta_b}-1)\big|\le 2/|e^{i\theta_b}-1|$
is bounded in $N$, giving
$\tfrac1N\sum_{\tau<N}\cos(\tau\theta_b)\to0$ and
$\tfrac1N\sum_{\tau<N}\sin(\tau\theta_b)\to0$. Summing the finitely many bands,
$L$ has zero Ces\`aro mean. A real sequence with zero Ces\`aro mean that is not
identically zero cannot satisfy $L(\tau)\ge\epsilon>0$ for all $\tau$.
\end{proof}

Thus the rotary score alone supplies no placement-independent positive margin
for the current value. This proposition does not require every multi-band head
to reverse its preference on a particular finite context range, but it rules out an architectural guarantee over all placements.

\subsection{How score strength and candidate count combine}

\begin{proposition}[Strength--multiplicity decomposition]
\label{prop:decomposition}
Let $p_i=e^{s_i/T}/Z_G$ be the normalized score within group $G$, and let
$H(p)=-\sum_i p_i\log p_i$ be its entropy. Then
\begin{equation}
 F_G=\sum_{i\in G}p_i s_i+T H(p),
 \qquad
 \max_{i\in G}s_i\leq F_G\leq \max_{i\in G}s_i+T\log |G|,
 \label{eq:entropy_decomposition}
\end{equation}
\end{proposition}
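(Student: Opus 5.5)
The identity follows by taking logarithms of the softmax weights; the bounds then come from the entropy range of a distribution on $|G|$ points. Everything uses the definitions in Equation~\ref{eq:group_free_energy}, with $G$ finite and nonempty.

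\textbf{Identity.} For each $i\in G$, take logarithms in $p_i=e^{s_i/T}/Z_G$ to get
\[
\log p_i = s_i/T-\log Z_G .
\]
Multiply by $p_i$, sum over $i\in G$, and use $\sum_i p_i=1$. This gives
\[
-H(p)=\frac{1}{T}\sum_i p_i s_i-\log Z_G .
\]
Multiplying through by $T$ and recalling $F_G=T\log Z_G$ yields
\[
F_G=\sum_i p_i s_i+TH(p).
\]
This is the Gibbs variational identity: $F_G$ equals the average score plus temperature times entropy.

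\textbf{Bounds.} Let $s_{\max}=\max_{i\in G}s_i$. The bounds can be read off the identity, but it is shorter to bound $Z_G$ directly. Every term of $Z_G$ is positive, so $Z_G\ge e^{s_{\max}/T}$. Every term is also at most $e^{s_{\max}/T}$, so $Z_G\le |G|e^{s_{\max}/T}$. Since $T>0$, applying $T\log$ preserves both inequalities and gives
\[
s_{\max}\le F_G\le s_{\max}+T\log|G| .
\]

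\textbf{Cross-check.} The identity gives the same bounds. The average $\sum_i p_i s_i$ is at most $s_{\max}$. The entropy satisfies $0\le H(p)\le \log|G|$. Together these give the upper bound. The lower bound also follows from Gibbs' inequality, $\sum_i p_i s_i+TH(p)\ge s_{i^*}$ for the maximizing index $i^*$ (compare $p$ with the point mass at $i^*$), but the direct bound on $Z_G$ is simpler.

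There is no real obstacle here. The only points to watch are that $T>0$, so taking $T\log$ keeps the direction of the inequalities, and that $G$ is nonempty, so $Z_G>0$ and $H(p)$ is well defined (with the convention $0\log 0=0$, which does not arise since every $p_i>0$).

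The interpretation for the paper is that the second term, $TH(p)$, is worth up to $T\log|G|$. This quantifies how many near-tied old values can together outweigh a single current value with a higher score, complementing Proposition~\ref{prop:groupodds}.
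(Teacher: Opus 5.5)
Your proof is correct and follows the paper's own argument: you derive the identity by taking $\log p_i = s_i/T - \log Z_G$ and averaging under $p$, and you get the bounds by placing $Z_G$ between its largest term $e^{s_{\max}/T}$ and $|G|$ times that term. Your entropy-based cross-check is also valid, though the paper does not include it.
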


\begin{proof}
From $\log p_i=s_i/T-F_G/T$, averaging over $p_i$ and rearranging gives the
first identity.  The lower bound follows because $Z_G$ contains the largest exponential, and the upper bound follows because every term is at most that
largest exponential.
\end{proof}

\subsection{Selection with bounded score margins}

Proposition~\ref{prop:groupodds} shows how old assignments accumulate
selection mass. Let the current score be $s_c=\mu+\Delta$ and let each of
$k$ old-value scores lie in $[\mu-\varepsilon,\mu+\varepsilon]$.
The following bounds quantify how group competition grows while the current
score remains unchanged.

\begin{proposition}[Multiplicity bound]
\label{prop:multiplicity}
Conditioned on choosing between the current and old-value groups,
\begin{equation}
 \frac{1}{1+k e^{-(\Delta-\varepsilon)/T}}
 \leq \pi_{C\mid C\cup S}\leq
 \frac{1}{1+k e^{-(\Delta+\varepsilon)/T}}.
 \label{eq:robust_overwrite_bound}
\end{equation}
If all old-value scores equal $\mu$ ($\varepsilon=0$), the two bounds coincide.
\end{proposition}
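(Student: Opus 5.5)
The plan is to write the conditional probability in odds form and bound the old-value partition sum termwise. By the definition after Equation~\ref{eq:group_free_energy}, $\pi_{C\mid C\cup S}=Z_C/(Z_C+Z_S)=1/(1+Z_S/Z_C)$. Here $C=\{c\}$, so $Z_C=e^{s_c/T}=e^{(\mu+\Delta)/T}$. The map $x\mapsto 1/(1+x)$ is strictly decreasing on $[0,\infty)$. Hence any upper bound on the ratio $Z_S/Z_C$ gives a lower bound on $\pi_{C\mid C\cup S}$, and any lower bound on the ratio gives an upper bound.

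Next, bound $Z_S=\sum_{j=1}^k e^{s_j/T}$. Since each $s_j\in[\mu-\varepsilon,\mu+\varepsilon]$ and $T>0$, monotonicity of the exponential gives $e^{(\mu-\varepsilon)/T}\le e^{s_j/T}\le e^{(\mu+\varepsilon)/T}$. Summing the $k$ terms yields
\begin{equation*}
k e^{(\mu-\varepsilon)/T}\le Z_S\le k e^{(\mu+\varepsilon)/T}.
\end{equation*}
This is the same kind of control as the $\max$ bounds in Proposition~\ref{prop:decomposition}, but here both sides are sharp in terms of $k$. Dividing by $Z_C$ cancels $\mu$ and gives
\begin{equation*}
k e^{-(\Delta+\varepsilon)/T}\le Z_S/Z_C\le k e^{-(\Delta-\varepsilon)/T}.
\end{equation*}
Substituting into $1/(1+x)$ and reversing the inequalities gives exactly Equation~\ref{eq:robust_overwrite_bound}.

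For the final sentence of the statement, set $\varepsilon=0$. Both exponents then equal $-\Delta/T$, so the bounds coincide at $1/(1+ke^{-\Delta/T})$. Directly, $Z_S=ke^{\mu/T}$ holds exactly in this case, which confirms the equality.

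There is no real obstacle in this argument. The only point needing care is bookkeeping the direction of the inequalities when passing through the decreasing map $1/(1+x)$. In particular, the $-\varepsilon$ in the worst-case exponent must end up in the lower bound on $\pi$, not the upper.
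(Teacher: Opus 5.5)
Your proof is correct and matches the paper's argument: you write $\pi_{C\mid C\cup S}=1/(1+Z_S/Z_C)$, bound each summand $e^{(s_j-s_c)/T}$ between $e^{-(\Delta+\varepsilon)/T}$ and $e^{-(\Delta-\varepsilon)/T}$, and then pass through the decreasing map $x\mapsto 1/(1+x)$. The inequality directions and the $\varepsilon=0$ case are handled correctly.
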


\begin{proof}
Dividing old-value partition mass by current mass gives
$Z_S/Z_C=\sum_{j=1}^{k}e^{(s_j-s_c)/T}$.  Bounding every summand by the two
endpoints yields
$k e^{-(\Delta+\varepsilon)/T}\leq Z_S/Z_C\leq
k e^{-(\Delta-\varepsilon)/T}$, and substitution into
$\pi_{C\mid C\cup S}=1/(1+Z_S/Z_C)$ proves the result.
\end{proof}

\begin{corollary}[Retention and selection can separate]
\label{cor:separation}
For fixed finite $\Delta$ and $\varepsilon$, the current trace and its score
may remain unchanged while $\pi_{C\mid C\cup S}\rightarrow0$ as
$k\rightarrow\infty$.
\end{corollary}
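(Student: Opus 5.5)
The plan is to read the corollary off the upper half of the Multiplicity bound (Proposition~\ref{prop:multiplicity}). Because $\Delta$ and $\varepsilon$ are fixed and finite, the quantity $c:=e^{-(\Delta+\varepsilon)/T}$ is a fixed strictly positive constant that does not depend on $k$. Equation~\ref{eq:robust_overwrite_bound} then gives
\begin{equation*}
0\le \pi_{C\mid C\cup S}\le \frac{1}{1+k\,c}.
\end{equation*}
The right-hand side tends to $0$ as $k\to\infty$, so $\pi_{C\mid C\cup S}\to 0$ by squeezing. Nonnegativity is immediate from the definition $\pi_{C\mid C\cup S}=Z_C/(Z_C+Z_S)$.

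The second step is to make precise the claim that ``the current trace and its score may remain unchanged.'' I would construct an explicit sequence of configurations indexed by $k$. In each, the current assignment keeps the same score $s_c=\mu+\Delta$ and the same copied value $v_c$. We obtain configuration $k+1$ from configuration $k$ by appending one old assignment whose score lies in $[\mu-\varepsilon,\mu+\varepsilon]$ (the simplest choice is exactly $\mu$), leaving all existing scores fixed. This matches the protocol of Proposition~\ref{prop:groupodds}, which also yields monotone decrease of the current-to-old odds along the sequence. Under the copy model of Equation~\ref{eq:copy}, the current value stays present with an unchanged score at every $k$: nothing is erased, yet its selection probability against the old-value group vanishes. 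Since $\pi_{C\mid C\cup S}$ is the probability conditioned on choosing between the current and old groups, other-variable candidates do not matter here.

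No step is a real obstacle. The only care needed is in stating the hypotheses: the bounds on $\Delta$ and $\varepsilon$ must be uniform in $k$, since a $\Delta$ that grows with $k$ like $T\log k$ would break the conclusion. I would also note that the ``may'' in the statement is existential: the bound holds for every admissible sequence, and the constant-score construction shows that such sequences exist.
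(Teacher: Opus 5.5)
Your proposal is correct and matches the paper's route. The paper gives no separate proof; the corollary is read off the upper half of Proposition~\ref{prop:multiplicity}, with $k$-independent $\Delta$, $\varepsilon$ forcing $\pi_{C\mid C\cup S}\le 1/(1+k e^{-(\Delta+\varepsilon)/T})\to 0$ while $s_c=\mu+\Delta$ stays fixed, and your caveat about a $\Delta$ growing with $k$ is the same remark the paper makes right after the corollary.
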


An exact last-write operator could increase $\Delta$ with $k$ and escape the fixed-margin regime, and the corollary says only that preserving a representation
is insufficient when its selection margin does not grow as quickly as the
competing partition mass.

\subsection{Selecting the latest value requires order information}

A second special case makes the same point without any score model. If positions
are uniformly random and the readout is order-blind --- the extreme opposite of a
learned recency term --- the current value is selected only at chance.

\begin{proposition}[Order-invariance excludes latest-binding]
\label{prop:order}
Let the queried variable have $m=k+1$ assignments whose distinct values are assigned to
temporal positions uniformly at random, independent of contents (as in the
controlled generators).  If the score of each assignment depends only on its
content and non-temporal context, then the probability of selecting the latest
assignment, conditioned on selecting a value of the queried variable, is exactly $1/m$.
\end{proposition}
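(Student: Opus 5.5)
The argument is a symmetry (exchangeability) argument, and it can be made without assuming any particular form for the readout beyond the stated independence. Formalize the setup as follows. There are $m$ distinct value-carrying assignments $v_1,\dots,v_m$ with fixed contents, and a uniformly random permutation $\sigma$ of the temporal slots, where $\sigma(i)$ is the temporal rank of content $v_i$ and rank $m$ means latest. Drawing $\sigma$ independently of the contents matches the controlled generators. By hypothesis, the score $s_i$ of content $i$ is a function of its content and non-temporal context only, so the vector $(s_1,\dots,s_m)$ does not depend on $\sigma$. As in Equation~\ref{eq:copy}, the conditional selection probability given that a queried-variable value is chosen is
\[
p_i=\frac{e^{s_i/T}}{\sum_{j=1}^{m}e^{s_j/T}},
\]
which is therefore also independent of $\sigma$. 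Any other candidates change only the shared normalizer, which cancels after conditioning, in the same way as in Corollary~\ref{cor:length}.

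Next, write the probability of selecting the latest assignment as $\sum_i p_i\,\mathbf 1[\sigma(i)=m]$ and take the expectation over $\sigma$. Since $p_i$ is constant in $\sigma$ and $\Pr(\sigma(i)=m)=1/m$ under a uniform permutation, the expectation equals $\frac1m\sum_i p_i=1/m$. The result is exact, with no approximation, and it holds for any content-dependent scores, including unequal ones. The same argument covers a stochastic readout: if the scores are random but jointly independent of $\sigma$, condition on the scores first and then average.

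The main obstacle is interpretive rather than computational. The claim must be an average over the random placement, because for a fixed placement the latest assignment can have probability far from $1/m$. I will therefore state explicitly that \emph{probability} means the joint probability over positions and the readout. I will also stress that independence from position must hold for the entire score, including context features, since otherwise temporal information could leak in through non-temporal context. I would close with the remark that any systematic above-chance latest-binding must therefore come from order-dependent score components, such as the recency terms studied in Proposition~\ref{prop:rope}.
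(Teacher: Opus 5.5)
Your proposal is correct and follows essentially the same argument as the paper. Conditional on contents, the order-blind readout fixes the selection probabilities $p_i$, and the uniform placement makes each assignment the latest with probability $1/m$ independently of them, so $\sum_i p_i\cdot\tfrac1m=\tfrac1m$. Your remarks that the probability is an average over placements, and that context features must not leak temporal order into the scores, state assumptions the paper's proof leaves implicit.
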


\begin{proof}
Conditioned on contents, an order-blind readout fixes the selection
distribution over assignments, while the uniform ordering makes each assignment
equally likely to be the latest, independently of that distribution. Hence
$\Pr[\text{latest selected}]=\sum_i\Pr[\text{select }i]\cdot\tfrac1m=\tfrac1m$.
\end{proof}

\subsection{Competition between old values and other variables}

Write each effective score as a same-variable identity bonus plus remaining
contextual evidence. If old values of the queried variable share an identity
advantage $b>0$, let $\widetilde F_S$ and $\widetilde F_O$ denote the remaining
contextual contributions to the old-value and other-variable group scores.
Then
\begin{equation}
 F_S=b+\widetilde F_S,\qquad F_O=\widetilde F_O,\qquad
 \log\frac{\pi_S}{\pi_O}
 =\frac{b+\widetilde F_S-\widetilde F_O}{T},
 \label{eq:identity_recency_odds}
\end{equation}
so the boundary between an old-value error and an other-variable error is
\begin{equation}
 \widetilde F_O-\widetilde F_S=b.
 \label{eq:identity_recency_boundary}
\end{equation}
If moving an old value nearer raises $\widetilde F_S$, it strengthens the
old-value group without changing the identity bonus $b$. A nearby value from
another variable can instead raise $\widetilde F_O$ enough to offset that
bonus. This describes the competition in the factorial experiment, but it does not derive a distance-dependent score from RoPE. The candidate counts also
matter through Equation~\ref{eq:entropy_decomposition}.

\subsection{Connection to the mechanism experiments}

The distinction between information availability and selection is central to our theoretical model. A current value can remain perfectly represented in the hidden state even as competing, outdated values accumulate greater attention weight. Diagnostic probes confirm this availability, while our attention and component-replacement experiments isolate the mechanisms of selection. Similarly, modifying final-layer scores can change the output without repairing upstream computations. While our model formally captures these dynamics, it does not attempt to predict the precise, multi-layer computations of a full network.
For example, providing a readout advantage $\delta$ to the current value theoretically reduces old-to-current selection odds by a factor of $e^{-\delta/T}$. This dynamic illustrates how answering formats, such as multiple-choice prompts, might shift the error distribution, though it does not prove that multiple choice operates via a single scalar offset. Likewise, the failed linear prediction of identity and recency (Appendix~\ref{app:geometry}) evaluates a residual-state representation rather than the fixed-head rotary calculation modeled above.

\paragraph{Mechanistic context.}
Our findings complement existing accounts of Transformer retrieval, including induction heads, key–value memories, and factual-association localization \citep{olsson2022induction,geva2021transformer,meng2022locating}. We rely on Pythia checkpoints to analyze the emergence of these computations \citep{biderman2023pythia}, and we utilize shuffled-label controls to validate probe recoverability \citep{hewitt2019probes}. Furthermore, while methods like ROME and MEMIT permanently edit model weights to update facts \citep{meng2022locating,meng2023memit}, our intervention strategy modifies attention directly during inference to observe how dynamic selection shifts the final answer.

\subsection{Connections to human memory research}

We draw a computational analogy to human memory rather than claiming a shared biological mechanism. Tulving's classical distinction between information availability and accessibility provides a foundational framework for this analysis \citep{tulving1966availability}. In proactive-interference experiments, earlier associations directly compete with later ones \citep{underwood1957interference}; similarly, response-competition models distinguish the underlying retention of information from its successful retrieval \citep{mcgeoch1932forgetting}. Phenomena such as cue overload, the fan effect, and the SAM sampling model further describe how multiple stored associations compete during retrieval \citep{watkins1975cueoverload,anderson1974fan,raaijmakers1981sam}. These cognitive parallels motivate our framing of LLM state-tracking errors as a competition among retained associations.

Additionally, temporal-context models and pattern separation describe mechanisms for distinguishing highly similar experiences \citep{howard2002temporal,yassa2011pattern}, while release from proactive interference following a category shift closely parallels our observations regarding variable identity \citep{wickens1970release}. A critical divergence, however, is that an LLM's context window leaves both earlier and updated statements perfectly visible during processing. Because prompt visibility does not guarantee representation within the model's hidden states, our diagnostic probes and attention interventions remain essential. The cognitive correspondence provides a valuable conceptual lens for formulating hypotheses, but it does not replace rigorous mechanistic testing.

\FloatBarrier
\section{Additional Results}
\label{app:extended}

We first give concrete examples and independent behavioral comparisons, then follow
the mechanism from retained information to attention, intervention, and
timing. Table~\ref{tab:mechanism-evidence-map} separates claims by model and comparison, and the following sections provide their evidence.

\begin{table}[htbp]
\centering
\footnotesize
\setlength{\tabcolsep}{4pt}
\caption{Mechanistic claims, models, and comparisons. An untested comparison
is not a null result.}
\label{tab:mechanism-evidence-map}
\begin{tabularx}{\textwidth}{@{}p{0.24\linewidth}p{0.22\linewidth}X@{}}
\toprule
Claim & Models & Evidence and interpretation \\
\midrule
Current value readable on old-value errors & Qwen-7B, Llama-8B; separate Pythia-160M test
& Held-out probes versus shuffled labels; recoverability despite the model's answer. \\
Known-answer direction controls output & Qwen-7B, Llama-8B
& Final-layer steering versus equal-norm random and middle-layer changes; a diagnostic intervention. \\
More old-value attention on failures & Qwen-7B; not detected in Llama-8B
& Failure versus correct, length-adjusted: +19.4 pp versus +1.2 pp (Llama interval includes zero). \\
Conflicting updates change internal matching & Five models with reliable controls in Appendix~\ref{app:stage-p}
& Update versus matched no-update prompts, including correct answers; Gemma-2B has an unreliable control. \\
Specific components contribute causally & Qwen-7B; Pythia-160M
& Old-assignment key replacement in Qwen; held-out head removal and key-input replacement in Pythia. \\
\bottomrule
\end{tabularx}
\end{table}

\subsection{Examples of old-value and other-variable errors}
\label{app:failure-examples}

The following excerpts retain the original prompt and response wording, and only the bracketed dialogue spans are omitted. They show the old-value and
other-variable outcomes summarized in the main text.

\newcommand{\turntag}[1]{\hfill{\scriptsize\textcolor{neutralink}{#1}}}
\par\begingroup\noindent
\vspace{1mm}
{\footnotesize\textbf{A clarified CICM dialogue and its answer.} Selected user
turns from a 44-message dialogue; intervening messages are omitted. Distances
count messages before the question.}\par
\begin{userturn}
\textbf{User.} I think \textcolor{staleink}{\textbf{visual}} works best for me overall.
\turntag{old value $\cdot$ 40 before}
\end{userturn}
\begin{userturn}
\textbf{User.} I actually prefer \textcolor{currentink}{\textbf{reading}} when I'm trying to understand something new.
\turntag{current value $\cdot$ 36 before}
\end{userturn}
\begin{userturn}
\textbf{User.} I'd like to update my preference to \textcolor{crossink}{\textbf{electronic}}.
\turntag{another variable (music) $\cdot$ 10 before}
\end{userturn}
\begin{userturn}
\textbf{User.} For the record, \textcolor{staleink}{\textbf{visual}} was an earlier choice for my learning style, and is no longer current.
\par\noindent
\turntag{explicitly obsolete $\cdot$ 2 before}
\end{userturn}
\begin{userturn}
\textbf{User.} For my learning style specifically, what value did I set in my most recent explicit update? Later mentions of earlier choices are not updates. Ignore preferences for other slots. Reply with only the value.
\end{userturn}
\nopagebreak[3]
\begin{assistantturn}
\textbf{Qwen2.5-7B.} \textcolor{staleink}{\textbf{visual}}
\end{assistantturn}
{\footnotesize\textbf{Expected:} \textcolor{currentink}{\textbf{reading}}\quad
\textbf{GPT-4o on the same prompt:} \textcolor{currentink}{\textbf{Reading}}.
This uses both clarification treatments in Appendix~\ref{app:cicm-validity-results};
the original source dialogue has one earlier value of the queried variable.}\par
\vspace{1mm}
\par\endgroup

\par\begingroup\noindent
\textbf{An anchored query returns a value of another variable}\par\nopagebreak[4]
\begin{systemturn}
\textbf{System.} You are a helpful assistant in an ongoing preference-planning conversation.
\end{systemturn}

\begin{userturn}
\textbf{User.} Please set my favorite genre as \textcolor{staleink}{\textbf{folk}}.
\end{userturn}
\nopagebreak[3]

\begin{assistantturn}
\textbf{Assistant.} Your choice has been saved.
\end{assistantturn}

\begin{userturn}
\textbf{User.} Switch my music category to \textcolor{currentink}{\textbf{classical}}.
\end{userturn}
\nopagebreak[3]

\begin{assistantturn}
\textbf{Assistant.} The update is complete.
\end{assistantturn}

\begin{omittedturn}
\centering [22 turns omitted]
\end{omittedturn}

\begin{userturn}
\textbf{User.} Can you set my meal style to \textcolor{crossink}{\textbf{pescatarian}}?
\end{userturn}
\nopagebreak[3]

\begin{assistantturn}
\textbf{Assistant.} Absolutely, I've saved your updated selection.
\end{assistantturn}

\begin{omittedturn}
\centering [2 turns omitted]
\end{omittedturn}

\begin{userturn}
\textbf{User.} Look back at the updates for my music genre. What is the current value for that exact slot, not any other preference? Reply with only the value.
\end{userturn}
\nopagebreak[3]

\begin{assistantturn}
\textbf{Assistant.} \textcolor{staleink}{\textbf{pescatarian}}
\end{assistantturn}

{\footnotesize\textbf{Expected:} \textcolor{currentink}{\textbf{classical}}\quad\textbf{Model:} \textcolor{staleink}{\textbf{pescatarian}}}\par
\par\endgroup\par
\vspace{1mm}
{\footnotesize Gray, blue, and sand boxes denote system, user, and assistant turns. Violet text marks the produced old or otherwise incorrect value; green text marks the current target.}

\subsection{Which old value is returned?}
\label{app:which-old}
We re-score saved responses by the queried variable's assignment order.
Rank 1 is the most recent old assignment and rank $k$ is the first write, and the current assignment is excluded. Parsing every synthetic prompt verifies
that \texttt{stale\_values} follow chronological write order. For CICM,
message positions and value spans verify the recorded binding order and that
each near-old re-mention repeats the most recent old value.

\begin{table}[htbp]
\centering\small
\caption{Most-recent-old responses, conditional on an old-value error.
Qwen-7B pools 20/40/80-line prompts; the larger models use 320 lines.
The first write is never returned in these cells.}
\label{tab:which-old}
\begin{tabular}{@{}lrrr@{}}\toprule
Model & $k$ & Most recent old / old errors & Uniform reference \\
\midrule
Qwen2.5-7B & 2 & 54/54 (100.0\%) & 50.0\% \\
Qwen2.5-7B & 4 & 92/113 (81.4\%) & 25.0\% \\
Qwen2.5-7B & 8 & 72/164 (43.9\%) & 12.5\% \\
Qwen2.5-72B & 64 & 7/11 (63.6\%) & 1.6\% \\
Llama-3.1-70B & 64 & 8/16 (50.0\%) & 1.6\% \\
GPT-4o & 64 & 3/7 (42.9\%) & 1.6\% \\
\bottomrule\end{tabular}
\end{table}

These are descriptive, post-hoc shares conditional on an old-value error, and $1/k$ is a uniform-choice reference, not an estimated error model. The larger-model
cells have few old-value errors. In the original CICM core, for $k=2,3,4,6$,
the re-mentioned value accounts for 94.5--100\% of old-value responses across
both models. Without it, rank 1 accounts for 66.7--100\% in Qwen and
44.2--61.1\% in Llama. This separates recency among assignments from the
additional effect of re-mentioning one of them.

\subsection{Independent community benchmarks}
\label{sec:community-benchmarks}

We test whether related binding errors occur in datasets that were not
constructed for this project. BABILong and Entity Tracking permit a strict
old-state label: a wrong prediction is an old-value error only when it exactly equals
an earlier state of the queried entity or box
\citep{kuratov2024babilong,kim-schuster-2023-entity}. Under audited scoring,
1.4\% of BABILong baseline errors and 25.9\% of Entity Tracking errors meet this
strict definition. Most BABILong errors therefore do not fit this definition.

RULER variable tracking measures a different failure. Its output often includes
values from the wrong reference chain: 99.5\% of strict baseline failures have a
cross-chain inclusion, while mean per-variable recall is 36.7\%
\citep{hsieh2024ruler}. We do not relabel these as old-value errors because the task tests
chain tracing rather than an old state. These benchmarks extend the evaluation beyond our templates, but test
different kinds of information tracking.

The baseline evaluations contain 1,200 BABILong, 600 Entity Tracking, and
400 RULER-VT examples. Their respective accuracy, set accuracy, and
per-variable recall are 46.2\%, 47.3\%, and 36.7\%.

\subsection{ICF-Bench: scoring and answer format}
\label{sec:icf-scoring}
\label{app:behavior-details}

ICF-Bench uses scenario-specific scoring rather than the controlled taxonomy
in Section~\ref{sec:formulation}. Dynamic Preference multiple choice is scored
programmatically. The earlier deterministic matchers were checked on 150
released human-alignment items per scenario, with agreement of 93.0\% for
Dynamic Preference and 92.3\% for Instructional Forgetting.

\paragraph{Free-form preference labels and uncertainty.}
Exact and unambiguous Dynamic Preference responses are classified deterministically, and GPT-4o assigns the remaining 577 of 783 responses to
current, old, or other using the known preferences. The resulting old-value
share is 197/240 = 82.1\% of errors: deterministic labels contribute 111 old
answers, and judged labels contribute 86 old and 43 other answers. On the
40-example human calibration set, hybrid labels agree on 36 examples, and only 13 examples are human-labeled failures. All four disagreements change
current versus other, so uncertainty affects the failure denominator too.

The 95\% bootstrap interval 77.1--87.1\% conditions on assigned labels and
does not include annotation error. As a sensitivity analysis, holding
deterministic labels fixed but allowing up to 10 of the 577 judge labels to
change arbitrarily gives an old-value share of 77.9--86.2\%, and allowing 20 gives 73.8--90.4\%. These are assumption-indexed ranges, not confidence
intervals or estimated annotation error rates. The small calibration set
does not establish a population error rate or a statistical upper bound.

\label{app:icf-results}
Free-form and multiple-choice accuracies are 69.3\% and 70.9\%, respectively,
but the labeled old-value shares differ: 82.1\% versus 29.4\% (the latter's
95\% interval is 23.2--35.5\%). Table~\ref{tab:discrete-summary-full} reports the point estimates, and the label-conditional qualification applies to all
free-form estimates, including Figure~\ref{fig:phenomenon}c.

\begin{table}[htbp]
\centering
\footnotesize
\setlength{\tabcolsep}{5pt}
\caption{Accuracy and old-value errors on ICF-Bench under different
answer formats. Both result columns are percentages; the last is conditional
on an error.}
\label{tab:discrete-summary-full}
\begin{tabularx}{\textwidth}{@{}Xlrrr@{}}
\toprule
Model / task & Answer format & $n$ & Accuracy & Old / errors \\
\midrule
\multicolumn{5}{@{}l}{\textit{ICF-Bench, Qwen2.5-7B}} \\
Dynamic Preference & Free-form & 783 & 69.3 & 82.1 \\
Dynamic Preference & Multiple choice & 783 & 70.9 & 29.4 \\
\bottomrule
\end{tabularx}
\vspace{3pt}\parbox{\linewidth}{\footnotesize
Dynamic Preference free-form labels partly use a model
judge; their sampling intervals condition on these labels. Instructional
Forgetting's separate content-reuse audit appears in
Appendix~\ref{sec:icf-scoring}.}
\end{table}

\paragraph{Instructional Forgetting.}
A separate content-reuse audit flags 918/933 saved responses (98.4\%).
Disclosure takes precedence over a subsequent promise to withhold the content.
This is a programmatic count, not validated semantic accuracy: a targeted
source audit finds both missed content matches and ambiguous remaining labels.
We therefore exclude it from the cross-task accuracy comparison.

\subsection{Retained information and attention on CICM failures}
\label{app:retention-attention}

Both models retain information about the current value on failed trials,
but only Qwen shows a clear shift toward old assignments in this attention
measure. Figure~\ref{fig:qk-attention}b compares Qwen's 471 correct and 589
old-value answers. Table~\ref{tab:retention-attention} separates the probe's probability
for the correct value from the attention allocated to old mentions, and these are different measurements, not two estimates of accuracy.

\begin{table}[htbp]
\centering
\footnotesize
\setlength{\tabcolsep}{5pt}
\caption{Current-value information remains readable on CICM failures.
Probe entries are mean probabilities assigned to the current value, expressed
as percentages, not classification accuracy.}
\label{tab:retention-attention}
\begin{tabular}{@{}lr rrrr@{}}
\toprule
 & & \multicolumn{2}{c}{Probe score (\%)} &
 \multicolumn{2}{c}{Failure $-$ correct (pp)} \\
\cmidrule(lr){3-4}\cmidrule(l){5-6}
Model & $n$ & Old-value errors & Other-variable errors & Probe & Attention \\
\midrule
Qwen2.5-7B   & 1200 & 84.8 & 86.5 & $-7.5$ & $+19.4$ \\
Llama-3.1-8B & 1200 & 82.2 & 86.3 & $-7.3$ & $+1.2$ \\
\bottomrule
\end{tabular}
\vspace{3pt}\parbox{\linewidth}{\footnotesize
The two error columns distinguish answers using an old value of the queried
variable from answers using another variable's value. The difference columns
compare old-value errors with correct answers, adjusting for prompt length;
attention is the share on old assignments. Label-randomized probe scores
range from 3.2\% to 6.4\%. Here $n$ is the full dialogue set, not each error subset.}
\end{table}

After accounting for prompt length, the old-attention difference between
failed and correct trials has a 95\% interval of 17.9--21.0 percentage points
in Qwen and $-0.5$--2.8 in Llama. The latter does not distinguish a shift from
zero. High probe scores on both models' failures therefore support retained
information, not a shared attention-level explanation.
Appendix~\ref{app:probe-method} defines the probe and the length adjustment.

\paragraph{Single-write retrieval and update selection.}
\label{app:retrieval-head-comparison}
We compare the update-task head ranking with a simplified retrieval-head
detector based on \citet{wu2025retrievalheads}. On 200 fresh single-write
prompts (100 each at 40 and 80 lines, seed 30), Qwen2.5-7B-Instruct achieves
97.5\% accuracy and produces 771 copied answer tokens. Among these copied
tokens, a head's retrieval score is the fraction for which its maximum-attention
position lies in the target value span and contains the generated token. The update-task score
$p_{\mathrm{last}}$ is the attention mass on the latest assignment divided
by mass on all assignments of that variable, averaged over correctly
answered overwrite prompts. Only two of the top eight $p_{\mathrm{last}}$
heads appear among the top sixteen retrieval heads (L22H1 and L23H11, zero-based indices), and the all-head Spearman correlation is $-0.021$.
Thus single-write copying and selection among competing writes produce
different head rankings, with some shared components.

\subsection{Which computations change the answer?}
\label{app:component-replacements}

Replacing late hidden states or the keys of old assignments recovers much
more of Qwen's answer-score gap than replacing early or middle hidden states.
For each failed prompt, we construct a successful counterpart with the same
candidate values and positions, renaming earlier assignments to remove
competition for the queried variable. We then replace one component at a time
with its counterpart from the successful run
(Table~\ref{tab:patch-attribution-full}). The final attention row is an
observed difference, not an intervention.

\begin{table}[htbp]
\centering
\footnotesize
\setlength{\tabcolsep}{5pt}
\caption{Component replacements and attention differences in Qwen2.5-7B.
Recovery is the percentage of the successful--failed answer-score gap restored.}
\label{tab:patch-attribution-full}
\begin{tabular}{llrrr}
\toprule
Component & Measurement & Mean & Median & $n$ \\
\midrule
\multicolumn{5}{@{}l}{\textit{Answer-score-gap recovery (\%)}} \\
Residual stream & Early / middle layers & $-0.5$ & 0.0 & 600 \\
                & Late layers           & 78.1 & 93.9 & 360 \\
\addlinespace
Key             & Old assignments       & 49.2 & 44.8 & 120 \\
                & All assignments       & 11.1 & 10.7 & 120 \\
                & Current assignment    & $-21.3$ & $-19.5$ & 120 \\
Query           & Final                 & 11.2 & 10.9 & 120 \\
\addlinespace
Head / MLP      & Head-output proxy     & 36.0 & 34.8 & 120 \\
                & Late MLP output       & 13.2 & 10.5 & 960 \\
\addlinespace
\multicolumn{5}{@{}l}{\textit{Successful $-$ failed attention (percentage points)}} \\
Selected heads & Attention to current value & 13.0 & 13.1 & 120 \\
\bottomrule
\end{tabular}
\vspace{3pt}\parbox{\linewidth}{\footnotesize
For recovery, $n$ counts replacements across 120 matched pairs, not independent
prompts; negative values move the answer further toward the old value.
The head-output row uses a projection-level proxy. The final row is an
observed attention difference, not a replacement experiment.}
\end{table}

\subsection{Steering at the answer position}
\label{app:steering}

The targeted change becomes more effective as its strength increases at the
final layer, unlike an equally large change in a random direction
(Figure~\ref{fig:causality}). At strength 0.4, its advantage over random is
39.5 percentage points for Qwen and 69.5 for Llama (95\% intervals:
35.8--43.3 and 66.3--72.9). Middle-layer effects are much smaller
(Table~\ref{tab:steering-summary}).

The outcome is persistence of each item's original error category, not
old-value rate alone: the selected cohorts contain 589 old-value and 76
other-variable errors for Qwen, and 634 and 88 for Llama. Local no-change
decoding reproduces the original category on 643/665 and 622/722 items.
These rates differ from 100\% because selection used earlier API labels, and the zero-strength hook exactly reproduces the local no-hook response on
every item. At strength 0.4, targeted-minus-random current-answer gains
are 35.5 points in Qwen ($[31.9,39.1]$) and 70.5 in Llama ($[67.2,73.8]$),
so reduced category persistence is accompanied by more correct answers.

These interventions use the known current value to choose the direction.
They test whether the answer can be changed, rather than providing a
general correction method that can infer the right value unaided.
Equation~\ref{eq:steer} defines the intervention and its strength.

\begin{figure}[htbp]
  \centering
  \includegraphics[width=\textwidth]{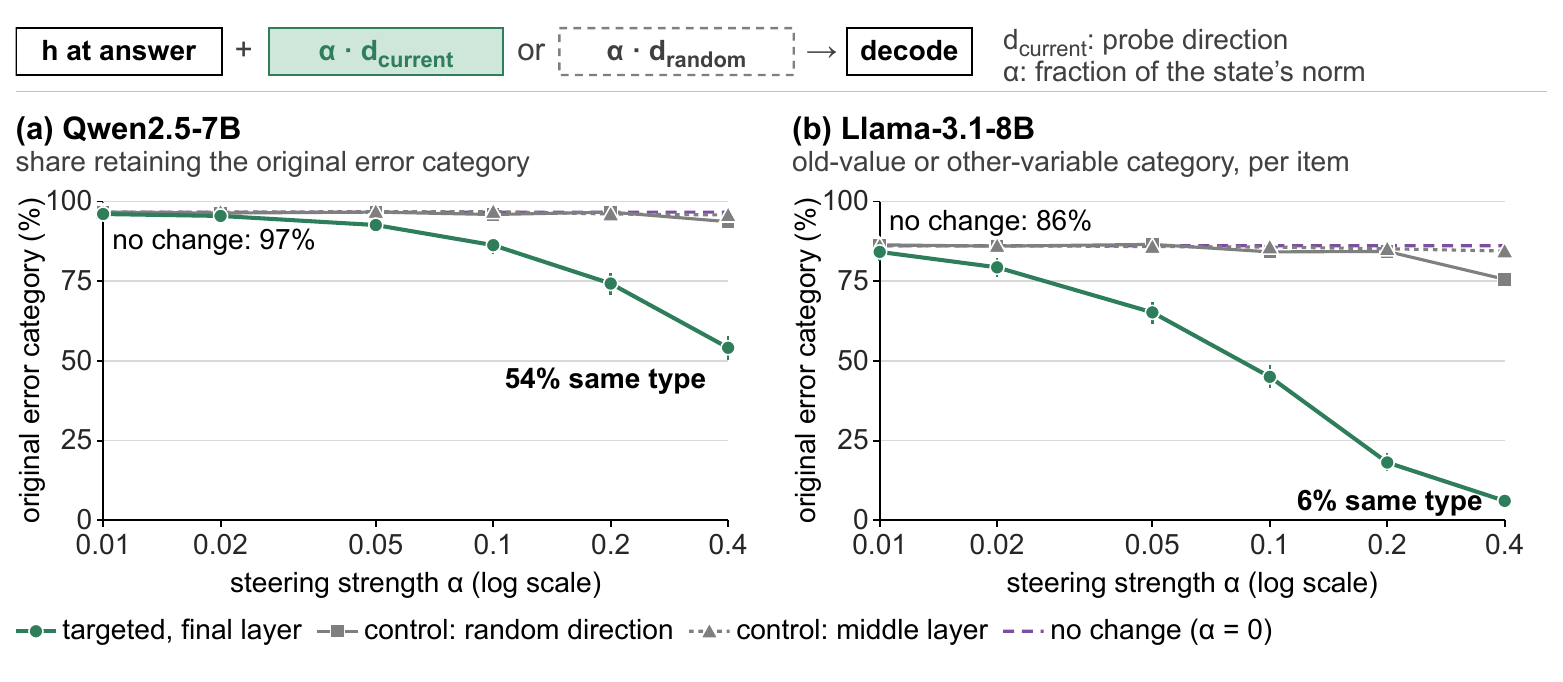}
  \caption{Steering with the known current value. The outcome is persistence
of each prompt's original error category (old value or other-variable value),
not old-value rate alone. The no-change rates are 643/665 (97\%) for Qwen
and 622/722 (86\%) for Llama: local re-decoding need not reproduce the earlier
API label, but $\alpha=0$ exactly matches the local no-hook response.
Bars: 95\% Wilson intervals. Paired comparisons and cohort definitions in
Appendix~\ref{app:steering}.}
  \label{fig:causality}
\end{figure}

\begin{table}[htbp]
\centering
\small
\setlength{\tabcolsep}{8pt}
\caption{Reduction in original-error-category persistence from targeted steering
beyond an equally large random change. Entries are percentage-point differences at strength $\alpha=0.4$.}
\label{tab:steering-summary}
\begin{tabular}{@{}lrrr@{}}
\toprule
Model & Failed trials & Middle layer & Final layer \\
\midrule
Qwen2.5-7B   & 665 & $-1.1$ & $39.5$ \\
Llama-3.1-8B & 722 & $1.8$ & $69.5$ \\
\bottomrule
\end{tabular}
\vspace{3pt}\parbox{0.95\linewidth}{\footnotesize
Positive values mean less persistence of the originally observed error category
(old value or other-variable value) under targeted steering. Both interventions are evaluated on the same failed trials.
The full dose response appears in Figure~\ref{fig:causality}.}
\end{table}

\subsection{Tracing attention and its inputs in Pythia-160M}
\label{app:stage-n-details}

A small set of attention heads accounts for most of the measured recovery,
but removing the heads that favor the current value has no specific effect
beyond random removal. We test Pythia-160M with three few-shot templates,
three seeds, and single-token values. At four overwrites, the behavioral
sample contains 648 prompts: 466 correct, 142 old-value, and 40 other
responses (71.9\% accuracy, with old values accounting for 78.0\% of errors).
The mechanism subset balances 142 correct and 142 old-value responses
within template--seed strata, and 96 matched successful--failed pairs pass
the component-replacement checks. After adjustment for prompt length, the probe assigns the current
value 32.0 percentage points less probability on failed than correct trials
(95\% interval: 23.8--39.6 points less).

We rank heads on 43 discovery pairs and evaluate replacements on 53 held-out
pairs. Replacing twelve of the 144 heads recovers 0.81 of the answer-score
change obtained by replacing all heads (95\% interval: 0.73--0.88).
The largest individual contributions are 0.36 for L8H10, 0.19 for L8H2, and
0.10 for L10H7. Recovery levels off near 0.86 as more ranked heads are added.
Table~\ref{tab:stage-n-circuit} separates where these heads attend (QK) from
what their value vectors transmit (OV): on errors, the QK score difference
turns against the current value, while its OV contribution changes little.

\begin{table}[htbp]
\centering
\small
\setlength{\tabcolsep}{6pt}
\caption{Two Pythia-160M heads attend less to the current value on old-value
errors, while their value-transmission margins remain positive.
Each column averages 48 examples.}
\label{tab:stage-n-circuit}
\begin{tabularx}{\textwidth}{@{}lXrr@{}}
\toprule
Head & Measurement & Correct answer & Old-value answer \\
\midrule
L8H10 & Current-value QK margin & $1.70$ & $-0.26$ \\
      & Attention to current value & $0.75$ & $0.40$ \\
      & Current-value OV margin & $0.67$ & $0.64$ \\
\addlinespace
L8H2  & Current-value QK margin & $1.86$ & $-6.31$ \\
      & Attention to current value & $0.57$ & $0.08$ \\
      & Current-value OV margin & $0.20$ & $0.18$ \\
\bottomrule
\end{tabularx}
\vspace{3pt}\parbox{\linewidth}{\footnotesize
The QK margin is the current assignment's attention score minus the strongest
old assignment's score. The OV margin measures how strongly the current
assignment's value vector favors its own token over the strongest alternative,
before attention weighting. Attention entries are weights, not margins.
L8H10: layer 8, head 10.}
\end{table}

\paragraph{An empirical example of comparable attention odds.}
\label{app:pairwise-example}
In Pythia-160M head L8H10, the mean current-minus-strongest-old query--key
margin over 48 old-value failures is $-0.26$. With the standard attention
softmax ($T=1$), this gives geometric-mean pairwise attention odds of about
$0.77{:}1$, only a $1.30{:}1$ advantage for the old assignment.
Mean attention is 0.40 for the current assignment and 0.60 for old assignments
combined (Table~\ref{tab:stage-n-circuit}). Thus comparable attention odds
occur in a measured head, not only in the theoretical example. They are
also answer odds under the faithful-copy assumption of
Section~\ref{sec:theory}, but this assumption need not hold for the full model.
These measurements do not isolate the contribution of RoPE.

Removing heads selected to promote old values corrects 38.4\% of held-out
errors, compared with 8.7\% for layer-matched random sets
(Table~\ref{tab:stage-n-interventions}). The targeted bootstrap interval is 27.4--49.3\%, and the empirical 95\% range across 64 random sets is 0.8--29.9\%.
An earlier estimate used the same examples for selection and evaluation
and was higher by 6.4 points, and we use the held-out estimate here. Removing
heads selected to promote the current value is no more effective than random
removal at inducing old-value errors. Replacing queries and keys separately
implicates L8H2's keys: the query-minus-key recovery interval is
$[-1.62,-1.00]$, whereas L8H10's interval, $[-2.40,4.53]$, does not resolve
the two contributions. Replacing the upstream inputs to L8H2's key recovers
0.86 of the QK score gap and 0.75 of the final answer-score gap, and the corresponding random-set upper 95\% bounds are 0.16 and 0.17.
The upstream components, ranked on discovery data, are L7H7, L6H2, L7H4,
L5H11, L4H11, L5H10, L3H0, L6H6, L5H8, L3-MLP, L3H4, and L4H1.
These results identify inputs that affect key matching, but the tests do not
establish that these components encode assignment order.

\begin{table}[htbp]
\centering
\small
\setlength{\tabcolsep}{5pt}
\caption{Held-out interventions in Pythia-160M. Random controls use equally
sized sets matched by layer; entries are mean fractions or normalized recovery.}
\label{tab:stage-n-interventions}
\begin{tabularx}{\textwidth}{@{}>{\raggedright\arraybackslash}p{0.30\textwidth}Xrr@{}}
\toprule
Intervention & Measured effect & Targeted & Random \\
\midrule
Remove heads favoring old values & Fraction of errors corrected & 0.38 & 0.09 \\
Remove heads favoring the current value & Fraction switching to an old value & 0.04 & 0.03 \\
\addlinespace
Replace inputs to L8H2's key & QK score-gap recovery & 0.86 & 0.04 \\
                           & Answer-score-gap recovery & 0.75 & 0.06 \\
\bottomrule
\end{tabularx}
\vspace{3pt}\parbox{\linewidth}{\footnotesize
Head removal is tested on 73 old-value errors or 81 correct answers,
respectively; key-input replacement uses 53 held-out pairs. Component sets
are selected on separate discovery examples.
Recovery is normalized by the difference between successful and failed runs:
0 means no recovery and 1 means full recovery. Random entries are means, not
upper bounds.}
\end{table}

\subsection{Testing a linear account of identity and recency}
\label{sec:geometry}
\label{app:geometry}

We fit linear identity and recency scores on training dialogues and evaluate
candidate selection on held-out dialogues and six held-out factorial conditions.
Combining the scores predicts Qwen's answer on 43.0\% of 1,136 examples and
Llama's on 39.6\% of 1,180, compared with 79.2\% and 67.4\% for identity
alone (randomized-label range: 12.5--16.6\%). The combined rule predicts no
other-variable errors, although both models make them, and the mean absolute errors for these rates are 24.2 and 17.4 percentage points. Thus the measured
scores do not provide an adequate linear account of the response distribution.

\subsection{When old-value answers develop}
\label{app:stage-p}

Old-value answers can develop after the update, as the model processes later
text. To measure this timing, we truncate each conversation after the first
assignment, the update, unrelated text, and the final question. We append the
same diagnostic question to each truncated conversation and run a separate
forward pass. This avoids treating an earlier token's stored key and value as
if they changed when later text was added. Each conversation also has a
comparison prompt with the same token count but no conflicting update.

Pythia-160M produces 129 old-value answers and 15 correct answers in 144
trials, yielding 121 valid failure--control pairs. The other models have too
few errors for reliable failure-conditioned analysis: Pythia-1.4B, Qwen-1.5B,
Qwen-7B, and Llama-8B produce 1/144, 8/144, 1/72, and 2/72 old-value answers,
respectively. Gemma-2B produces 14 old-value and 62 other answers in 144 trials,
and its no-update control is unreliable.

Among 121 matched Pythia-160M failures, 39 already favor the old value
immediately after the update, while the other 82 favor it only after later text.
The probe's mean probability for the current value falls from 0.81 after the
update to 0.35 at the final question, but remains above the randomized-label
ceiling near 0.22. Thus the current value becomes harder to recover without
disappearing altogether. Restoring the correct key match prevents 93\% of errors when the question
immediately follows the update, but only 9\% after the remaining conversation
(Figure~\ref{fig:timing}). The four no-change checks correct no errors.

\begin{figure}[htbp]
  \centering
  \includegraphics[width=\textwidth]{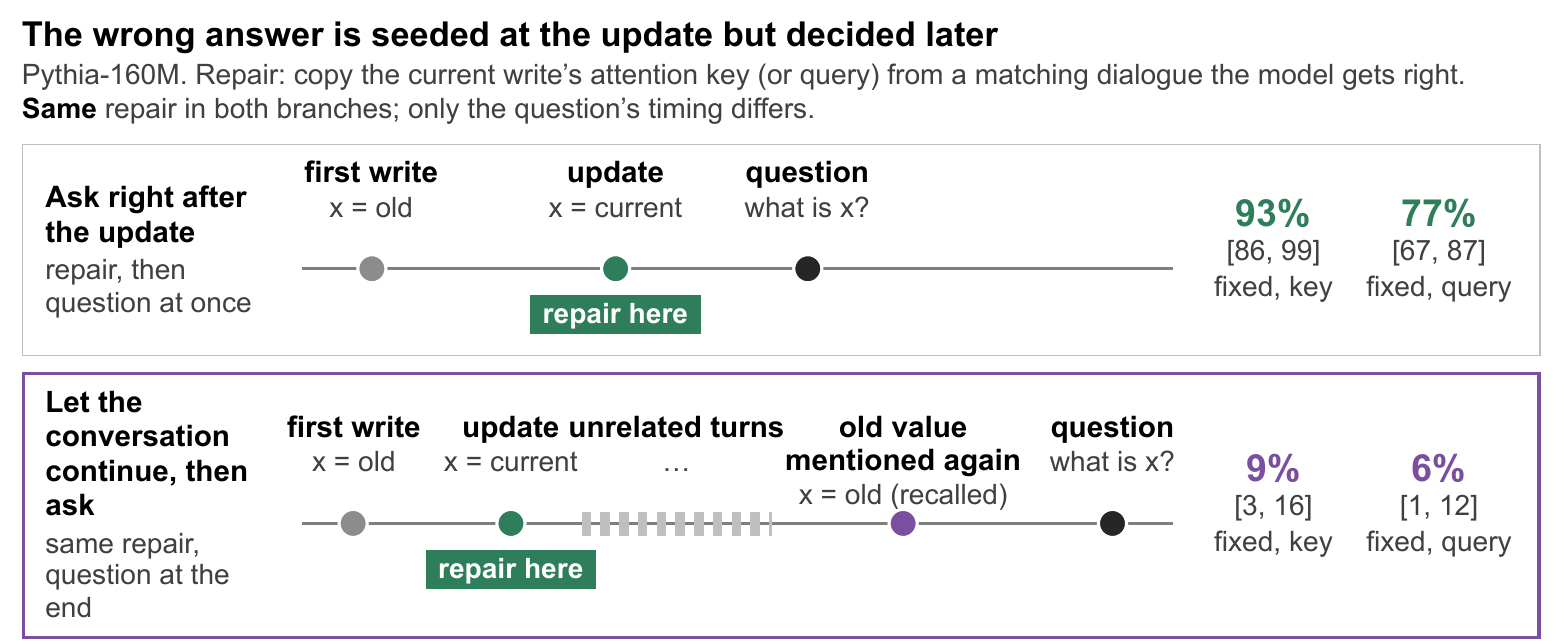}
  \caption{The wrong answer is seeded at the update but decided later. The same
  repair fixes 93\% (key) and 77\% (query) of test failures when the question
  follows the update immediately, but only 9\% and 6\% once the remaining turns,
  including a later mention of the old value, precede it. Identity patches fix
  none.}
  \label{fig:timing}
\end{figure}

Across the five models with reliable controls, the conflicting update shifts attention and key
matching toward the old value, even when the final answer remains correct
(Figure~\ref{fig:qk-attention}d). Every stage after the update shows this
shift relative to a prompt with the same token count but no conflict
(all 20 comparisons have the same sign).
This shared shift does not by itself explain the larger models' rare errors.

\FloatBarrier
\section{Experimental Setup and Reproducibility}
\label{app:experimental-setup}

Unless stated otherwise, generation is greedy (temperature zero). We score saved responses and estimate
uncertainty by resampling examples, not by sampling repeated model answers.

The open-model families include Qwen2.5 \citep{yang2024qwen25}, Llama 3
\citep{grattafiori2024llama3}, Mistral \citep{jiang2023mistral}, and Gemma 2
\citep{gemmateam2024gemma2}. Exact variants accompany each experiment.

\subsection{Probe and intervention definitions}

\paragraph{Linear probe.}
\label{app:probe-method}
Let $v_i^*$ be the current value in dialogue $i$ and $r_i^{(\ell)}$ its
answer-position hidden state at layer $\ell$. The multinomial linear probe is
\begin{equation}
q_\phi\!\left(v\mid h_i\right)
=\operatorname{softmax}\!\left(Wh_i+b\right)_v,
\qquad h_i=z\!\left(r_i^{(\ell)}\right),
\label{eq:current-value-probe}
\end{equation}
where $\phi=(W,b)$ are learned parameters and $z$ standardizes each feature
using training-fold statistics only. The main analysis uses the final-layer
state and predicts one of 21 recurring values. Five-fold cross-validation
groups examples by dialogue, so all reported predictions come from a probe
that did not train on that dialogue. We compare it with the same procedure
after randomly reassigning value labels, preserving the inputs and label
frequencies but removing their systematic relationship \citep{hewitt2019probes}.
The final states are taken from Qwen layer 28 and Llama layer 32.
We use 2,000 randomized-label repetitions and 2,000 bootstrap replicates
(seed 20260721), including randomized-label checks of the length-adjusted
comparison.

Across 1,200 dialogues, held-out classification accuracy is 91.2\% for
Qwen2.5-7B and 89.7\% for Llama-3.1-8B. This differs from the mean probability
assigned to the current value on old-value failures (84.8\% and 82.2\%) and
other-variable failures (86.5\% and 86.3\%). The randomized-label range is
3.2--6.4\%. To compare successful and failed trials, we fit the relationship
between probe score and prompt length once across both groups:
$q_\phi(v_i^*\mid h_i)=a+b\log L_i+\epsilon_i$, where $L_i$ is the prompt
token count. Group differences in the residuals $\epsilon_i$ are $-7.5$ and
$-7.3$ percentage points for old-value failures relative to correct responses.
The probe establishes linear recoverability, but it does not establish that the
unmodified model uses this information. The probe learns a readout distinct
from the native language-model output head. Recovering the last-update
feature is therefore compatible with the model producing an old value, and the measurement concerns recoverable task information, not a belief state.

\paragraph{Normalized intervention.}
For a probe direction $d$ and residual vector $h$, the intervention is
\begin{equation}
 h'=h+\alpha\lVert h\rVert_2\frac{d}{\lVert d\rVert_2},
 \label{eq:steer}
\end{equation}
so $\alpha$ specifies the added vector's norm relative to the original state.
We compare each targeted direction with a random direction of identical norm
and check that $\alpha=0$ reproduces the unmodified response.

\paragraph{Synthetic controls.}
\label{app:synthetic-controls}
Figure~\ref{fig:phenomenon}a pools 100 examples at each of 20, 40, and
80 lines (300 per column). Length amplifies the update effect: accuracy is
91/85/70\% at $k=2$ and 51/45/38\% at $k=8$, respectively. Panel b uses
20-line contexts with $n=150$ at $k=2,4,8$ and $n=100$ at $k=12,16$, and 320-line conditions use $n=50$. Every model's matched no-overwrite control
scores 100\%. Panels a--b show 95\% Wilson intervals, while panel c uses
label-conditional bootstrap intervals (Appendix~\ref{sec:icf-scoring}).
These sweeps use greedy decoding, a 16-token cap, and data seeds 0 and 40
for the standard and 320-line conditions, respectively.
The parallel-variable task interleaves or groups assignments to independent
named variables and asks for every variable's latest value. Values are
distinct within a prompt, allowing old values of the correct variable to be
separated from cross-variable copies. In the component-replacement control,
only earlier assignments' variable names change to fresh names of the same character length, while values, line positions, the final target assignment, and
the query remain fixed. This removes earlier writes to the queried variable
without removing their text (Appendix~\ref{app:component-replacements}).

\subsection{Evaluation and analysis settings}
\paragraph{Behavioral evaluation.}
The CICM preference core uses the same 1,200 dialogues for Qwen2.5-7B-Instruct
and Llama-3.1-8B-Instruct through OpenRouter (16-token cap, data seed 20260723):
200 dialogues per distance condition and 40 per count--distance combination.
For ICF-Bench, Qwen evaluates 783 Dynamic Preference examples per answer format
with a 160-token response cap, while GPT-4o adjudicates only unresolved labels
with a 120-token cap. The 933-example Instructional Forgetting content-reuse
audit uses a local 192-token cap. The extended CICM tasks and their scoring
contracts are described in Appendix~\ref{app:cicm-datasheet}, and model-specific API settings accompany their results in Appendix~\ref{app:api-robustness}.

\paragraph{Component replacement and steering.}
Component replacement uses 120 matched successful--failed pairs selected from
301 candidates (pair seed 50), with Qwen2.5-7B in fp32 eager mode and batch
size one. We measure recovery of the current-minus-old answer-score gap and
verify a no-change control. Residual steering tests the same 665 Qwen and
722 Llama failure items at each relative strength
$\alpha\in\{.01,.02,.05,.1,.2,.4\}$, at Qwen layers 13/27 and Llama layers
15/31 (seed 20260722). Paired bootstrap comparisons contrast targeted and
equal-norm random changes, and $\alpha=0$ must reproduce the original response.
These intervention counts refer to failed trials, not new independent dialogues.

Behavioral evaluations use API inference. Hidden-state, attention, component
replacement, and steering experiments run on an NVIDIA H100 with PyTorch 2.6
and Transformers 4.51. Appendix~\ref{app:test-time-repair} gives the separate
settings for the cross-model attention-routing experiments.
Appendix~\ref{app:api-robustness} gives the paired API checks of update
semantics, output budget, and concise reasoning.

\FloatBarrier
\section{CICM Datasheet}
\label{app:cicm-datasheet}

Controlled In-Context Memory (CICM) is a diagnostic evaluation suite for
updating and using information in context. Its mechanism core uses preference
content from PrefEval \citep{zhao2025prefeval}, and its extensions add multi-field
dialogues, decisions under revised constraints, and dependent operational-state
updates. Decision catalogues, transition ledgers, and agent logs are constructed
for these extensions. Explicit update rules and programmatic answers connect
the components, while each retains its own task and scoring contract.

\subsection{Benchmark components}
Table~\ref{tab:cicm-components} distinguishes the original mechanism core from
the broader behavioral evaluations. Sample sizes count the units shown, and multiple prompt conditions from one scenario are paired observations. The
suite is not summarized by pooling these different tasks into a single accuracy.
All components are constructed diagnostics rather than sampled deployment traces.

\begin{table}[htbp]
\centering
\small
\caption{CICM components and their roles. Sizes refer to the reported cohorts, and separate development checks and repeated conditions are not additional cases.}
\label{tab:cicm-components}
\begin{tabularx}{\linewidth}{@{}>{\raggedright\arraybackslash}p{0.20\linewidth}>{\raggedright\arraybackslash}p{0.16\linewidth}YY@{}}
\toprule
Component & Cohort & Task & Role \\
\midrule
Preference core & 1,200 dialogues & Repeated preference updates with competing mentions & Retention, selection and causal interventions \\
\addlinespace
Dialogue diversity & 180 pilot ledgers & Scalar, partial-record and full-record updates in six domains & Language and task variation \\
\addlinespace
Constraint decisions & 24 scenarios & Booking and scheduling under revised constraints & Use of updated state in decisions \\
\addlinespace
Operational histories & 40 + 40 scenarios & Dependent handoffs across 16 slots in warehouse and build-release logs, and a deferred-log tier resolves pending handoffs at later lines & Frontier reasoning under complex and dependency-aware state updates \\
\bottomrule
\end{tabularx}
\end{table}

Mechanism and attention-routing analyses use the preference core, and the other components test behavior. The following sections specify their construction
and scoring. Model outcomes and inference settings appear with their
experiments in Appendix~\ref{app:api-robustness}.

\subsection{Preference core}
\label{app:cicm-core}
\paragraph{Vocabulary and design.}
The preference-core behavioral and mechanism experiments use 1,200 dialogues generated
with seed 20260723. The vocabulary contains 21 values across three variables. Music genre
covers jazz, rock, classical, electronic, folk, hiphop and blues. Meal style
covers vegan, vegetarian, keto, mediterranean, paleo, pescatarian and gluten
free. Learning style covers visual, hands on, lecture, reading, discussion,
self paced and tutoring.

Each variable contributes 400 examples. Every variable--current-value pair occurs 57 or
58 times, so probes can learn each value from multiple dialogues.
The current value of the queried variable is always far from the final question.

We vary both the number of old assignments and their distance from the query.
Each dialogue has an overwrite count
$k\in\{1,2,3,4,6\}$ and one of six distance conditions formed by
\{near, far\} distance for the nearest old value of the queried variable crossed with
\{far, middle, two turns\} distance for the nearest value of another variable. The design
contains 240 dialogues per overwrite count, 200 per distance condition, and 40 per
combination. The accuracy decline from 47.1\% to 31.7\% as old assignments increase
from one to six averages over distance conditions, and the factorial comparison
averages over overwrite counts.

Each dialogue also includes three post-update assignments to other variables, six
standard filler turns, and four additional interference turns. The final query
names the queried variable and explicitly excludes other preferences. These choices
hold variable identity fixed while crossing old-value reminder conditions
with other-variable positions.

\paragraph{Construction and checks.}
GPT-4.1 generates reusable dialogue template packs for openings, update
acknowledgments, and coherent filler turns. The model does not choose any
experimental value, answer, position, distance cell, or label. The program
injects the exact controlled values into one-placeholder update templates,
constructs the ordering, and appends the anchored query. Thus the generated wording varies while values and positions remain
controlled.
Template validation rejects a pack if an update does not contain exactly one
value placeholder, an assistant acknowledgment repeats the placeholder, or an
opening, filler, or query contains any controlled value. Validation of each dialogue then
requires all of the following. The current value is absent from the old-value
set. The number of old assignments equals $k$. The query names the queried
variable. Exactly one target mention is marked current. Every recorded role,
character span, and value mention matches the rendered message. Every
distractor update belongs to a different variable. Generation aborts on any violation, and all 1,200 dialogues pass these checks.

\paragraph{Update and mention semantics.}
\label{app:cicm-mention-semantics}
The target label is the last explicit assignment in the generator's update
sequence. Near-old dialogues add a later user turn saying ``I am still
thinking about [old value] when I compare options for my [slot].'' Its label
is a reminder, whereas far-old dialogues omit that turn. All 600 near-old
examples contain it and none of the 600 far-old examples do. Thus this
contrast changes mention count and wording as well as the nearest old-value
position. Naming the target variable in the query resolves which preference
is queried, but does not instruct the model that this later mention cannot
be an update. The structural checks above establish consistency with the
generator's labels, not agreement with an independent semantic judgment.

At $k=1$, the pooled 47.1\% accuracy is 113/240: 25/120 in near-old
conditions and 88/120 in far-old conditions. Across all update counts, the
600 far-old examples contain 396 current and 82 old-value answers despite
having no added old-value reminder. These subsets describe the existing factorial, but they are not a paired wording ablation or a human/strong-model
ceiling. The reported mechanism and routing experiments use this original
factorial and its explicit-assignment labels. The paired wording, query,
and matched-placement follow-ups, including GPT-4o, appear in
Appendix~\ref{app:cicm-validity-results}. A separate extension adds three
domains, partial- and full-record updates, and dialogue surfaces from three
generation models (Appendix~\ref{app:cicm-diversity}), but it does not replace the
original mechanism dataset.

\paragraph{Response scoring.}
Responses are lowercased, punctuation is replaced by spaces, and whitespace is
collapsed. Whole normalized value phrases are then matched against the current
value, the old-value set, unused values of the queried variable, and values belonging to other variables,
in that order. The output label is therefore determined by fixed vocabulary
membership, without an LLM judge. Unit tests cover current, old-value,
unused same-variable, other-variable, and unmatched responses. Re-scoring the saved Qwen2.5-7B API
responses yields 471 current, 589 old-value, 76 other-variable, and 64 other
outputs, with no API call-error rows.

The core behavioral run uses Qwen2.5-7B-Instruct with free-form answers capped
at 16 tokens. The same stored dialogues and programmatic labels define the Qwen
and Llama mechanism subsets, and hidden-state probes never replace the behavioral
matcher.

\subsection{Multi-field dialogue updates}
\label{app:cicm-dialogue-data}
The diversity cohort contains 180 pilot ledgers crossing six domains, three
update structures, and two old-value loads ($k=1,4$), with five ledgers per
cell. Meeting-room assignment, delivery pickup point, and document output
format extend the core's three preference domains. The tasks distinguish
replacement of a single value, patches to a three-field record that preserve
unmentioned fields, and full-record replacement. Executing the ledger gives
the current target value, its previous values, and competing values of other
fields.

Each ledger has four paired prompts: template or model-generated language,
crossed with near or far placement of the same explicitly obsolete mention.
Each contains 24 user--assistant pairs. The current assignment is 30 messages
before the question, and the obsolete mention is four or 24 messages away.
Swapping its complete pair with value-free filler preserves all other
assignments and mention counts. The question explicitly excludes historical
mentions from the update sequence.

GPT-5, Claude Sonnet 5, and Gemini 3.8 Flash each render 60 ledgers, plus six
pipeline-check ledgers. GPT-5 and Gemini use medium reasoning. Gemini uses temperature 0.7, and Sonnet uses disabled thinking. Generation starts with
an 8,192-token cap, extended to 16,384 for truncated GPT-5 outputs. Schema
and value-span checks and independent agent review validate the surfaces.
At most two automated repairs are allowed, and three surfaces additionally receive
logged agent-authored clarifications. All 720 pilot prompts and the scorer
are frozen before evaluator answers, and initial outputs and repairs are retained.

A subsequent blinded semantic audit samples 50 distinct ledgers using metadata
alone, covering every domain--task--load cell and balancing near/far placement
and update load. GPT-6 Astra receives the dialogue but not program labels,
writer identities, or evaluation answers. All 50 responses match the executed state, and none flags ambiguity or reinstatement of an old value. This is
model-based semantic validation. Calls use the native OpenAI API on
21 September 2026 (UTC), requested and returned identifier
\texttt{gpt-6-astra}, high reasoning, and an 8,192-token cap. Scoring compares
the complete normalized answer with current, old-target, unused-target, and
other-field values. Appendix~\ref{app:cicm-diversity} reports the paired results.

\subsection{Decisions under revised constraints}
\label{app:cicm-decision-data}
This component asks models to use updated state to choose an action. The
24 evaluation scenarios comprise 12 procurement dialogues and 12 scheduling
logs, separate from six development scenarios. Requirements can inherit
shared defaults, override them locally, or withdraw earlier directives, and unmentioned fields persist. Procurement combines venue, catering, and shuttle
choices, while scheduling combines room and time assignments. Fixed catalogues,
explicit objectives, and tie rules define a unique optimal action. A ledger
interpreter, independent reverse replay, and exhaustive action enumeration
verify the labels before evaluation.

Each scenario crosses 24- or 96-directive histories, single or composed
actions, and three context conditions: full history, resolved target state,
and a word-count-padded resolved state. These produce 12 paired conditions
per scenario (288 prompts per model). The snapshot supplies constraints,
not the optimal action. Scoring records optimality, constraint satisfaction,
and matches to historical or other-project optima, while invalid outputs, refusals,
truncations, and provider failures remain separate. Appendix~\ref{app:decision-transfer}
also reports small exploratory tool-provenance and negotiated-planning pilots,
which are not counted among the 24 evaluation scenarios.

\subsection{Dependent updates in operational histories}
\label{app:cicm-log-data}
Warehouse-dispatch and build-release logs track 16 slots containing immutable
identifiers. Each committed production handoff cycles the current contents
of two, three, or four slots, while sandbox and aborted operations leave the state
unchanged. A query requests the current identifier in one slot after 2,048
operations. The two settings share this state-transition system. Initial
states and transition rules are supplied in the prompt, and every operation
needed to compute the answer is present.

The confirmation cohort contains 40 fresh scenarios, 20 per setting
(seed 220260921), distinct from four development scenarios. Forward replay,
inverse ancestry, and an independent parser of the rendered log verify each
answer. Every full history is paired with a resolved-state snapshot, yielding
80 primary prompts. The snapshot is a solvability control that removes state reconstruction, but it is not matched for length or computation. Models answer
through the API without executable tools or external memory. Scoring checks
the requested identifier and records response availability separately.
Full prompts and responses are retained in the reproducibility artifacts, and source-verified failure excerpts appear in Appendix~\ref{app:frontier-state-logs}.

The deferred-log tier uses the same slots, surfaces, and cycle semantics with
one additional rule: a handoff logged with \texttt{status=pending} does not
execute at its own line and executes only when a later line
\texttt{resolve(seq=N); result=committed} refers to it, on the contents
current at that line, while \texttt{result=aborted} or the absence of a resolution
cancels it, and sandbox operations never affect production. Lines are drawn
with production probability 0.85, and statuses committed, aborted, and pending
are drawn with probabilities 0.55, 0.15, and 0.30. Each pending operation is
scheduled for resolution with probability 0.8 after a delay of 1--400 lines
(dropped beyond the log end), with a committed result with probability 0.6. Resolve
lines consume sequence numbers, so a 1,024-line log contains on average 166
resolve lines and 485 effective operations, 82 of which execute only through a resolution, and the queried item's dependency chain averages 92 operations. Each
scenario has three conditions. The first is the deferred history. The second is
a linearized control in which pending lines become aborted and each resolve
line becomes the full committed or aborted handoff it stood for, so the
effective operations and their line numbers are identical without cross-line
dependency. The third is the resolved snapshot. The confirmation cohort has 40 scenarios (seed 322260921)
and the development cohort 4 (seed 312260921), and forward simulation with an
explicit pending table, an inverse trace over effective operations, and an
independent parser of the rendered text verify every label, and the
linearized twin is verified to reproduce the same final state. Seventeen
confirmation scenarios, fixed in index order before any answer, were
evaluated within the inference budget.

\subsection{Provenance and reproducibility}
PrefEval supplies preference content for the original core. The extensions
use newly constructed state ledgers, catalogues, and operational histories. Generation models supply dialogue wording, while executable rules determine
answers. We retain the construction seeds, scenario and condition identities,
programmatic labels, prompts, validation records, and raw evaluation responses.
API records include requested and returned model identifiers, provider,
collection date, and token usage. Repeated conditions remain linked to their
underlying scenario for paired analysis, and development and confirmation cohorts
are reported separately. The experiment code and CICM dataset will be released
after double-blind review.

\FloatBarrier
\section{Test-time attention routing}
\label{app:test-time-repair}

We give the routing method, calibration, and controls behind
Table~\ref{tab:test-time-repair}.

\subsection{Identifying and attending to the current assignment}

The parser uses the dialogue, predefined variables and values, and tokenizer
offsets. It identifies the queried variable from the final question and scans
preceding user messages for updates. The latest valid update supplies the current value, and mentions of different values supply old-value positions.
Reminders containing both ``still thinking about'' and ``compare options''
are not updates, and assistant messages do not set the current value.
Saved answer labels and spans are used only to check the parser, not as its
inputs. This grammar-specific parser could itself answer the direct question.
The reminder baseline instead inserts its parsed value in an assistant message
before the query and lets the model answer.

Equation~\ref{eq:attention-routing} changes selected query heads only at the
final prompt position, after native scaling, masking, and any softcapping,
before softmax. Shared keys are first expanded to query heads in grouped-query
attention. Later cached decoding steps are not directly modified, although
effects can propagate. Weights, value vectors, and output logits are not
directly overwritten.

For adaptive routing, let $m$ be the calibrated target log-ratio of attention
on current versus old positions. Using the sets $C,O$ and scores $s_j$
defined in Section~\ref{sec:test-time-repair}, the existing log-ratio $\delta$
and the applied bias are
\begin{equation}
\delta=\log\sum_{j\in C}e^{s_j}-\log\sum_{j\in O}e^{s_j},
\qquad
\beta=\min\!\left(8,\max\!\left(0,\frac{m-\delta}{2}\right)\right).
\label{eq:adaptive-routing}
\end{equation}
The softmax denominator cancels: the bias increases the attention log-ratio
by $2\beta$, reaching $m$ unless capped. A sufficient ratio is left unchanged.
This does not fix answer probabilities or the total attention on these
positions relative to other context.
\subsection{Calibration and held-out evaluation}

The 1,200 CICM dialogues (Appendix~\ref{app:cicm-datasheet}) cover three
variables, seven values each, and six combinations of competitor distances.
A deterministic identifier hash assigns 40 of each cell's 200 dialogues to
calibration and 160 to testing. Adaptive calibration splits into 120 discovery
and 120 validation dialogues. Variables, vocabulary, and templates are not
held out.

With weights frozen, we differentiate the current-minus-strongest-old
first-token answer score with respect to a gate on each head's intervention,
at zero intervention ($m=1$, bias cap 8). Scores combine tokenization variants
by log-sum-exp. Heads are ranked by average gradient on discovery old-value
errors. First-token collisions are excluded from this objective, not from
full-response evaluation. The gradient measures local sensitivity, not each
head's independent correction effect. Validation tests the highest-ranked
positive-gradient heads in sets of size $1,2,4,8,16,32$ with
$m\in\{0.5,1,2,4\}$. We maximize positive accuracy gain while preserving
at least 95\% of correct answers, breaking ties toward fewer heads and smaller
$m$ (frozen settings: Table~\ref{tab:repair-estimates}).

Fixed-bias runs use eight heads and select $\beta\in\{0.5,1,2,4,8\}$ under
the same preservation constraint, without separating discovery and validation.
Qwen2.5-7B's eight heads are inherited from Stage B on the separate synthetic
assignment dataset, and the saved frozen head list
exactly matches that analysis, not the CICM attention comparison in
Section~\ref{sec:mechanism}. Qwen2.5-14B's
are ranked on calibration data by the failed-minus-correct difference in
log(old/current attention). Frozen configurations apply to every test dialogue regardless of baseline
correctness.

Checkpoints are instruction-tuned, and decoding is greedy with bfloat16 weights
and an eight-token limit. Runs use H100 GPUs  and batch size
12. Inference uses one decode, no gradients, and eager attention. 

\subsection{Retrieval estimates and controls}

Adaptive gains mainly correct old-value errors while preserving correct
answers (Table~\ref{tab:repair-estimates}). Value matching prioritizes current,
old, unused same-variable, other-variable, then other text. This is not
whole-string matching: answers containing both current and old values can
count as correct. Counting all mixed-value answers as incorrect leaves Qwen,
Llama-3.2-3B, and Gemma gains unchanged, while Llama-3.1-8B gains 46.82 points and
Mistral 76.88 points under this stricter rule.

Main-text accuracies weight dialogues equally. For uncertainty, dialogues
form 210 groups defined by variable, current value, overwrite count, and
competitor distances. We average paired correctness differences within
groups, then resample groups 2,000 times and average their means. These
equally weighted group estimates can differ from dialogue-weighted gains, and intervals condition on the frozen configuration, excluding head-selection
uncertainty.

\begin{table}[htbp]
\centering
\footnotesize
\setlength{\tabcolsep}{4pt}
\caption{Frozen routing settings and held-out retrieval outcomes. Counts show
old-value errors corrected and originally correct answers preserved.
Gain is the equally weighted group estimate (percentage points), with its
95\% bootstrap interval.}
\label{tab:repair-estimates}
\begin{tabular}{@{}lrrccr@{}}
\toprule
Model & Heads & Setting & Corrected & Preserved & Group gain [95\% interval] \\
\midrule
Qwen2.5-3B & 32 & $m=4$ & 678/711 & 156/158 & 78.37 [74.51, 82.21] \\
Llama-3.2-3B & 32 & $m=4$ & 514/541 & 266/266 & 64.60 [59.40, 69.63] \\
Llama-3.1-8B & 16 & $m=4$ & 395/467 & 400/400 & 46.90 [41.99, 51.93] \\
Mistral-7B-v0.3 & 32 & $m=2$ & 564/674 & 51/52 & 75.01 [70.33, 79.12] \\
Gemma-2-9B & 32 & $m=4$ & 407/455 & 452/452 & 45.06 [39.63, 50.99] \\
\addlinespace[2pt]
Qwen2.5-7B & 8 & $\beta=8$ & 208/487 & 361/361 & 31.80 [27.88, 35.76] \\
Qwen2.5-14B & 8 & $\beta=2$ & 13/491 & 383/389 & 1.17 [0.14, 2.24] \\
\bottomrule
\end{tabular}
\end{table}

Targeted routing outperforms random controls, and reversing it harms accuracy
(Table~\ref{tab:repair-controls}). We test 16 random-position sets matching
positive/negative token counts and excluding controlled values. Adaptive runs
also test 64 random-head sets matching counts per layer and excluding selected
heads. They use the same rule, but adaptive biases can differ: these are not
matched-norm controls. Reverse routing recomputes its bias in the opposite
direction. The reminder comparison is descriptive, without a separate paired
significance test against routing.

\begin{table}[htbp]
\centering
\footnotesize
\setlength{\tabcolsep}{6pt}
\caption{Control gains in retrieval accuracy (percentage points). Random
columns report means and empirical 95\% ranges across sampled sets, not
confidence intervals for paired effects. A dash means the control was not run.}
\label{tab:repair-controls}
\begin{tabular}{@{}lrrr@{}}
\toprule
Model & Random positions & Random heads & Reverse routing \\
\midrule
Qwen2.5-3B & $0.08\ [-0.27, 0.38]$ & $-9.98\ [-14.42, -1.28]$ & $-16.35$ \\
Llama-3.2-3B & $-0.17\ [-0.48, 0.07]$ & $-5.05\ [-12.19, 0.88]$ & $-27.08$ \\
Llama-3.1-8B & $-0.36\ [-0.69, 0.00]$ & $-2.52\ [-10.90, 3.05]$ & $-28.15$ \\
Mistral-7B-v0.3 & $0.03\ [-0.10, 0.21]$ & $-0.49\ [-1.50, 0.45]$ & $-3.65$ \\
Gemma-2-9B & $0.19\ [-0.07, 0.38]$ & $-3.04\ [-7.07, 1.32]$ & $-27.71$ \\
\addlinespace[2pt]
Qwen2.5-7B & $0.15\ [-0.42, 0.59]$ & --- & $-16.04$ \\
Qwen2.5-14B & $0.18\ [0.00, 0.38]$ & --- & $-1.77$ \\
\bottomrule
\end{tabular}
\end{table}

\subsection{Diagnostic checks and interpretation}
\label{app:repair-checks}

Gemma-2-9B increases accuracy from 47.08\% to 91.88\%, compared
with 82.81\% for the reminder, and preserves all 452 initially correct
answers (Table~\ref{tab:test-time-repair}).

 Among  960 examples, routing led to   a 44.79-point
gain in accuracy, 407/455 old-value corrections, and 452/452 correct answers preserved.
Thus the large Gemma gain persists in this sensitivity analysis.

These results support controllable selection, not unique heads or a shared
native cause of errors. Calibration uses labels and gradients, while parsing uses structured update knowledge. Differing model versions and protocols preclude
a scaling claim. Earlier alternatives were less successful: global per-head
steering did not beat random directions, fixed attenuation selected no
eligible nonzero change, and Qwen2.5-1.5B routing did not exceed the random-head
range. They motivated the adaptive method but are not further replications.

\FloatBarrier
\section{Behavioral robustness and task diversity}
\label{app:api-robustness}

These behavioral checks examine the interpretation of later mentions,
answer-generation settings, and broader update tasks. They use new API responses and
frozen paired inputs. The original mechanism and routing estimates retain
their original dialogues, and the checks below do not re-estimate those internal
measurements on the modified prompts.

\subsection{Paired CICM wording and placement tests}
\label{app:cicm-validity-results}

\paragraph{Design.}
For each of the original 1,200 dialogues, we compare five conditions.
The first keeps the original wording. The second adds an explicitly
historical reminder, ``For the record, [value] was an earlier choice for my
[slot], and is no longer current''. The third replaces the old value with
neutral text that retains its message position and whitespace word count.
The fourth asks for the most recent explicit update and specifies that later
mentions of earlier choices are not updates. The fifth combines the
historical wording with the explicit question. The current assignment,
other-variable updates, acknowledgments, and message count remain fixed.
This yields 6,000 condition rows per model. In original far-old dialogues,
the historical reminder and the neutral replacement are no-ops, and the
combined condition is identical to the explicit question, so exact duplicate
requests reuse the same response and are not independent trials.

The separate placement test uses the 600 original far-old dialogues. Both
conditions contain the same explicitly historical old-value mention and
explicit-update question. We swap that mention and its acknowledgment with
a value-free filler pair, preserving the complete multiset of message
contents and the exact positions of current and other-variable updates.
The historical mention is 2--8 messages from the question in the near
condition and 20--32 in the far condition. This produces 600 paired
comparisons per model with equal mention counts, and unlike the original
factorial, this contrast isolates placement from adding a reminder.

\paragraph{Execution and scoring.}
Qwen2.5-7B-Instruct and GPT-4o receive the same frozen conditions through
OpenRouter with temperature zero and a 16-token output cap. A balanced
screen covers 180 original dialogues and 90 placement pairs, and the full
design, designated as primary before inspecting screen effect estimates,
contains those items and is not a held-out replication. Identical cached
responses are reused only after checking input and request hashes. We
preserve returned model/provider metadata, token usage, raw responses,
termination reasons, and retries.

The primary score matches the entire normalized final answer against the
fixed value vocabulary, allowing an optional ``Answer:'' prefix. It does
not count an explanation as correct merely because it mentions the gold
value. This is a new evaluation, not a re-scoring of the original
gold-anywhere results. Old-value rate denotes old responses divided by all valid responses, while old-value share of errors uses only incorrect responses
as its denominator. Paired differences use the same source dialogues, and uncertainty conditions on these generated items. The stronger-model
comparison measures task solvability under these labels. A separate blinded
GPT-6 Astra review checks 50 generated-language extension dialogues
(Appendix~\ref{app:cicm-diversity}), and this is model-based semantic validation.

\begin{table}[htbp]
\centering
\small
\setlength{\tabcolsep}{4pt}
\caption{CICM wording and query ablations on the full source set. Each entry is current-value accuracy / old-value answer rate (\%), with 600 dialogues per source condition.}
\label{tab:revision-cicm-wording}
\begin{tabularx}{\textwidth}{@{}Xrrrr@{}}
\toprule
& \multicolumn{2}{c}{Qwen2.5-7B} & \multicolumn{2}{c}{GPT-4o} \\
\cmidrule(lr){2-3}\cmidrule(lr){4-5}
Variant & Near source & Far source & Near source & Far source \\
\midrule
Original & 12.3 / 84.8 & 65.7 / 14.5 & 79.7 / 19.0 & 94.5 / 2.3 \\
Explicit historical wording & 62.5 / 25.5 & 65.7 / 14.5 & 98.5 / 0.0 & 94.5 / 2.3 \\
Neutral replacement & 62.3 / 17.0 & 65.7 / 14.5 & 95.0 / 2.8 & 94.5 / 2.3 \\
Explicit-update query & 7.3 / 87.3 & 61.2 / 9.8 & 88.7 / 8.8 & 94.8 / 1.7 \\
Combined & 39.2 / 41.8 & 61.2 / 9.8 & 99.3 / 0.0 & 94.8 / 1.7 \\
\bottomrule
\end{tabularx}
\vspace{3pt}\parbox{\linewidth}{\footnotesize Near and far refer to the original source conditions, pooled over update counts and other-variable distances. Old-value rates use all 600 dialogues as the denominator. Historical wording and neutral replacement leave far-source prompts unchanged; combined equals the explicit-query variant there. These are wording ablations, separate from the matched placement test below.}
\end{table}

\paragraph{Wording matters, but does not account for all errors.}
All 14,400 condition rows have valid, untruncated responses, representing
10,800 distinct API requests after exact-prompt reuse. In the 600 near-source
dialogues, historical wording raises Qwen accuracy from 74/600 to 375/600
(a paired gain of 50.2 points, 95\% interval $[46.0,54.3]$), and neutral replacement gives 374/600. The explicit-update question alone does not rescue Qwen:
accuracy is 44/600, versus 235/600 with both treatments. GPT-4o scores
478/600 on original near-source prompts, 591/600 with historical wording,
and 596/600 with both treatments (Table~\ref{tab:revision-cicm-wording}).
The question-only rewrite instead improves GPT-4o to 532/600. Its effect is
therefore model-dependent. One possible explanation for Qwen is that mentioning
``earlier choices'' in the clarification makes historical alternatives more
salient, consistent with selection competition. The rewrite also changes
update and slot instructions, so this lexical hypothesis is not isolated by
the present comparison.

The originally showcased near-old/far-other cell contains 200 dialogues.
On these same sources, Qwen returns old values in 190/200 original prompts,
59/200 with historical wording, and 45/200 with neutral replacement.
GPT-4o scores 191/200 originally and 194/200 with historical wording, and the latter has no old-value answers. At $k=1$, across the three near-source
other-variable distances, Qwen accuracy rises from 27/120 to 96/120 under
historical wording, and GPT-4o from 97/120 to 117/120. Thus the low original
near-source baseline is sensitive to wording, while explicitly obsolete
values can still be returned.

\begin{table}[htbp]
\centering
\small
\setlength{\tabcolsep}{5pt}
\caption{Matched placement of an explicitly historical value. Each model answers 600 paired dialogues; the current update, other-variable updates, message count, and message-content multiset are fixed within a pair.}
\label{tab:revision-cicm-placement}
\begin{tabularx}{\textwidth}{@{}lXrrr@{}}
\toprule
Model & Answer type & Far (\%) & Near (\%) & Difference, pp [95\% CI] \\
\midrule
Qwen2.5-7B & Current value & 58.8 & 46.7 & $-12.2\;[-16.7,\,-7.5]$ \\
 & Old value & 14.8 & 36.7 & $+21.8\;[18.0,\,25.8]$ \\
\addlinespace
GPT-4o & Current value & 98.3 & 96.5 & $-1.8\;[-3.2,\,-0.5]$ \\
 & Old value & 0.0 & 0.0 & $0.0\;[0.0,\,0.0]$ \\
\bottomrule
\end{tabularx}
\vspace{3pt}\parbox{\linewidth}{\footnotesize Differences are near minus far. Intervals use 2,000 paired bootstrap resamples of source dialogues (seed 20260920). Both arms explicitly identify the repeated value as obsolete and ask for the latest explicit update. GPT-4o produces no old-value answers in either arm, yielding a degenerate empirical bootstrap interval; the Wilson upper endpoint for each 0/600 rate is 0.64\%.}
\end{table}

\paragraph{Equal-mention placement.}
In the separate matched test, Qwen returns old values on 89/600 far and
220/600 near prompts, a paired increase of 21.8 points
(95\% interval $[18.0,25.8]$). Accuracy falls from 353/600 to 280/600:
66 pairs change from incorrect to correct and 139 from correct to incorrect.
GPT-4o scores 590/600 and 579/600, respectively, with no old-value answers, and its 31 errors select another variable's value. The old-value placement
effect therefore persists in Qwen under explicit update semantics, while
the stronger model largely solves this controlled task.

\paragraph{Uncertainty and API provenance.}
Paired intervals use 2,000 source-dialogue bootstrap resamples, and they do not treat reused variant rows as independent observations.
All Qwen requests return \texttt{qwen/qwen-2.5-7b-instruct} from Phala.
GPT-4o requests return \texttt{openai/gpt-4o} through OpenAI or Azure.
Restricting GPT near-source comparisons to pairs routed through the same
provider gives historical, neutral, query-only, and combined accuracy gains
of 17.8, 14.9, 7.8, and 15.3 points ($n=304,308,295,301$), versus
18.8, 15.3, 9.0, and 19.7 points on all 600 pairs. These are descriptive
provider-subset checks, not randomized provider controls. Full counts by
update load, distance cell, variant, and provider accompany the saved
response-level records.

\FloatBarrier

\subsection{Output budget and concise reasoning}
\label{app:reasoning-results}

\paragraph{Design.}
We select 50 existing 80-line overwrite prompts at each
$k\in\{2,4,8\}$, using a fixed content-hash ordering.
Each receives a paired no-overwrite control: only earlier target-variable
names change to fresh names of the same character length. Values, line
positions, the final target assignment, and the query stay fixed. The
confirmation set contains 150 overwrite/control pairs, disjoint from the
10 pairs used to check the pipeline.

Qwen2.5-7B-Instruct is evaluated with the original integer-only instruction
at 16- and 512-token caps, and with a concise-reasoning instruction at the
same 512-token cap: ``Briefly identify the queried variable's most recent
assignment. Use at most one sentence of reasoning, then end with exactly
one line in the form \texttt{FINAL: <integer>}.'' These arms use temperature zero.
Reasoning versus the long direct-answer arm
compares the instruction at an equal output cap, while long versus short direct
answers isolates the cap with the instruction unchanged.

\paragraph{Answer extraction.}
Direct answers must be a complete integer, allowing a terminal period.
Reasoning answers must contain exactly one terminal \texttt{FINAL:}
integer, and numbers elsewhere in the explanation are ignored. During the
disjoint smoke check, five of 20 Qwen reasoning responses put this field
on the same line as the explanation. Before any confirmation calls, we
froze a parser that accepts this whitespace variation while rejecting
multiple fields and trailing text. Original strict-line scores and raw
responses remain archived, and strict-format sensitivity is reported
separately. API failures and truncations are distinguished from incorrect
completed answers. When a wrong value belongs both to an old target
assignment and another variable, we retain an ambiguous error category
rather than assigning it uniquely to old-value selection.

All Qwen confirmation responses completed without API errors, truncation, or
final-answer extraction failures. Qwen responses returned the model identifier
\texttt{qwen/qwen-2.5-7b-instruct} through OpenRouter's Phala provider.

Increasing Qwen's direct-answer budget leaves overwrite accuracy almost
unchanged (87/150 correct at 16 tokens; 86/150 at 512 tokens).
The concise-reasoning instruction raises it to 111/150, a paired gain over
the budget-matched direct arm of 16.7 percentage points
(95\% paired-bootstrap interval $[9.3,24.0]$).
The gains at $k=2,4,8$ are respectively 16, 20, and 14 points, with intervals
$[6,28]$, $[8,32]$, and $[0,28]$.
All 39 remaining Qwen reasoning errors on overwrite prompts return an old
value of the queried variable; at $k=8$, accuracy is 52\% versus 98\% on
paired no-overwrite controls.

Requiring the final field to occupy its own line would reduce Qwen reasoning
scores to 63/150 on overwrite prompts and 123/150 on controls, compared with
111/150 and 147/150 under the delimiter-based scorer frozen after smoke.
This sensitivity concerns line placement: the final integer is unchanged.

\paragraph{High-load operational-identifier updates.}
\label{app:cicm-highload}
A separate frozen confirmation set contains 36 fresh scenarios across six
domains (meeting rooms, delivery routes, storage bins, service desks, pickup
lockers, and document trays) and scalar or partial-record updates. Fixed
templates render 384 user--assistant pairs. The target receives $k$ distinct
old identifiers before its current value, and a later mention explicitly marks
an old identifier as obsolete. No-overwrite controls replace earlier target
writes with neutral text, while cross-record controls redirect them to another
record and field. Corresponding messages match whitespace-word counts,
although token counts can differ. The event ledger supplies current and old
answer labels. Before confirmation we froze 36 bases, five
loads $k\in\{4,16,64,128,256\}$, and scoring rules. Each setting comprises
540 condition rows and 396 unique prompts because identical controls share
responses. Table~\ref{tab:highload-compact} presents the highest-load endpoint, and the full grid and paired analyses remain in the experiment archive.

Qwen and GPT-4o use temperature zero with direct-32, direct-512, or concise
reasoning-512 outputs. Concise reasoning requests at most one explanatory
sentence and a terminal \texttt{FINAL:} identifier. GPT-5 and Gemini 3.1 Pro
Preview use medium reasoning and 8,192-token caps, while DeepSeek V4 Pro 0813 uses
high reasoning and 32,768 tokens. These API settings do not equate reasoning
computation. Direct answers score the normalized full reply, while reasoning arms score the terminal identifier. Protocol failures and truncations are recorded
separately. Fourteen Qwen rate-limited requests received one additional attempt
under a frozen recovery protocol. All completed, and no task answer was retried.
Exact request/model/provider metadata and all attempts are archived.

\begin{table}[htbp]
\centering\small
\setlength{\tabcolsep}{3pt}
\caption{High-load updates ($k=256$): 36 frozen scenarios per setting.
Controls report correct answers; overwrite columns distinguish current and old values.}
\label{tab:highload-compact}
\begin{tabular}{@{}llrrrr@{}}
\toprule
Model & Output, token cap & Current & Old & No overwrite & Cross-record \\
\midrule
Qwen2.5-7B & Direct, 32 & 0/36 & 36/36 & 34/36 & 36/36 \\
Qwen2.5-7B & Direct, 512 & 0/36 & 36/36 & 34/36 & 36/36 \\
Qwen2.5-7B & Concise, 512 & 0/36 & 32/36 & 36/36 & 33/36 \\
GPT-4o & Direct, 32 & 3/36 & 31/36 & 36/36 & 36/36 \\
GPT-4o & Direct, 512 & 3/36 & 32/36 & 36/36 & 36/36 \\
GPT-4o & Concise, 512 & 4/36 & 32/36 & 36/36 & 36/36 \\
GPT-5 & Reasoning, 8,192 & 36/36 & 0/36 & 36/36 & 36/36 \\
Gemini 3.1 Pro & Reasoning, 8,192 & 36/36 & 0/36 & 36/36 & 36/36 \\
DeepSeek V4 Pro & Reasoning, 32,768 & 34/36 & 2/36 & 35/36 & 35/36 \\
\bottomrule
\end{tabular}
\par\smallskip
\parbox{\linewidth}{\footnotesize All requested prompts remain in the denominator.
At this load, every overwrite response completes; remaining overwrite outcomes
are other answers (Qwen concise: 4; GPT-4o direct-32: 2; direct-512: 1).
DeepSeek's no-overwrite control has one unsuccessful provider finish;
all other control errors are other answers.}
\end{table}

At $k=256$, Qwen and GPT-4o each return 32/36 old values under concise reasoning.
GPT-4o solves every paired control in all three output settings. GPT-5 and
Gemini solve all tested loads through $k=256$, while DeepSeek returns two old values
at this endpoint. These results distinguish resistance to repeated explicit
updates from performance on the dependent histories in
Appendix~\ref{app:frontier-state-logs}.

\FloatBarrier
\subsection{Task and language diversity}
\label{app:cicm-diversity}

This experiment tests whether old-value errors extend beyond the original
preference templates. We evaluate 180 state ledgers spanning six domains and
three update operations: single-value replacement, partial-record updates,
and full-record replacement. Each ledger has four paired versions crossing
template or model-generated language with near or far placement of the same
explicitly obsolete mention. Within each placement pair, value-mention counts
and the current assignment's position are fixed. The query explicitly asks
for the state after all updates. Appendix~\ref{app:cicm-dialogue-data} gives
the construction, generation models, and validation procedure.

\paragraph{Evaluation.}
Qwen2.5-7B-Instruct, Llama-3.1-8B-Instruct, and GPT-4o receive all 720 prompts
through OpenRouter, with temperature zero and a 16-token output cap. All
2,160 responses complete without API errors or truncation. A frozen matcher
compares the complete normalized answer with the program-derived state and
separately identifies old target values and values of other fields.
Intervals use 2,000 bootstrap resamples of whole ledgers within domain,
update-load, and task strata, preserving the four paired versions.

\begin{table}[htbp]
\centering
\small
\setlength{\tabcolsep}{5pt}
\caption{Old-value responses on generated CICM dialogues. Each placement
contains the same 180 ledgers. The paired difference is near minus far in
percentage points; brackets give 95\% bootstrap intervals.}
\label{tab:cicm-diversity}
\begin{tabular}{@{}lrrr@{}}
\toprule
Model & Far & Near & Difference [95\%] \\
\midrule
Qwen2.5-7B & 39/180 (21.7\%) & 50/180 (27.8\%) & $+6.1\;[+1.1,+11.1]$ \\
Llama-3.1-8B & 7/180 (3.9\%) & 5/180 (2.8\%) & $-1.1\;[-3.9,+1.7]$ \\
GPT-4o & 0/180 (0.0\%) & 0/180 (0.0\%) & $0.0$ \\
\bottomrule
\end{tabular}
\end{table}

\paragraph{Core finding.}
Qwen returns old values across all six domains and all three operations.
On generated language, moving the obsolete mention nearer the question
increases its old-value rate by 6.1 percentage points
(Table~\ref{tab:cicm-diversity}). The increase comes from record updates:
old-value counts rise from 14 to 20 for partial records and from 11 to 19
for full replacement, while single-value counts fall from 14 to 11
(60 ledgers per task). Template language gives a similar pooled increase
of 7.2 points ($[3.9,11.1]$). Llama makes fewer errors and shows no pooled increase, and GPT-4o answers all 720 prompts correctly. Thus old-value selection
extends beyond the original preference templates, while sensitivity to
mention placement varies across models and update operations.

\paragraph{Scoring and provider checks.}
Accepting one Qwen field-prefixed answer changes its generated-language
difference from 6.1 to 6.7 points. Accepting two Llama spelling variants
raises its four-condition accuracy from 697/720 to 699/720 without changing
old-value counts. Primary scores retain the frozen matcher. Qwen and GPT-4o
have equal reported input-token counts within every placement pair. Llama
uses several routed providers, and restricting generated-language pairs to the
76 sharing a provider still yields no positive placement effect
($-3.9$ points). These are descriptive sensitivity checks. Raw outputs,
provider and token metadata, and task/domain breakdowns are retained in the
reproducibility artifacts.

\FloatBarrier
\subsection{Updating constraints before making a decision}
\label{app:decision-transfer}

\paragraph{Task structure.}
We extend evaluation from current-field answers to decisions that depend on
several updated constraints. Two constructed task families use executable
ledgers and fixed language templates. In procurement dialogues, the assistant
maintains requirements for six projects and books a venue, caterer, and shuttle.
The components must agree on site and satisfy current capacity, accessibility,
dietary, walking-distance, rating, and budget constraints. The objective is
minimum total spend, with an explicit tie rule. A simpler query books only a
venue from the same state and catalogue.

In scheduling logs, an agent arranges two sessions under current durations,
attendance, equipment, room availability, shared-room, and break requirements.
It minimizes the second session's finish time, then room cost and explicit
secondary criteria. The simpler query schedules only the first session.
The log includes successful cached planning proposals tied to the directives
that produced them, and these are neither executed bookings nor new requirements.
A changed constraint can invalidate a proposal without replacing its text.
These are constructed dialogues and agent logs, not sampled deployment traces. Family and presentation covary, so their difference is not a surface-only test.

\paragraph{Scope and withdrawals.}
Each project inherits shared defaults until it has an active local override.
Local overrides take precedence over later shared defaults. Within a scope,
the latest active assignment to a field wins, and unmentioned fields persist.
A withdrawal deactivates only its named earlier assignment, including every
field that assignment changed. Remaining active directives determine the state, and withdrawals are never themselves withdrawn. These rules are stated explicitly
in each task. A ledger interpreter supplies the resolved state, and independent reverse-lookup replay and enumeration check the optimal actions.

\paragraph{Context and task complexity.}
Each scenario has a 24-directive history and a paired 96-directive history that
inserts 72 updates to other projects. The target's relevant directives, opaque
references, current state, and catalogue remain fixed. Each history supports
both a single-action and a composed-action query. Each query has three input
conditions: full history, an explicit final target-state snapshot, and that
snapshot padded to the history's whitespace-word count. The snapshots still
require solving the decision problem and never supply its answer. Padding consists of repeated instances of the word ``note'', not natural
distractor dialogue. It is therefore an artificial word-count control whose
response pathologies must be distinguished from historical interference.
Actual token counts are recorded rather than assumed equal. The longer-history contrast
therefore changes contextual interference, not the number of target updates.

The development cohort contains six scenarios. A separate fixed seed generates
24 evaluation scenarios, 12 per family, with no development prompt overlap.
There are 12 condition rows per scenario and 288 per model. Scenarios are
constructed to have a feasible unique optimum under the stated tie rule and
at least two distinct superseded optima during the relevant update history.
This programmatic enrichment precedes model answers. All scenarios are retained,
including models' successful cases. Identical snapshot requests may share a
cached response when their scoring contracts also agree, and request identities track reuse and separate repeated calls. They are not additional independent
scenarios.

\paragraph{Generation and scoring.}
GPT-5, Gemini 3.1 Pro Preview, DeepSeek V4 Pro 0813, and Claude Sonnet 5 use
high reasoning effort. DeepSeek has a 32,768-token completion cap and the other
models have 16,384, and these settings do not equate computation across families.
The final answer specifies component identifiers or room/time pairs, composing
an action rather than selecting a prewritten complete answer. Transport failures
receive at most two retries, and answer outcomes are never selectively retried.

A development-only formatting check shows that some models surround a correct
identifier with a short booking phrase. Before evaluation, we freeze a
conservative terminal-action parser that accepts identifiers, role labels,
punctuation, and a booking phrase naming the correct project. It rejects
negation, alternatives, conflicting projects, excess identifiers or times,
and missing terminal answers. All original strict-format scores are retained.
Refused and truncated responses remain separate from semantic errors.

Primary accuracy is the probability of returning the current optimal action
per requested prompt. We also report satisfaction of all current hard
constraints, feasible but nonoptimal decisions, and matches to a superseded
or another project's optimum. A historical-optimum match is consistent with obsolete constraints, but it does not establish which information caused the
answer or transfer the open-model circuit diagnosis. The primary paired
comparison is composed-task accuracy under long history minus its padded
snapshot control. Supporting comparisons vary task complexity and contextual
load. Intervals use 2,000 bootstrap resamples of whole scenarios within family,
retaining all 12 conditions together.

\paragraph{Decisions remain largely correct under full history.}
Tables~\ref{tab:decision-procurement}--\ref{tab:decision-paired} give all task,
context, and control cells. Under full history, optimal answers per requested
prompt are GPT-5 92/96, Gemini 95/96, DeepSeek 92/96, Sonnet 88/96. Every action accepted by the frozen
parser in these history conditions is optimal, and the remaining outcomes are
invalid terminal outputs, provider failures, or truncations, rather than
interpretable nonoptimal actions. Four GPT-5 history answers identify the
correct venue but add an unrequested date or wording outside the frozen
grammar. Their primary scores remain invalid-output outcomes, and they are
not evidence of obsolete-state use.

Gemini's three interpretable decision errors all occur in the long, composed
procurement task with the padded current-state control: two actions violate
current constraints and one is feasible but nonoptimal. One infeasible action
matches a historical optimum even though that input contains no history.
Such a match alone therefore cannot diagnose use of an obsolete state
in a composed-action task. The repeated-word control also produces invalid
outputs and truncations. Positive history-minus-padding differences cannot
be interpreted as a benefit of historical interference. These tasks extend
the tested action structure, but the observed history conditions do not show
a semantic decision-failure effect in the reasoning models. The compact and
padded controls, complete failure classes, and small number of independent
scenarios delimit this result.

Requested model
identifiers are \texttt{openai/gpt-5}, \texttt{google/gemini-3.1-pro-preview},
\texttt{deepseek/deepseek-v4-pro-0813}, and \texttt{anthropic/claude-sonnet-5}.
Returned identifiers, providers, token usage and every attempt are retained.

\begin{table}[htbp]
\centering
\small
\caption{Procurement dialogues: all 12 scenarios per cell. H denotes full history, S an explicit final-state snapshot, and P its word-count-matched padded version. Optimality counts successes per requested prompt. Unavailable includes invalid terminal output, provider failure/refusal, and truncation; it is separate from an infeasible or nonoptimal action. All interpretable history actions are optimal.}
\label{tab:decision-procurement}
\begin{tabular}{@{}llcc@{}}
\toprule
Context & Task & Optimal H / S / P & Unavailable H / S / P \\
\midrule
\multicolumn{4}{@{}l}{\textit{GPT-5}} \\
24 directives & Single & 11 / 12 / 12 & 1 / 0 / 0 \\
24 directives & Composed & 12 / 12 / 12 & 0 / 0 / 0 \\
96 directives & Single & 9 / 12 / 12 & 3 / 0 / 0 \\
96 directives & Composed & 12 / 12 / 12 & 0 / 0 / 0 \\
\addlinespace
\multicolumn{4}{@{}l}{\textit{Gemini 3.1 Pro Preview}} \\
24 directives & Single & 12 / 12 / 12 & 0 / 0 / 0 \\
24 directives & Composed & 11 / 12 / 12 & 1 / 0 / 0 \\
96 directives & Single & 12 / 12 / 1 & 0 / 0 / 11 \\
96 directives & Composed & 12 / 12 / 1 & 0 / 0 / 8 \\
\addlinespace
\multicolumn{4}{@{}l}{\textit{DeepSeek V4 Pro 0813}} \\
24 directives & Single & 12 / 12 / 12 & 0 / 0 / 0 \\
24 directives & Composed & 12 / 12 / 12 & 0 / 0 / 0 \\
96 directives & Single & 11 / 12 / 12 & 1 / 0 / 0 \\
96 directives & Composed & 12 / 12 / 12 & 0 / 0 / 0 \\
\addlinespace
\multicolumn{4}{@{}l}{\textit{Claude Sonnet 5}} \\
24 directives & Single & 10 / 12 / 12 & 2 / 0 / 0 \\
24 directives & Composed & 10 / 12 / 12 & 2 / 0 / 0 \\
96 directives & Single & 10 / 12 / 11 & 2 / 0 / 1 \\
96 directives & Composed & 12 / 12 / 7 & 0 / 0 / 5 \\
\addlinespace
\bottomrule
\end{tabular}
\end{table}

\begin{table}[htbp]
\centering
\small
\caption{Scheduling agent logs: all 12 scenarios per cell. H denotes full history, S an explicit final-state snapshot, and P its word-count-matched padded version. Optimality counts successes per requested prompt. Unavailable includes invalid terminal output, provider failure/refusal, and truncation; it is separate from an infeasible or nonoptimal action. All interpretable history actions are optimal.}
\label{tab:decision-scheduling}
\begin{tabular}{@{}llcc@{}}
\toprule
Context & Task & Optimal H / S / P & Unavailable H / S / P \\
\midrule
\multicolumn{4}{@{}l}{\textit{GPT-5}} \\
24 directives & Single & 12 / 12 / 12 & 0 / 0 / 0 \\
24 directives & Composed & 12 / 12 / 12 & 0 / 0 / 0 \\
96 directives & Single & 12 / 12 / 12 & 0 / 0 / 0 \\
96 directives & Composed & 12 / 12 / 12 & 0 / 0 / 0 \\
\addlinespace
\multicolumn{4}{@{}l}{\textit{Gemini 3.1 Pro Preview}} \\
24 directives & Single & 12 / 12 / 12 & 0 / 0 / 0 \\
24 directives & Composed & 12 / 12 / 12 & 0 / 0 / 0 \\
96 directives & Single & 12 / 12 / 11 & 0 / 0 / 1 \\
96 directives & Composed & 12 / 12 / 11 & 0 / 0 / 1 \\
\addlinespace
\multicolumn{4}{@{}l}{\textit{DeepSeek V4 Pro 0813}} \\
24 directives & Single & 11 / 12 / 12 & 1 / 0 / 0 \\
24 directives & Composed & 10 / 11 / 12 & 2 / 1 / 0 \\
96 directives & Single & 12 / 12 / 12 & 0 / 0 / 0 \\
96 directives & Composed & 12 / 12 / 12 & 0 / 0 / 0 \\
\addlinespace
\multicolumn{4}{@{}l}{\textit{Claude Sonnet 5}} \\
24 directives & Single & 11 / 12 / 12 & 1 / 0 / 0 \\
24 directives & Composed & 12 / 12 / 12 & 0 / 0 / 0 \\
96 directives & Single & 11 / 12 / 12 & 1 / 0 / 0 \\
96 directives & Composed & 12 / 12 / 12 & 0 / 0 / 0 \\
\addlinespace
\bottomrule
\end{tabular}
\end{table}

\begin{table}[htbp]
\centering
\small
\caption{Paired optimal-action differences on 24 scenarios (percentage points, 95\% whole-scenario bootstrap intervals). H, S, and P are defined above. The first two columns use the 96-directive composed task; the final column subtracts the single-task H$-$P difference from the composed-task H$-$P difference at that length. All-requested denominators retain no-answer outcomes.}
\label{tab:decision-paired}
\begin{tabular}{@{}lccc@{}}
\toprule
Model & H $-$ P & H $-$ S & Change with composition \\
\midrule
GPT-5 & 0.0 [0.0, 0.0] & 0.0 [0.0, 0.0] & 12.5 [0.0, 25.0] \\
Gemini 3.1 Pro Preview & 50.0 [37.5, 62.5] & 0.0 [0.0, 0.0] & 0.0 [-16.7, 16.7] \\
DeepSeek V4 Pro 0813 & 0.0 [0.0, 0.0] & 0.0 [0.0, 0.0] & 4.2 [0.0, 12.5] \\
Claude Sonnet 5 & 20.8 [8.3, 33.3] & 0.0 [0.0, 0.0] & 29.2 [12.5, 50.0] \\
\bottomrule
\end{tabular}
\end{table}

\paragraph{Exploratory extensions with explicit dependencies.}
Two additional task-development pilots test structures absent from the initial
families. A coding-agent pilot has 16 dependent tools, 18 versioned files, and
two branches. A successful cached result is reusable only if its captured file
versions and every recursively captured input artifact remain valid for the
target branch. Late completion does not refresh captured inputs, and a newer
stale result does not erase an older valid cache. The task requests the minimum
set of tools to rerun. Six constructed scenarios have both full-history and
current-files-plus-cache-input conditions, and the latter supplies neither freshness
labels nor the execution plan. With high reasoning and a 16,384-token cap,
GPT-5 and Gemini each recover all six minimum plans in both conditions. GPT-5
sometimes omits the ``T'' prefix from tool identifiers. An explicitly post-hoc
equivalence check maps unique bare integers to the fixed tool list, and the original strict scores are retained, and these formatting differences are not missing
steps.

A subsequent procurement pilot makes decisions jointly dependent through five
supplier roles, ten incompatible pairs, and 30 stackable conditional credits.
The objective is maximum total quality under the updated global budget, then
minimum net cost and a stated identifier tie rule. Three scenarios are selected
programmatically before answers: each has at least 15 feasible plans, its
quality optimum differs from its cheapest plan, and credits enable strictly
higher quality than any no-credit plan. Each has single-venue and composed-plan
queries, with full history and explicit current-state controls. GPT-5 returns
11/12 optimal answers, and the remaining composed-history request reaches its
16,384-token cap, while its paired snapshot is correct. Gemini returns 12/12
optimal answers. Every completed plan is optimal. These adaptive exploratory
pilots identify task structures and resource demands, but they do not establish a
reliable frontier-model old-state failure. Their compact controls are not
length-matched, and the small samples do not estimate deployment prevalence.

\FloatBarrier
\subsection{Frontier state reconstruction on complex operational histories}
\label{app:frontier-state-logs}

\paragraph{From overwrites to dependent state updates.}
We test frontier reasoning on current-state reconstruction through long
sequences of dependent operations. Constructed warehouse-dispatch and build-release logs contain
16 slots holding immutable parcel or artifact identifiers. A handoff atomically
cycles the \emph{current} contents of two, three, or four slots, and the result of one operation becomes the input to later operations. Only committed production
operations affect the queried state, while sandbox and aborted operations do not.
The prompt supplies these rules, the initial state, all 2,048 operations in
execution order, and a query for the single identifier to dispatch from one
slot. There are no hidden updates. This uses permutation composition, an
established state-tracking model system \citep{li2025statetracking}, in two
operational settings sharing the same transition rules.

\paragraph{Protocol.}
Task development used four pilot scenarios. Before collecting confirmation
answers, we froze 40 fresh scenarios, each
paired with a programmatically resolved current-state snapshot. All scenarios
are included. Forward replay, inverse ancestry, and an independent parser of
the rendered log verify the answers. We evaluate the model directly through
the API, without executable tools, external memory, or a context-management
harness. The snapshot is a separate solvability control: it supplies the final
slot contents and removes the need to reconstruct them from history. It is
not matched for length or computation, and does not evaluate a learned or
deployed harness. Claude models were evaluated later on the identical frozen
prompts through the native Anthropic API (official SDK, adaptive thinking,
128,000-token output cap, no tools).

\paragraph{Frontier reasoning models fail on full histories.}
GPT-5.6 Sol with high reasoning answers 9/40 full-history queries correctly,
compared with 40/40 paired snapshots (Table~\ref{tab:frontier-state-logs}).
All 80 requests yield completed semantic answers, and none is truncated, refused,
invalid, or lost to a provider error. Claude Opus 4.8 
answers 18/40 correctly: 13 answers name an old occupant of the queried slot
and 9 responses stop at the 128,000-token output cap without an answer. Among completed answers, accuracy is 18/31 (58.1\%, Wilson [40.8, 73.6]\%). Its 40
paired snapshots are all correct. Claude Sonnet~5 stops at the output cap on
every pilot history  (0/4 completed, snapshots 4/4), and its summarized reasoning describes a forward simulation of all sixteen
slots that reaches only part of the log, so its outcome is an output-availability
result, not a semantic one, and no confirmation was run. These are failures to
identify the current operational state despite having the complete history
and explicit transition rules. The contrast shows that using a supplied current
state is substantially easier than reconstructing it through these updates.

\begin{table}[htbp]
\centering
\small
\caption{Current-state accuracy on complex operational histories. The primary
panels use the same 40 fresh paired scenarios, and brackets give Wilson 95\%
intervals over all requests. Truncation at the output cap counts as no answer.
The exploratory panel reuses four separate pilot scenarios and is reported as
counts. Sol and DeepSeek use high reasoning.}
\label{tab:frontier-state-logs}
\begin{tabularx}{\linewidth}{@{}Xrr@{}}
\toprule
Model and cohort & Full history & Current-state snapshot \\
\midrule
GPT-5.6 Sol, confirmation & 9/40 & 40/40 \\
Accuracy [95\% interval] & 22.5\% [12.3, 37.5] & 100\% [91.2, 100] \\
\addlinespace
Claude Opus 4.8, confirmation & 18/40 (9 truncated) & 40/40 \\
Accuracy [95\% interval] & 45.0\% [30.7, 60.2] & 100\% [91.2, 100] \\
Completed-answer accuracy & 18/31 (58.1\%) & 40/40 (100\%) \\
\addlinespace
GPT-5.6 Sol, exploratory & 0/4 & 4/4 \\
DeepSeek V4 Pro 0813, exploratory & 2/4 & 4/4 \\
Claude Opus 4.8, exploratory & 0/4 & 4/4 \\
Claude Sonnet 5, exploratory & 0/4 (4 truncated) & 4/4 \\
\bottomrule
\end{tabularx}
\end{table}

\paragraph{Dependency-aware logs.}
To separate cross-line dependency from raw length, a second tier keeps the
same state machine and surfaces but logs some handoffs as \texttt{pending}:
such an operation executes only when a later line
\texttt{resolve(seq=N); result=committed} refers to it, on the slot contents
current at that line, while \texttt{result=aborted} or no resolution cancels it.
Each 1,024-line scenario has three conditions: the deferred history, a
linearized control with the identical effective operation sequence at the
identical line numbers but every effect stated on its own line, and the
resolved snapshot (construction in Appendix~\ref{app:cicm-log-data}).
Opus 4.8  was evaluated on a pre-specified prefix of 17 of
40 frozen scenarios (Table~\ref{tab:deferred-logs}): 12/17 deferred histories
are correct against 16/17 linearized controls and 17/17 snapshots, with no truncation, and five scenarios fail only in the deferred form and one only in the
linearized form (exact McNemar $p=0.22$). Four separate development scenarios
give 2/4, 4/4, and 4/4. Every error names an old occupant of the queried slot.
The remaining 23 frozen scenarios were not requested, and this exploratory comparison reports the complete pre-specified prefix.

\begin{table}[htbp]
\centering
\small
\caption{Dependency-aware logs: Claude Opus 4.8 on 1,024-line
scenarios with pending operations resolved by later lines. The linearized
control has the same effective operations at the same line numbers without
cross-line dependency, and brackets give Wilson 95\% intervals.}
\label{tab:deferred-logs}
\begin{tabularx}{\linewidth}{@{}Xrrr@{}}
\toprule
Cohort & Deferred history & Linearized control & Snapshot \\
\midrule
Confirmation prefix (17 scenarios) & 12/17 & 16/17 & 17/17 \\
Accuracy [95\% interval] & 70.6\% [46.9, 86.7] & 94.1\% [73.0, 99.0] & 100\% [81.6, 100] \\
\addlinespace
Development (4 scenarios) & 2/4 & 4/4 & 4/4 \\
\bottomrule
\end{tabularx}
\end{table}

\paragraph{Settings and supporting checks.}
 Sol uses OpenRouter model
\texttt{openai/gpt-5.6-sol}, fixed provider OpenAI, high reasoning, and a
128,000-token output cap. Its longest confirmation completion is 34,918 tokens, and maximum input length is 51,836 tokens. Claude models use the native Anthropic
API with returned identifiers \texttt{claude-opus-4-8} and
\texttt{claude-sonnet-5}, adaptive thinking with summarized display, a 128,000-token output
cap, and no sampling parameters, and history prompts are 67,181--67,359 Anthropic
tokens. Completed Opus answers use 35,631--103,140 output tokens (median
43,778), and all 31 state a backward trace of the queried slot, while six of the nine truncated responses instead begin a full forward simulation of all sixteen
slots. Thinking summaries are stored but never scored. The exploratory DeepSeek panel uses
\texttt{deepseek/deepseek-v4-pro-0813}, fixed provider Wafer, high reasoning,
temperature zero, and a 262,144-token cap. Sol and DeepSeek pilot answers complete normally. On those four pilot
scenarios, Sol and DeepSeek each answer 4/4 correctly
when every commit is cancelled and 4/4 when an accurate checkpoint is inserted
before the final 512 operations. For Sol, cancelling commits preserves input
bytes and token counts, and the checkpoint retains the earlier log and adds 147
tokens. Sol also answers 4/4 when irrelevant operations are removed and 4/4
when they are cancelled while preserving the complete necessary update chain.
These are exploratory diagnostics on reused cases. A corresponding 40-scenario
relevance study is incomplete: 8/40 compact and 10/40 masked requests returned,
all correctly, before service-credit exhaustion. The other 62 outcomes are
unavailable and contribute no accuracy estimate, and no completed recovery result
is included here.

\paragraph{Auditable failure examples.}
Figures~\ref{fig:frontier-log-warehouse} and~\ref{fig:frontier-log-build}
show Sol's first failed confirmation scenario by index in each setting, and Figures~\ref{fig:claude-log-build-agent-0019}
and~\ref{fig:claude-log-warehouse-0002} show Opus 4.8's first failed
build-release scenario on the 2,048-operation tier and its first failed
warehouse scenario on the dependency-aware tier, selected by the same rule. Input
and answer excerpts are verbatim, apart from line wrapping and explicitly
marked omissions. The model received all 16 initial entries and the complete log: 2,048
operations in the primary tier or 1,024 lines in the deferred tier.
Green boxes contain post-hoc programmatic checks, which were not supplied in
the full-history condition. The violet boxes reproduce visible answers, not
hidden reasoning traces. Full prompts, responses, model/provider metadata,
and input hashes are retained with the reproducibility artifacts.

\begin{figure}[htbp]
\centering
\tcbset{breakable=false}
\begin{tcolorbox}[enhanced,breakable=false,colback=neutralfill,colframe=neutralink,boxrule=0.7pt,arc=1mm,fonttitle=\bfseries\small,title={Warehouse dispatch --- handoff\_0000}]
\tcbset{breakable=false}
\small
\textbf{Task rules (summary).} Move current slot contents simultaneously
around each listed cycle. Only committed production operations change state.
\medskip\par
\textbf{Initial production slots (excerpt).}\par
\texttt{S08: P2070}\par
\texttt{S11: P6443}\par
\emph{[Other initial entries and earlier log entries omitted.]}\par
\medskip
\textbf{Execution log (excerpt).}\par
\texttt{2037 handoff(cycle=[S01, S09, S08, S15]);}\par
\texttt{branch=sandbox; status=committed}\par
\emph{[Intervening entries omitted.]}\par
\texttt{2041 handoff(cycle=[S11, S01, S07, S12]);}\par
\texttt{branch=production; status=committed}\par
\texttt{2042 handoff(cycle=[S10, S09, S13, S01]);}\par
\texttt{branch=production; status=committed}\par
\emph{[Entries 2043--2048 omitted.]}\par
\medskip
\textbf{Query (verbatim).} The next dispatch must use the item currently in S01. Which P identifier should the agent dispatch? Reason as needed. End with exactly one line in the form FINAL: <one requested P identifier>.
\end{tcolorbox}
\begin{tcolorbox}[enhanced,breakable=false,colback=stalefill,colframe=staleink,boxrule=0.8pt,arc=1mm,fonttitle=\bfseries\small,title={Sol answer to the complete history (verbatim)}]
\tcbset{breakable=false}
\small
Tracing S01 backward through committed production cycles leads to initial slot S11, which held P6443.\par
FINAL: P6443
\end{tcolorbox}
\begin{tcolorbox}[enhanced,breakable=false,colback=currentfill,colframe=currentink,boxrule=0.8pt,arc=1mm,fonttitle=\bfseries\small,title={Verified current state (post-hoc annotation)}]
\tcbset{breakable=false}
\small
Immediately before operation 2042, \texttt{S13} contains \texttt{P2070}. That committed production handoff moves it to \texttt{S01}; no later committed production operation changes this slot. Full inverse replay traces \texttt{S01} to initial \texttt{S08}, not \texttt{S11}. The correct dispatch is \textbf{\texttt{P2070}}.
\end{tcolorbox}
\caption{Warehouse dispatch failure with high reasoning. The model receives the
complete history and returns an incorrect dispatch identifier. Selected log
entries and the verified final transition explain the task; omitted entries
remain part of the evaluated input. The paired snapshot answer is correct.}
\label{fig:frontier-log-warehouse}
\end{figure}

\begin{figure}[htbp]
\centering
\tcbset{breakable=false}
\begin{tcolorbox}[enhanced,breakable=false,colback=neutralfill,colframe=neutralink,boxrule=0.7pt,arc=1mm,fonttitle=\bfseries\small,title={Build-release agent --- handoff\_0001}]
\tcbset{breakable=false}
\small
\textbf{Task rules (summary).} Move current slot contents simultaneously
around each listed cycle. Only committed production operations change state.
\medskip\par
\textbf{Initial production slots (excerpt).}\par
\texttt{S01: P1740}\par
\texttt{S03: P6022}\par
\emph{[Other initial entries and earlier log entries omitted.]}\par
\medskip
\textbf{Execution log (excerpt).}\par
\texttt{2046 handoff(cycle=[S09, S11, S15]);}\par
\texttt{branch=production; status=committed}\par
\texttt{2047 handoff(cycle=[S07, S16, S06]);}\par
\texttt{branch=production; status=committed}\par
\texttt{2048 handoff(cycle=[S04, S11, S06]);}\par
\texttt{branch=sandbox; status=committed}\par
\medskip
\textbf{Query (verbatim).} The next dispatch must use the item currently in S11. Which P identifier should the agent dispatch? Reason as needed. End with exactly one line in the form FINAL: <one requested P identifier>.
\end{tcolorbox}
\begin{tcolorbox}[enhanced,breakable=false,colback=stalefill,colframe=staleink,boxrule=0.8pt,arc=1mm,fonttitle=\bfseries\small,title={Sol answer to the complete history (verbatim)}]
\tcbset{breakable=false}
\small
Reverse-tracing S11 through only committed production cycles leads to initial slot S03, containing P6022.\par
FINAL: P6022
\end{tcolorbox}
\begin{tcolorbox}[enhanced,breakable=false,colback=currentfill,colframe=currentink,boxrule=0.8pt,arc=1mm,fonttitle=\bfseries\small,title={Verified current state (post-hoc annotation)}]
\tcbset{breakable=false}
\small
Immediately before operation 2046, \texttt{S09} contains \texttt{P1740}. That committed production handoff moves it to \texttt{S11}; no later committed production operation changes this slot. Full inverse replay traces \texttt{S11} to initial \texttt{S01}, not \texttt{S03}. The correct dispatch is \textbf{\texttt{P1740}}. Operation 2048 is a sandbox operation and leaves production unchanged.
\end{tcolorbox}
\caption{Build-release agent failure with high reasoning. The model receives the
complete history and returns an incorrect dispatch identifier. Selected log
entries and the verified final transition explain the task; omitted entries
remain part of the evaluated input. The paired snapshot answer is correct.}
\label{fig:frontier-log-build}
\end{figure}

\begin{figure}[htbp]
\centering
\tcbset{breakable=false}
\begin{tcolorbox}[enhanced,breakable=false,colback=neutralfill,colframe=neutralink,boxrule=0.7pt,arc=1mm,fonttitle=\bfseries\small,title={Build-release agent --- handoff\_0019}]
\tcbset{breakable=false}
\small
\textbf{Task rules (summary).} Move current slot contents simultaneously
around each listed cycle. Only committed production operations change state.
\medskip\par
\textbf{Initial production slots (excerpt).}\par
\texttt{S11: P4052}\par
\texttt{S16: P7085}\par
\emph{[Other initial entries and earlier log entries omitted.]}\par
\medskip
\textbf{Execution log (excerpt).}\par
\texttt{0514 handoff(cycle=[S06, S11, S14, S08]);}\par
\texttt{branch=production; status=committed}\par
\emph{[Intervening entries omitted.]}\par
\texttt{2044 handoff(cycle=[S05, S01, S15, S12]);}\par
\texttt{branch=production; status=committed}\par
\texttt{2045 handoff(cycle=[S11, S14, S04, S15]);}\par
\texttt{branch=production; status=committed}\par
\texttt{2046 handoff(cycle=[S04, S06, S12, S10]);}\par
\texttt{branch=production; status=aborted}\par
\texttt{2047 handoff(cycle=[S01, S03, S06, S13]);}\par
\texttt{branch=production; status=committed}\par
\texttt{2048 handoff(cycle=[S14, S09]);}\par
\texttt{branch=production; status=committed}\par
\medskip
\textbf{Query (verbatim).} The next dispatch must use the item currently in S01. Which P identifier should the agent dispatch? Reason as needed. End with exactly one line in the form FINAL: <one requested P identifier>.
\end{tcolorbox}
\begin{tcolorbox}[enhanced,breakable=false,colback=stalefill,colframe=staleink,boxrule=0.8pt,arc=1mm,fonttitle=\bfseries\small,title={Opus 4.8 answer to the complete history (final lines, verbatim)}]
\tcbset{breakable=false}
\small
\emph{[3320 characters of the model's stated trace omitted; the full answer is in the raw record.]}\par
After op 0003, target = S16, and no earlier committed production op touches S16.\par
The initial content of S16 is P7085.\par
FINAL: P7085\par
\end{tcolorbox}
\begin{tcolorbox}[enhanced,breakable=false,colback=currentfill,colframe=currentink,boxrule=0.8pt,arc=1mm,fonttitle=\bfseries\small,title={Verified current state (post-hoc annotation)}]
\tcbset{breakable=false}
\small
Immediately before line 2047, \texttt{S13} contains \texttt{P4052}. That committed production handoff moves it to \texttt{S01}; no later committed production operation changes this slot. Full inverse replay traces \texttt{S01} to initial \texttt{S11}, not \texttt{S16}. The correct dispatch is \textbf{\texttt{P4052}}. The answered identifier \texttt{P7085} last left \texttt{S01} at line 2044.
 The model's stated backward trace matches the verified chain for its 210 most recent transitions; at operation 514 (\texttt{handoff(cycle=[S06, S11, S14, S08])}) the tracked slot is \texttt{S08}, whose predecessor in the cycle is \texttt{S14}, but the trace moves to \texttt{S08}.
\end{tcolorbox}
\caption{Build-release agent failure (Opus 4.8, high effort, 40-scenario confirmation). The model receives the complete history and returns an incorrect dispatch identifier. Selected log entries and the verified transitions explain the task; omitted entries remain part of the evaluated input. The paired snapshot answer is correct.}
\label{fig:claude-log-build-agent-0019}
\end{figure}

\begin{figure}[htbp]
\centering
\tcbset{breakable=false}
\begin{tcolorbox}[enhanced,breakable=false,colback=neutralfill,colframe=neutralink,boxrule=0.7pt,arc=1mm,fonttitle=\bfseries\small,title={Warehouse dispatch --- deferred\_0002}]
\tcbset{breakable=false}
\small
\textbf{Task rules (summary).} Move current slot contents simultaneously
around each listed cycle. Only committed production operations change state.
A \texttt{pending} operation executes only when a later \texttt{resolve(seq=N); result=committed} line refers to it, on the contents current at that line.
\medskip\par
\textbf{Initial production slots (excerpt).}\par
\texttt{S02: P7238}\par
\texttt{S07: P7888}\par
\emph{[Other initial entries and earlier log entries omitted.]}\par
\medskip
\textbf{Execution log (excerpt).}\par
\texttt{0876 handoff(cycle=[S08, S09, S14, S12]);}\par
\texttt{branch=production; status=pending}\par
\emph{[Intervening entries omitted.]}\par
\texttt{1004 handoff(cycle=[S07, S01, S15]);}\par
\texttt{branch=production; status=committed}\par
\texttt{1005 handoff(cycle=[S13, S14]);}\par
\texttt{branch=production; status=committed}\par
\texttt{1006 handoff(cycle=[S01, S07]);}\par
\texttt{branch=production; status=pending}\par
\texttt{1007 resolve(seq=0876); result=committed}\par
\texttt{1008 handoff(cycle=[S05, S08, S13]);}\par
\texttt{branch=production; status=pending}\par
\texttt{1009 handoff(cycle=[S16, S08]);}\par
\texttt{branch=production; status=pending}\par
\texttt{1010 handoff(cycle=[S02, S09, S07]);}\par
\texttt{branch=production; status=pending}\par
\emph{[Entries after the excerpt omitted (14).]}\par
\medskip
\textbf{Query (verbatim).} The next dispatch must use the item currently in S14. Which P identifier should the agent dispatch? Reason as needed. End with exactly one line in the form FINAL: <one requested P identifier>.
\end{tcolorbox}
\begin{tcolorbox}[enhanced,breakable=false,colback=stalefill,colframe=staleink,boxrule=0.8pt,arc=1mm,fonttitle=\bfseries\small,title={Opus 4.8 answer to the complete history (final lines, verbatim)}]
\tcbset{breakable=false}
\small
\emph{[477 characters of the model's stated trace omitted; the full answer is in the raw record.]}\par
Checking all subsequent executing operations (lines 0008–1024): none of the committed operations or committed resolutions after that point touch S14 (line 1021's `[S10,S14,S12,S02]` remained pending and was never resolved).\par
Therefore S14 currently holds P7888.\par
FINAL: P7888\par
\end{tcolorbox}
\begin{tcolorbox}[enhanced,breakable=false,colback=currentfill,colframe=currentink,boxrule=0.8pt,arc=1mm,fonttitle=\bfseries\small,title={Verified current state (post-hoc annotation)}]
\tcbset{breakable=false}
\small
Immediately before line 1007, \texttt{S09} contains \texttt{P7238}. Line 1007 resolves the pending production handoff 0876 (\texttt{handoff(cycle=[S08, S09, S14, S12])}) as committed, which executes at that line and moves it to \texttt{S14}; no later executed operation changes this slot. Full inverse replay traces \texttt{S14} to initial \texttt{S02}. The correct dispatch is \textbf{\texttt{P7238}}. The answered identifier \texttt{P7888} last left \texttt{S14} at line 848.
\end{tcolorbox}
\caption{Warehouse dispatch failure (Opus 4.8, high effort, deferred-log tier, 17-scenario prefix). The model receives the complete history and returns an incorrect dispatch identifier. Selected log entries and the verified transitions explain the task; omitted entries remain part of the evaluated input. The paired snapshot answer is correct.}
\label{fig:claude-log-warehouse-0002}
\end{figure}

\end{document}